\def\eqref#1{equation~\ref{#1}}
\def\1{\bm{1}}
\def\vs{{\bm{s}}}
\DeclareMathAlphabet{\mathsfit}{\encodingdefault}{\sfdefault}{m}{sl}
\SetMathAlphabet{\mathsfit}{bold}{\encodingdefault}{\sfdefault}{bx}{n}
\newcommand{\R}{\mathbb{R}}
\newcommand{\KL}{D_{\mathrm{KL}}}
\tikzset{%outer sep=5pt, inner sep=1.0pt,
    rect_sharp/.style={rectangle, draw=black, minimum width=30, minimum height=20, outer sep=2.5pt, inner sep=2.0pt},
    rect/.style={rectangle, rounded corners=2.5pt, draw=black, minimum width=30, minimum height=20, outer sep=2.5pt, inner sep=2.0pt},
	rect2/.style={rectangle, rounded corners=2.5pt, draw=black, minimum width=40, minimum height=20, outer sep=2.5pt, inner sep=2.0pt},
	every picture/.style={line width=0.75pt},
    ncbar angle/.initial=90,
    ncbar/.style={
        to path=(\tikztostart)
        -- ($(\tikztostart)!#1!\pgfkeysvalueof{/tikz/ncbar angle}:(\tikztotarget)$)
        -- ($(\tikztotarget)!($(\tikztostart)!#1!\pgfkeysvalueof{/tikz/ncbar angle}:(\tikztotarget)$)!\pgfkeysvalueof{/tikz/ncbar angle}:(\tikztostart)$)
        -- (\tikztotarget)
    },
    ncbar/.default=0.5cm,
}
\tikzset{square left brace/.style={ncbar=0.15cm}}
\tikzset{square right brace/.style={ncbar=-0.2cm}}
\tikzset{
    seq/.style={rectangle, text centered, minimum height=0.5cm, minimum width=3.5cm, rotate=90, outer sep=2.5pt, inner sep=0pt},
    wideseq/.style={rectangle, text centered, minimum height=0.75cm, minimum width=3.5cm, rotate=90, outer sep=2.5pt, inner sep=0pt},
    seq2/.style={rectangle, text centered, minimum height=0.5cm, minimum width=3.5cm, outer sep=1.0pt, inner sep=0pt},
    wideseq2/.style={rectangle, text centered, minimum height=0.75cm, minimum width=3.5cm, outer sep=1.0pt, inner sep=0pt}
}
\tikzset{
    sqr/.style={rectangle, text centered, minimum height=1.0cm, minimum width=1.0cm, outer sep=0.25cm, inner sep=0pt},
}
\tikzset{
    circ/.style={circle, draw=black, minimum size=25, outer sep=2.5pt, inner sep=0.0pt},
    circ2/.style={circle, minimum width=0.4cm, minimum height=0.4cm, outer sep=2.5pt, inner sep=0pt}
}
\tikzset{
    cross/.style={cross out, draw=black, minimum size=12.5pt, inner sep=0pt, outer sep=0pt, rotate=45},
    cross/.default={1pt}
}
\definecolor{grey}{HTML}{F5F5F5}
\definecolor{darkgrey}{HTML}{888888}
\definecolor{orange}{HTML}{FFF2CC}
\definecolor{brown}{HTML}{D6B656}
\definecolor{lightyellow}{HTML}{fff0d2}
\definecolor{yellow}{HTML}{d0ab49}
\definecolor{lightblue}{HTML}{e6f1f8}
\definecolor{blue}{HTML}{69a7d0}
\definecolor{lightgreen}{HTML}{e9f5e9}
\definecolor{green}{HTML}{7cb959}
\definecolor{lightpurple}{HTML}{f4ebf9}
\definecolor{purple}{HTML}{b793c8}
\definecolor{darkerpurple}{HTML}{9673A6}
\newcommand{\cN}{\mathcal{N}}
\newcommand{\bW}{\mathbf{W}}
\newcommand{\bx}{\mathbf{x}}
\newcommand{\bX}{\mathbf{X}}
\newcommand{\bz}{\mathbf{z}}
\newcommand{\bv}{\mathbf{v}}
\newcommand{\m}{\mathbf{m}}
\newcommand{\bh}{\mathbf{h}}
\newcommand{\bZ}{\mathbf{Z}}
\newcommand{\by}{\mathbf{y}}
\newcommand{\NN}{\mathbb{N}}
\newcommand{\EE}{\mathbb{E}}
\newcommand{\cX}{\mathcal{X}}
\newcommand{\cL}{\mathcal{L}}
\newcommand{\cH}{\mathcal{H}}
\newcommand{\cY}{\mathcal{Y}}
\renewcommand\given[1][]{\:#1\vert\:}
\newcommand{\T}[1]{\mathrm{T}({#1})}
\newcommand{\Tra}[2]{\mathrm{T}^{{#1}}({#2})}
\newcommand{\bb}{\mathbf{b}}
\newcommand{\bt}{\mathbf{t}}
\newcommand{\bs}{\mathbf{s}}
\newcommand{\Seq}[1]{\operatorname{Seq}(#1)}
\newcommand{\bphi}{\bm\Phi}
\newcommand{\FCN}[1]{\ensuremath{\text{FCN}_{#1}}}
\newcommand{\LStwoTwidth}[2]{\ensuremath{\text{LS2T}_{#1}^{#2}}}
\newcommand{\FCNLStwoTwidth}[3]{\ensuremath{\text{FCN}_{#1}\text{-}\text{LS2T}_{#2}^{#3}}}
\DeclareMathOperator{\spn}{span}
\theoremstyle{plain}
\newtheorem{thm}{Theorem}[section]
\newtheorem{proposition}[thm]{Proposition}
\newtheorem{lemma}[thm]{Lemma}
\theoremstyle{definition}
\newtheorem{definition}[thm]{Definition}
\newtheorem{remark}[thm]{Remark}
\newtheorem{example}[thm]{Example}
\title{Seq2Tens: An Efficient Representation of Sequences by Low-Rank Tensor Projections}
\author{Csaba Toth$ ^\star $ \And
  Patric Bonnier$ ^\star $ \And 
  Harald Oberhauser$ ^\star $ \And \\[-12pt]
  $ ^\star $Mathematical Institute, University of Oxford \\
  \texttt{\{toth, bonnier, oberhauser\}@\hspace{0.1pt}maths.ox.ac.uk}
}
\begin{document}

\maketitle

\begin{abstract}
  Sequential data such as time series, video, or text can be challenging to analyse as the ordered structure gives rise to complex dependencies. 
  At the heart of this is non-commutativity, in the sense that reordering the elements of a sequence can completely change its meaning.  
  We use a classical mathematical object -- the free algebra -- to capture this non-commutativity. 
  To address the innate computational complexity of this algebra, we use compositions of low-rank tensor projections.
  This yields modular and scalable building blocks that give state-of-the-art performance on standard benchmarks such as multivariate time series classification, mortality prediction and generative models for video. Code and benchmarks are publically available at \url{https://github.com/tgcsaba/seq2tens}.
\end{abstract}
\section{Introduction}
A central task of learning is to find representations of the underlying data that efficiently and faithfully capture their structure.
In the case of sequential data, one data point consists of a sequence of objects.
This is a rich and non-homogeneous class of data and includes classical uni- or multi-variate time series (sequences of scalars or vectors), video (sequences of images), and text (sequences of letters). 
%this includes classical uni- or multi-variate time series ($\cX=\R$ or $\cX=\R^d$); video ($\cX=\{\text{pixel}({i,j=1,\ldots,d}):\text{pixel} \in \{\text{red,blue,green} \}\}$); or text ($\cX=\{A,\ldots,Z,a,\ldots,z\}$).
Particular challenges of sequential data are that each sequence entry can itself be a highly structured object and that data sets typically include sequences of different length which makes naive vectorization troublesome.
% Even with a good representation for every entry, e.g.~a feature map for an image, we would still need to construct a method that turns this representation of static objects into a faithful representation of the sequence.
% Naive approaches such as applying the feature map entry-wise and flattening the resulting sequence of features into a long vector run into trouble even for the arguably simple case of uni-variate time series: the underlying data often consists of sequences of different length and this vectorization results in vectors in different dimensions.
% Classical approaches such as dynamic time warping address this to a certain extent and work well in the uni-variate case but typically do not work well for multi-variate time series; further, like kernels for sequence this only provide a distance between sequences rather than features of a sequence which makes it troublesome to combine with deep learning pipelines.  

\paragraph{Contribution.}
Our main result is a generic method that takes a \emph{static feature map} for a class of objects (e.g.~a feature map for vectors, images, or letters) as input and turns this into a feature map for sequences of arbitrary length of such objects (e.g.~a feature map for time series, video, or text). 
We call this feature map for sequences Seq2Tens for reasons that will become clear; among its attractive properties are that it 
\begin{enumerate*}[label=(\roman*)]
\item provides a structured, parsimonious description of sequences; generalizing classical methods for strings, 
\item comes with theoretical guarantees such as universality,
\item can be turned into modular and flexible neural network (NN) layers for sequence data. 
\end{enumerate*}
The key ingredient to our approach is to embed the feature space of the static feature map into a larger linear space that forms an algebra (a vector space equipped with a multiplication).
The product in this algebra is then used to ``stitch together'' the static features of the individual sequence entries in a structured way.
The construction that allows to do all this is classical in mathematics, and known as the \emph{free algebra} (over the static feature space). 
\paragraph{Outline.}
Section \ref{sec:2} formalizes the main ideas of Seq2Tens and introduces the free algebra $\T{V}$ over a space $V$ as well as the associated product, the so-called \emph{convolution tensor product}. 
Section \ref{sec:3} shows how low rank (LR) constructions combined with sequence-to-sequence transforms allows one to efficiently use this rich algebraic structure. 
Section \ref{sec:4} applies the results of Sections~\ref{sec:2} and \ref{sec:3} to build modular and scalable NN layers for sequential data.
Section \ref{sec:5} demonstrates the flexibility and modularity of this approach on both discriminative and generative benchmarks.
Section \ref{sec: summary} makes connections with previous work and summarizes this article.   
In the appendices we provide mathematical background, extensions, and detailed proofs for our theoretical results.
% Appendix \ref{app:tensor} contains some background on tensor algebras. Appendix \ref{app:algebra} studies the algebraic properties of $ \Phi $ and shows that one may use it to construct universal representations of sequences as tensors. Appendix \ref{app:iterated} shows that one may compute higher order features of $ \Phi $ by iterated compositions. Appendix \ref{app:kmers} shows that the tensor representation we propose indeed generalizes k-mers to vector valued data. Finally, Appendix \ref{app:details} contains implementation details for our experiments.
% By putting these ideas together we get efficiently computable universal non/commutative polynomial maps that can efficiently encode sequential data. We use this map to increase the accuracy of time series classification, and to efficiently disentangle generated sequences of data. 

%\begin{enumerate}
%	\item Representing a sequence of events by non-commutative polynomials.
%	\item The universality of tensors.
%	\item The efficiency of iterated compositions of low-rank objects.
%\end{enumerate}

\section{Capturing order by non-commutative multiplication} \label{sec:2}
We denote the set of sequences of elements in a set $\cX$ by 
\begin{align}
  \Seq{\cX}=\{\bx=(\bx_i)_{i=1,\ldots,L}: \bx_i \in \cX,\, L \ge 1\} %= \bigcup_{T\ge 1} (\R^d)^T. 
\end{align}
where $ L\geq 1 $ is some arbitrary length.
Even if $\cX$ itself is a linear space, e.g.~$\cX=\R$, $\Seq{\cX}$ is never a linear space since there is no natural addition of two sequences of different length.
\paragraph{Seq2Tens in a nutshell.}
% This is typically overcome by embedding a subset of $ \Seq{V} $ into a suitably large vector space,
Given any vector space $ V $ we may construct the so-called free algebra $ \T{V} $ over $ V $.
We describe the space $\T{V}$ in detail below, but as for now the only thing that is important is that $\T{V}$ is also a vector space that includes $V$, and that it carries a non-commutative product, which is, in a precise sense, ``the most general product'' on $ V $.

The main idea of Seq2Tens is that any ``static feature map'' for elements in $\cX$
\[
  \phi: \cX \to V
\]
% Below we construct a feature map $\Phi: \Seq{V}\rightarrow \T{V}$ that represents a sequence as an element of the linear space $\T{V}$.
% A function $f(\bx)$ on sequences can then be described as a linear functional of $\Phi(\bx)$, that is $f(\bx)\approx \langle \ell, \Phi(\bx) \rangle$ for some $\ell \in \T{\vs}$. 
% For a background on tensors we refer to Appendix \ref{app:tensor}.
% Given a sequence $\bx=(\bx_1,\ldots,\bx_L) \in \Seq{\R^d}$
can be used to construct a new feature map $\Phi:\Seq{\cX} \rightarrow \T{V}$ for sequences in $\cX$ by using the algebraic structure of $\T{V}$: 
the non-commutative product on $ \T{V} $ makes it possible to ``stitch together'' the individual features $\phi(\bx_1),\ldots,\phi(\bx_L) \in V \subset \T{V}$ of the sequence $\bx$ in the larger space $\T{V}$ by multiplication in $\T{V}$.
With this we may define the feature map $\Phi(\bx)$ for a sequences $\bx=(\bx_1,\ldots,\bx_L) \in \Seq{\cX}$ as follows 
\begin{enumerate}[label=(\roman*)]
\item\label{itm:lift map}
  lift the map $\phi:\cX \to V$ to a map $\varphi: \cX \to \T{V}$,
	\item \label{itm:lift}
	map $\Seq{\cX } \to \Seq{\T{V}}$ by $(\bx_1,\ldots,\bx_L) \mapsto (\varphi(\bx_1),\ldots,\varphi(\bx_L))$, 
	\item \label{itm:mult}
    map $\Seq{\T{V}} \to \T{V}$ by multiplication $(\varphi(\bx_1),\ldots,\varphi(\bx_L)) \mapsto \varphi(\bx_1)\cdots \varphi(\bx_L)$. 
%	use the non-commutative product \eqref{eq:ncp} to form an element of $\T{\R^d}$, \[\Phi(\bx)=\varphi(\bx_1)\cdots \varphi(\bx_L) \in \T{\R^d}.\] 
\end{enumerate}
In a more concise form, we define $\Phi$ as
\begin{align}
 \Phi: \Seq{\cX} \to \T{V},\quad \Phi(\bx)=\prod_{i=1}^L \varphi(\bx_i) \label{eq:mult_varphi}
\end{align}
where $\prod$ denotes multiplication in $\T{V}$.
We refer to the resulting map $\Phi$ as the Seq2Tens map, which stands short for \textit{\textbf{Seq}uences-\textbf{2}-\textbf{Tens}ors}.
Why is this construction a good idea?
First note, that step \ref{itm:lift map} is always possible since $V\subset \T{V}$ and we discuss the simplest such lift before Theorem~\ref{thm:univ} as well as other choices in Appendix~\ref{app:algebra}. 
Further, if $\phi$, respectively~$\varphi$, provides a faithful representation of objects in $\cX$, then there is no loss of information in step~\ref{itm:lift}.
Finally, since step \ref{itm:mult} uses ``the most general product'' to multiply $\varphi(\bx_1)\cdots \varphi(\bx_L)$ one expects that $\Phi(\bx) \in \T{V}$ faithfully represents the sequence $\bx$ as an element of $\T{V}$.
Indeed in Theorem~\ref{thm:univ} below we show an even stronger statement, namely that if the static feature map $\phi:\cX \to V$ contains enough non-linearities so that non-linear functions from $\cX $ to $\R$ can be approximated as \emph{linear functions} of the static feature map $\phi$, then the above construction extends this property to functions of sequences.
Put differently, \emph{if $\phi$ is a universal feature map for $\cX$, then $\Phi$ is a universal feature map for $\Seq{\cX}$};
that is, any non-linear function $f(\bx)$ of a sequence $\bx$ can be approximated as a linear functional of $\Phi(\bx)$,  $f(\bx) \approx \langle \ell, \Phi(\bx) \rangle$.  
We also emphasize that the domain of $\Phi$ is the space $\Seq{\cX}$ of sequences of \emph{arbitrary} (finite) length.
The remainder of this Section gives more details about steps~\ref{itm:lift map},\ref{itm:lift},\ref{itm:mult} for the construction of $\Phi$. 
% The remainder of Section~\ref{app:algebra} make this statement mathematically rigorous, in the the sense that Theorem~\ref{prop:univ} shows that $\Phi$ is a universal feature map for sequences in $\R^d$ whenever $\varphi$ is a universal feature map for vectors in $\R^d$.
\paragraph{The free algebra $\T{V}$ over a vector space $V$.}
Let $V$ be a vector space.
We denote by $\T{V}$ the set of sequences of tensors indexed by their degree $m$, 
\begin{align}
  \T{V} :=% \prod_{m\geq 0} (\R^d)^{\otimes m}=
  \{\bt=(\bt_m)_{m\ge 0}\,\vert\, \bt_m \in V^{\otimes m}\}%= (1, \R^d, \R^d\otimes \R^d, \R^d\otimes \R^d\otimes \R^d, \ldots ).
\end{align}
where by convention $V^{\otimes 0}=\R$.
For example, if $V=\R^d$ and $\bt=(\bt_m)_{m\ge 0}$ is some element of $ \T{\R^d} $, then its degree $m=1$ component is a $ d $-dimensional vector $\bt_1$, its degree $m=2$ component is a $d\times d$ matrix $\bt_2$, and its degree $m=3$ component is a degree $ 3 $ tensor $\bt_3$.
By defining addition and scalar multiplication as
\begin{align}
\bs+\bt:=(\bs_m+\bt_m)_{m\geq 0}, \quad c\cdot \bt=(c \bt_m)_{m \ge 0}
\end{align}
the set $\T{V}$ becomes a linear space.
By identifying $v \in V$ as the element $(0,v,0,\ldots,0) \in \T{V}$ we see that $V$ is a linear subspace of $\T{V}$. 
Moreover, while $V$ is only a linear space, $\T{V}$ carries a product that turns $\T{V}$ into an algebra.
%Since we assume the sequence $\ell=(\ell_m)_{m=0}^M$ is finite, the above sum is finite so convergence is not an issue. \todo{remove?}
%By Riesz representation we identify $L(\R^d,\R)$ with $\R^d$ and thus $\ell \in \T{\R^d} $ by setting $\ell_m=0^{\otimes m}$ for $m \ge M$.
% \begin{example} \label{ex:represent}\todo{maybe move to appendix?}
% The non-commutative polynomial $ P(x_1,x_2,x_3) = 3 + x_1 + 2x_1x_3 + 5x_2^2$ corresponds to the linear functional on $ \T{\R^3}$, 
% \begin{equation}
% \bx \mapsto \langle \ell, (\bx^{\otimes m})_{m \ge 0}   \rangle \text{ where }
% \ell: = (\ell_0,\ell_1,\ell_2):= \Big(3,
% \begin{pmatrix}
% 1\\
% 0 \\
% 0
% \end{pmatrix},
% \begin{pmatrix}
% 0&0&2\\
% 0&5&0\\
% 0&0&0
% \end{pmatrix}
% \Big). 
% \end{equation}
% In particular, as idea 2 shows $k$-mers can be encoded on degree $k$-tensors.
% \end{example}
%\paragraph{Feature maps for sequences.}
%We now produce a feature map $\Phi$ that represent a sequence $\bx$ in $\vs$ as an element $\Phi(\bx)$ of the linear space $\T{\vs}$.  
This product is the so-called \emph{tensor convolution product}, and is defined for $\bs,\bt \in \T{V}$ as
\begin{align} \label{eq:tensorprod}
\bs \cdot \bt := \big( \sum_{i=0}^m \bs_i\otimes \bt_{m-i} \big)_{m\geq 0} = \big( 1, \bs_1 + \bt_1, \bs_2 + \bs_1\otimes \bt_1 + \bt_2,\ldots \big) \in \T{V}
\end{align}
where $\otimes$ denotes the usual outer tensor product; e.g.~for vectors $u=(u_i),v=(v_i) \in \R^d$ the outer tensor product $u \otimes v$ is the $d\times d$ matrix $(u_iv_j)_{i,j=1,\ldots,d}$. 
We emphasize that like the outer tensor product $\otimes$, the tensor convolution product $\cdot$ is non-commutative, i.e.~$\bs \cdot \bt \neq \bt \cdot \bs$.
In a mathematically precise sense, \emph{$\T{V}$ is the most general algebra that contains $V$; it is a ``free construction''}. 
Since $\T{V}$ is realized as series of tensors of increasing degree, the \emph{free algebra} $\T{V}$ is also known as the \emph{tensor algebra} in the literature.
Appendix~\ref{app:tensor} contains background on tensors and further examples. 
% We now use this non-commutativity to capture the ordered structure in a sequence.
% \begin{definition} \label{def:phi}
%   Define
%   \begin{align}\label{eq:defphi}
%    \Phi: \Seq{\vs} \rightarrow \T{\vs},\quad \bx \mapsto \prod_{i=1}^L (1+ \bx_i) %\EE_{\pi_t}[ \bx_{t_1} \cdots \bx_{t_m}]
%   \end{align}
%   where $1=(1,0,0^{\otimes 2}, 0^{\otimes 3},\ldots)\in \T{\vs}$ and $\bx_i$ is identified as $(0,\bx_i,0^{\otimes 2}, 0^{\otimes 3},\ldots) \in \T{\vs}$.
% \end{definition}
% It is possible to replace the terms $(1+\bx_i)$ in the definition of $\Phi$ with other embeddings of $\bx_i$ into $\T{V}$; we discuss this in detail in Appendix \ref{app:algebra} but below we will use \eqref{eq:defphi} as it is the simplest and it also generalizes $k$-mers.
% The essential part of Definition~\ref{def:phi} is that the product $\prod_{i}$ is the non-commutative convolution tensor product as defined in \eqref{eq:tensorprod} .
% This methodology of representing sequences by elements of non-commutative tensor algebras is what we call \emph{Seq2Tens}.
% Another way to think about $\Phi$ is to note that $\Phi_m(\bx) $ is proportional to $ \EE[\bx_{i_1}\otimes \cdots \otimes \bx_{i_m}]$ where the expectation is taken by sampling $m$ points uniformly at random without replacement from $\{1,\ldots,L\}$ and putting them into increasing order $i_1<\cdots <i_m$. In Appendix \ref{app:kmers} we show that $\Phi$ is a strict generalization of the string features.  
\paragraph{Lifting static feature maps.}
Step~\ref{itm:lift map} in the construction of $\Phi$ requires turning a given feature map $\phi:\cX \to V$ into a map $\varphi: \cX \to \T{V}$.
%Since $ \T{V} $ is infinite dimensional there is a lot of freedom in choosing the map $ \varphi $.
Throughout the rest of this article we use the lift  
\begin{align} \label{eq:oneplus}
  \varphi(\bx) = (1, \phi(\bx), 0, 0 \ldots ) \in \T{V}.
\end{align}
We discuss other choices in Appendix~\ref{app:algebra}, but attractive properties of the lift \ref{eq:oneplus} are that
\begin{enumerate*}[label=(\alph*)]
\item
  the evaluation of $\Phi$ against low rank tensors becomes a simple recursive formula (Proposition~\ref{prop:rank one}, 
\item 
  it is a generalization of sequence sub-pattern matching as used in string kernels (Appendix~\ref{sec:different length},  
\item 
  despite its simplicity it performs exceedingly well in practice (Section \ref{sec:4}).
\end{enumerate*}

\paragraph{Extending to sequences of arbitrary length.} Steps \ref{itm:lift map} and \ref{itm:lift} in the construction specify how the map $\Phi: \cX \rightarrow \T{V}$ behaves on sequences of length-$1$, that is, single observations. Step \ref{itm:mult} amounts to the requirement that for any two sequences $\bx = (\bx_1, \dots, \bx_K), \by=(\by_1, \dots, \by_L) \in \Seq{V}$, their concatenation defined as $\bz = (\bx_1, \dots, \bx_K, \by_1, \dots, \by_L) \in \Seq{V}$ can be understood in the feature space as (non-commutative) multiplication of their corresponding features
\begin{align}
	\Phi(\bz) = \Phi(\bx) \cdot \Phi(\by). \label{eq:conc_product}
\end{align}
%The benefit of using \eqref{eq:conc_product} is twofold: firstly, since $\T{V}$ is closed under multiplication, it guarantees that we stay in the same feature space for sequences of any length. Secondly, the image of $\Phi$ becomes a subalgebra of $\T{V}$ itself, a property used in showing universality of $\Phi$ in the sense below.
In other words, we inductively extend the lift $\varphi$ to sequences of arbitrary length by starting from sequences consisting of a single observation, which is given in \eqref{eq:mult_varphi}.
Repeatedly applying the definition of the tensor convolution product in \eqref{eq:tensorprod} leads to the following explicit formula

\begin{align}\label{eq:phi formula}
	\Phi_m(\bx) = \sum_{1 \leq i_1 < \dots < i_m \leq L} \bx_{i_1} \otimes \cdots \otimes \bx_{i_m} \in V^{\otimes m}, \hspace{5pt} \Phi(\bx) = (\Phi_m(\bx))_{m \geq 0},
\end{align}
where $\bx=(\bx_1, \dots, \bx_L) \in \Seq{V}$ and the summation is over non-contiguous subsequences of $\bx$. 

\paragraph{Some intuition: generalized pattern matching.}
Our derivation of the feature map $\Phi(\bx)=(1,\Phi_1(\bx), \Phi_2(\bx),\ldots) \in \T{V}$ was guided by general algebraic principles, but \eqref{eq:phi formula} provides an intuitive interpretation.
It shows that for each $m\ge 1$, the entry $\Phi_m(\bx) \in V^{\otimes m}$ constructs a summary of a long sequence $\bx=(\bx_1,\ldots,\bx_L) \in \Seq{V}$ based on subsequences $(\bx_{i_1},\ldots,\bx_{i_m})$ of $\bx$ of length-$m$.
It does this by taking the usual outer tensor product $\bx_{i_1} \otimes \cdots \otimes \bx_{i_m} \in V^{\otimes m}$ and summing over all possible subsequences.
This is completely analogous to how string kernels provide a structured description of text by looking at non-contiguous substrings of length-$m$ (indeed, Appendix \ref{sec:different length} makes this rigorous).
However, the main difference is that the above construction works for arbitrary sequences and not just sequences of discrete letters.
Readers with less mathematical background might simply take this as motivation and regard \eqref{eq:phi formula} as definition. 
However, the algebraic background allows to prove that $\Phi$ is universal, see Theorem \ref{thm:univ} below.
%The algebraic background allows us to prove not only that no information is lost but that $\Phi$ is universal, see Theorem \ref{thm:univ} below.

\paragraph{Universality.}
A function $ \phi : \cX \to V $ is said to be \emph{universal for }$\cX$ if all continuous functions on $ \cX $ can be approximated as linear functions on the image of $ \phi $.
One of the most powerful features of neural nets is their universality~\citep{hornik1991approximation}.
A very attractive property of $\Phi$ is that it preserves universality: if $\phi:\cX \to V$ is universal for $\cX$, then $\Phi: \Seq{X} \to \T{V}$ is universal for $\Seq{\cX}$. 
To make this precise, note that $V^{\otimes m}$ is a linear space and therefore any $\ell= (\ell_0,\ell_1,\ldots,\ell_M,0,0,\ldots) \in \T{V}$ consisting of $M$ tensors $\ell_m\in V^{\otimes m}$, yields a linear functional on $\T{V}$; e.g.~if $V=\R^d$ and we identify $\ell_m$ in coordinates as $\ell_m=(\ell_m^{i_1,\ldots,i_m})_{i_1,\ldots,i_m \in \{1,\ldots,d\}}$ then
\begin{align}\label{eq: coordinates functional}
  \langle \ell, \bt \rangle:= \sum_{m=0}^M \langle \ell_m, \bt_m \rangle = \sum_{m = 0}^M \sum_{i_1,\ldots,i_m \in \{1,\ldots,d\}} \ell^{i_1,\ldots,i_m}_m \bt_m^{i_1,\ldots,i_m}.
\end{align}
Thus linear functionals of the feature map $\Phi$, are real-valued functions of sequences.
Theorem \ref{thm:univ} below shows that any continuous function $f:\Seq{\cX}\to \R$ can by arbitrary well approximated by a $\ell \in \T{V}$, $f(\bx) \approx \langle \ell, \Phi(\bx) \rangle$.
\begin{thm} \label{thm:univ}
	Let $ \phi : \cX \to V $  be a universal map with a lift that satisfies some mild constraints, then the following map is universal:
	\begin{align}
\Phi:	\mathrm{Seq}(\cX) \to \T{V}, \quad \bx \mapsto \Phi(\bx).
	\end{align}
\end{thm}
A detailed proof and the precise statement of Theorem \ref{thm:univ} is given in Appendix \ref{app:algebra}.

\section{Approximation by low-rank linear functionals} \label{sec:3}
\paragraph{The combinatorial explosion of tensor coordinates and what to do about it.}
The universality of $\Phi$ suggests the following approach to represent a function $f:\Seq{\cX} \to \R$ of sequences: First compute $\Phi(\bx)$ and then optimize over $\ell$ (and possibly also the hyperparameters of $\phi$) such that $f(\bx) \approx \langle \ell, \Phi(\bx) \rangle=\sum_{m=0}^M \langle \ell_m, \Phi_m(\bx) \rangle$.   
Unfortunately, tensors suffer from a combinatorial explosion in complexity in the sense that even just storing $ \Phi_m(\bx) \in  V^{\otimes m} \subset \T{V} $ requires $O(\operatorname{dim}(V)^{m})$ real numbers.
% Unfortunately, this quickly becomes computationally infeasible: if $v \in V$ is a vector in a $d$-dimensional space $V$, then $d^m$ real numbers required to describe and store $v^{\otimes m} \in V^{\otimes m}$. 
Below we resolve this computational bottleneck as follows: in Proposition \ref{prop:rank one} we show that for a special class of low-rank elements $\ell \in \T{V}$, the functional $\bx \mapsto \langle \ell, \Phi(\bx) \rangle$ can be efficiently computed in both time and memory.
This is somewhat analogous to a kernel trick since it shows that $\langle \ell, \Phi(\bx) \rangle$ can be cheaply computed without explicitly computing the feature map $\Phi(\bx)$. 
However, Theorem~\ref{thm:univ} guarantees universality under no restriction on $\ell$, thus restriction to rank-$1$ functionals limits the class of functions $f(\bx)$ that can be approximated.
%However, the linear functional $\ell$ might need to be of very high degree which makes it costly, unstable, and prone to overfitting.
% We now leverage idea 3, namely that compositions of simple objects can give efficient and robust approximations to complex objects.
% Theorem \ref{thm:compose} makes this precise in our context of sequences and tensor algebra functionals; the proof requires delicate concepts from non-commutative algebra and is given in Appendix \ref{app:iterated} where we also give more examples. 
Nevertheless, by iterating these ``low-rank functional'' constructions in the form of sequence-to-sequence transformations this can be ameliorated.
We give the details below but to gain intuition, we invite the reader to think of this iteration analogous to stacking layers in a neural network: each layer is a relatively simple non-linearity (e.g.~a sigmoid composed with an affine function) but by composing such layers, complicated functions can be efficiently approximated.
% In this analogy, one rank-$1$ functional $\langle \ell, \Phi(\bx)\range$ corresponds to a single layer which represents a relative simple non-linear function of sequences but by composing such rank-$1$ functionals, very complex sequence functions can be efficiently approximated. 

\paragraph{Rank-1 functionals are computationally cheap.} 
Degree $m=2$ tensors are matrices and low-rank (LR) approximations of matrices are widely used in practice \citep{Udell19} to address the quadratic complexity. 
The definition below generalizes the rank of matrices (tensors of degree $m=2$) to tensors of any degree $m$. 
\begin{definition} \label{def:rank}
	The \emph{rank} (also called \emph{CP rank} \citep{carroll1970analysis}) of a degree-$m$ tensor $ \bt_m \in V^{\otimes m} $ is the smallest number $ r \geq 0 $ such that one may write
	\begin{align}
	\bt_m = \sum_{i=0}^r \bv_i^1 \otimes \cdots \otimes \bv_i^m, \quad \bv_i^1, \ldots, \bv_i^m \in V.
	\end{align}
  We say that $\bt = (\bt_m)_{m \ge0} \in \T{V}$ has rank-$1$ (and degree-$M$) if each $\bt_m \in V^{\otimes m}$ is a rank-$1$ tensor and $\bt_i=0$ for $i>M$.
\end{definition}
\begin{remark}
	For $\bx = (\bx_1, \dots, \bx_L) \in \Seq{V}$, the rank $r_m \in \NN$ of $\Phi_m(\bx)$ satisfies $r_m \leq {L \choose m}$, while the rank and degree $r, d \in \NN$ of $\Phi(\bx)$ satisfy $r \leq {L \choose K}$ for $K = \left\lfloor \frac{L}{2} \right\rfloor$ and $d \leq L$.
\end{remark}
%Definition \ref{def:rank} is also known as the \textit{outer-product rank}.% and should not be confused with the \textit{modal} or \textit{multilinear rank}; see e.g. \cite{de2008tensor, chen2009tensor}.
A direct calculation shows that if $\ell$ is of rank-$1$, then $\langle \ell, \Phi(\bx) \rangle$ can be computed very efficiently by inner product evaluations in $V$.
\begin{proposition}\label{prop:rank one}
  Let $\ell=(\ell_m)_{m \ge 0} \in \T{V}$ be of rank-$1$ and degree-$M$.
  If $\phi $ is lifted to $\varphi$ as in \eqref{eq:oneplus}, then
  \begin{align}
    \langle \ell, \Phi(\bx)\rangle &=
    % \langle \ell, \prod_{t=1}^T (1+\bx_t) \rangle = \langle \ell, 1+\underbrace{\sum_{i_1<i_2}\bx_{i_1}\otimes \bx_{i_2}}_{\in V^{\otimes 2}\subset \T{V}} + \underbrace{\sum_{i_1<i_2<_3} \bx_{i_1}\otimes\bx_{i_2}\otimes \bx_{i_3}}_{\in V^{\otimes 3} \subset \T{V}} +\cdots\rangle\\
    \sum_{m=0}^M \sum_{1 \le i_1 < \cdots < i_m \leq L} \prod_{k=1}^m \langle \bv_k^m, \phi(\bx_{i_k})\rangle \label{eq:sum}
  \end{align}
where $\ell_m=\bv_1^m \otimes \dots \otimes \bv_m^m \in V^{\otimes m}$, $\bv_i^m \in V$ and $m=0,\ldots,M$.
  % The feature map $\Phi : \Seq{\cX} \to \T{V}$, $\bx \mapsto (\Phi_m(\bx))_{m\geq 0}$ can be computed as
  % \begin{align} \label{eq:phi_m}
  %   \Phi_m(\bx) = \sum_{1 \le i_1 < \cdots < i_m \le L} \varphi(\bx_{i_1}) \otimes \cdots \otimes \varphi(\bx_{i_m}) \in V^{\otimes m}.
  % \end{align}
\end{proposition}
Note that the inner sum is taken over all non-contiguous subsequences of $\bx$ of length-$m$, analogously to $m$-mers of strings and we make this connection precise in Appendix~\ref{sec:different length}; the proof of Proposition \ref{prop:rank one} is given in Appendix~\ref{app:quasi}. While \eqref{eq:sum} looks expensive, by casting it into a recursive formulation over time, it can be computed in $O(M^2 \cdot L \cdot d)$ time and $O(M^2 \cdot (L + c))$ memory, where $d$ is the inner product evaluation time on $V$, while $c$ is the memory footprint of a $v \in V$. This can further be reduced to $O(M \cdot L \cdot d)$ time and $O(M \cdot (L + c))$ memory by an efficient parametrization of the rank-$1$ element $\ell \in \T{V}$. We give further details in Appendices~\ref{app:recursive}, \ref{app:algs}, \ref{app:complexity}.
%However, instead of just counting occurrences of letters, we now represent a subsequence as an element of $V^{\otimes m}$.
% one may compute $\langle \ell, \Phi(\bx)\rangle$ very efficiently without computing all of $\Phi(\bx)=\left( \Phi_m(\bx) \right)_{m \ge 0 } \in \T{V}$ since a direct calculation (see Appendix \ref{app:quasi} and \ref{app:recursive}) shows that
% \begin{align}
%   \langle \ell, \Phi(\bx)\rangle &=
%                                    %\langle \ell, \prod_{t=1}^T (1+\bx_t) \rangle = \langle \ell, 1+\underbrace{\sum_{i_1<i_2}\bx_{i_1}\otimes \bx_{i_2}}_{\in V^{\otimes 2}\subset \T{V}} + \underbrace{\sum_{i_1<i_2<_3} \bx_{i_1}\otimes\bx_{i_2}\otimes \bx_{i_3}}_{\in V^{\otimes 3} \subset \T{V}} +\cdots\rangle\\
% \sum_{1 \le i_1 < \cdots < i_m \leq L} \prod_{k=1}^m \langle \bz_k, \bx_{i_k} \rangle. \label{eq:sum}
% \end{align}
%It is possible to compute $ \langle \Phi(\bx), z \rangle $ using Equation \eqref{eq:sum} and let $ m $ be very high to capture higher order information about $ \bx $.
%We call $ \bx \mapsto \langle \ell, \Phi_m(\bx) \rangle$ a rank-$r$ linear function if $\ell$ is a rank-$r$ tensor.  

\paragraph{Low-rank Seq2Tens maps.}
The composition of a linear map $\cL: \T{V} \rightarrow \R^{N}$ with $\Phi$ can be computed cheaply in parallel using \eqref{eq:sum} when $\cL$ is specified through a collection of $N \in \NN$ rank-$1$ elements $\ell^1,\ldots,\ell^N \in \T{V}$ such that 
\begin{align}
	\tilde\Phi_{\tilde\theta}(\bx_1, \dots, \bx_L) := \cL\circ \Phi(\bx_1,\ldots,\bx_L) = (\langle \ell^j, \Phi(\bx_1,\ldots,\bx_L))_{j=1}^N \in \R^{N}.
\end{align}
We call the resulting map $\tilde\Phi_{\tilde\theta}: \Seq{\cX} \rightarrow \R^N$ a \textit{\textbf{L}ow-rank \textbf{S}eq\textbf{2T}ens} map of width-$N$ and order-$M$, where $M \in \NN$ is the maximal degree of $\ell^1, \dots, \ell^N$ such that $\ell^j_i = 0$ for $i > M$. The LS2T map is parametrized by \begin{enumerate*}[label=(\arabic*)] \item the component vectors $\bv_{j,m}^k \in V$ of the rank-$1$ elements $\ell^j_m = \bv_{j,m}^1 \otimes \cdots \otimes \bv_{j,m}^{m}$, \item by any parameters $\theta$ that the static feature map $\phi_\theta: \cX \rightarrow V$ may depend on. We jointly denote these parameters by $\tilde\theta = (\theta, \ell^1, \dots, \ell^N)$ \end{enumerate*}. In addition, by the subsequent composition of $\tilde\Phi_{\tilde \theta}$ with a linear functional $\R^{N} \rightarrow \R$, we get the following function subspace as hypothesis class for the LS2T
\begin{align}
	\tilde\cH = \big\{\langle \sum_{j=1}^N \alpha_j \ell^j, \Phi(\bx_1, \dots, \bx_L) \rangle \,\vert\, \alpha_j \in \R \big\} \subsetneq \cH = \big\{\langle \ell, \Phi(\bx_1, \dots, \bx_L) \,\vert\, \ell \in \T{V}\big\}	
\end{align}
Hence, we acquire an intuitive explanation of the (hyper)parameters: the width of the LS2T, $N \in \NN$ specifies the maximal rank of the low-rank linear functionals of $\Phi$ that the LS2T can represent, while the span of the rank-$1$ elements, $\spn(\ell^1, \dots, \ell^N)$ determine an $N$-dimensional subspace of the dual space of $\T{V}$ consisting of at most rank-$N$ functionals.

Recall now that without rank restrictions on the linear functionals of Seq2Tens features, Theorem~\ref{thm:univ} would guarantee that any real-valued function $f:\Seq{\cX}\to \R$ could be approximated by $f(\bx) \approx \langle \ell, \Phi(\bx_1, \dots, \bx_L) \rangle$. As pointed out before, the restriction of the hypothesis class to low-rank linear functionals of $\Phi(\bx_1, \dots, \bx_L$) would limit the class of functions of sequences that can be approximated. To ameliorate this, we use LS2T transforms in a sequence-to-sequence fashion that allows us to stack such low-rank functionals, significantly recovering expressiveness.
 
\paragraph{Sequence-to-sequence transforms.}
% in feature space
% \begin{align} \label{eq:stream_sigs}
% \Seq{V} &\rightarrow \Seq{\T{V}},\quad(\bx_1,\bx_2,\ldots, \bx_L) \mapsto \Big(\Phi(\bx_1),\Phi(\bx_1, \bx_2), \ldots, \Phi(\bx_1,\ldots,\bx_L) \Big).
% \end{align}
% As pointed out above, this is very expensive computationally. 
%Motivated by the empirical successes of stacked RNNs \cite{graves2013speech, graves2013hybrid, Sutskever2014Seq2Seq}, we take build sequence-to-sequence transformations.
We can use LS2T to build sequence-to-sequence transformations in the following way: fix the static map $\phi_{\theta}: \cX \to V$ parametrized by $\theta$ and rank-$1$ elements such that $\tilde\theta = (\theta, \ell^1, \dots, \ell^N)$ and apply the resulting LS2T map $\tilde\Phi_{\tilde \theta}$ over expanding windows of $\bx$:
\begin{align} \label{eq:seq2seq_lls2t}
	\Seq{\cX} \to \Seq{\R^N},\quad \bx \mapsto \big(\tilde\Phi_{\tilde\theta}(\bx_1), \tilde\Phi_{\tilde\theta}(\bx_1, \bx_2), \dots, \tilde\Phi_{\tilde\theta}(\bx_1, \dots, \bx_L)\big).
\end{align}
Note that the cost of computing the expanding window sequence-to-sequence transform in \eqref{eq:seq2seq_lls2t} is no more expensive than computing $\tilde\Phi_{\tilde\theta}(\bx_1, \dots, \bx_L)$ itself due to the recursive nature of our algorithms, for further details see Appendices~\ref{app:recursive}, \ref{app:algs}, \ref{app:complexity}.

% Recall now that without the restricting coordinates of $\tilde\Phi_{\tilde\theta}$ to rank-$1$ functionals, Theorem~\ref{thm:univ} would guarantee that any multivariate function $f:\Seq{\cX}\to \R^{N}$ can be approximated by simply evaluating the resulting sequence in $\R^{N}$ at the endpoint $t=L$, i.e.~$f(\bx)\approx \tilde\Phi_{\tilde\theta}(\bx_1,\ldots,\bx_L)$.
% The restriction to rank-$1$ functionals limits the class of functions but by stacking such sequence-to-sequence transformations we recover an expressive model that is computationally cheap since each transformation is cheap. 
% Note that the sequence transformation \eqref{eq:lrSeq2Tens_causal} is parametrized by the parameters $\theta_1$ of the ``static feature map'' $\varphi_{\theta_1}$ and the $n$ rank-$1$ tensors $(\ell_1,\ldots,\ell_{n_1})$.
%( \langle \ell_{1}, \Phi(\bx_1,\ldots,\bx_i) \rangle, \ldots, \langle \ell_{n},\Phi(\bx_1,\ldots,\bx_i) \rangle)_{i=1}^L.
% We refer to $\theta = (\ell_j)_{j=1}^n$ as the set of weights of the NLST layer.
%In analogy with NNs, $n$ is called the \emph{width} of the layer $\tilde \Phi_\theta$ and the maximal degree of $\ell_1, \ldots, \ell_n $ is its \emph{order}.
\paragraph{Deep sequence-to-sequence transforms.}
Inspired by the empirical successes of deep RNNs \citep{graves2013speech, graves2013hybrid, Sutskever2014Seq2Seq}, we iterate the transformation \ref{eq:seq2seq_lls2t} $D$-times:
\begin{align}\label{eq:stacked seq2seq}
  \Seq{\cX} \rightarrow \Seq{\R^{N_1}} \rightarrow \Seq{\R^{N_2}} \rightarrow \cdots \rightarrow \Seq{\R^{N_D}}. %\xrightarrow{\Phi^{1}} \R 
\end{align}
Each of these mappings $\Seq{\R^{N_i}} \rightarrow \Seq{\R^{N_{i+1}}}$ is parametrized by the parameters $\tilde\theta_i$ of a static feature map $\phi_{\theta_i}$ and a linear map $\cL_i$ specified by $N_i$ rank-$1$ elements of $\T{V}$; these parameters are collectively denoted by $\tilde\theta_i= (\theta_i, \ell_i^1, \dots, \ell_i^{N_i})$. 
Evaluating the final sequence in $\Seq{\R^{N_D}}$ at the last observation-time $t=L$, we get the deep LS2T map with depth-$D$
\begin{align}\label{eq:SeqFeatures}
  \tilde\Phi_{\tilde \theta_1,\ldots,\tilde \theta_{D}}:\Seq{\cX} \rightarrow \R^{n_D}.
\end{align}
% which, by Theorem \ref{thm:compose} can cheaply approximate high degree functionals on $ \Phi $. 
% We refer to $\varphi^{\theta_1,\ldots, \theta_{D}}$ as the $D$-times stacked \emph{NLST} layer. 
Making precise how the stacking of such low-rank sequence-to-sequence transformations approximates general functions requires more tools from algebra, and we provide a rigorous quantitative statement in Appendix \ref{app:iterated}. 
Here, we just appeal to the analogy made with adding depth in neural networks mentioned earlier and empirically validate this in our experiments in Section~\ref{sec:4}. 

\section{Building neural networks with LS2T layers} \label{sec:4}
%By parsing the original sequence and applying $\Phi$ ``online'' and repeating this $D$ times, we have created $D-1$ sequence-to-sequence transforms to turn $\bx \in \Seq{\vs}$ into a sequence in the (very rich) linear space $\vs_D$, that is an element of $\Seq{\vs_D}$ see~\eqref{eq:seq2seq}.
% Computing the map $ \Phi $ in its entirety is usually infeasible as it suffers tremendously from the curse of dimensionality.
% In fact, even if one only computes the tensor series up to degree $M$, one would need $O(d^{M})$ coordinates to store all tensors, which is too expensive for most applications.
% Thankfully Equation \eqref{eq:sum} shows that low-rank functionals can be very quickly computed.
The Seq2Tens map $\Phi$ built from a static feature map $\phi$ is universal if $\phi$ is universal, Theorem~\ref{thm:univ}.
NNs form a flexible class of universal feature maps with strong empirical success for data in $\cX=\R^d$, and thus make a natural choice for $\phi$.
Combined with standard deep learning constructions, the framework of Sections~\ref{sec:2} and \ref{sec:3} can build modular and expressive layers for sequence learning.
%We then proceed to use the mathematical objects we introduced in Section~\ref{sec:2} to generalize well-known methodologies to construct scalable and modular layers to build neural networks for sequences. %that capture their structure efficiently by the non-commutative structure of the tensor algebra.
\paragraph{Neural LS2T layers.}
The simplest choice among many is to use as static feature map $\phi:\cX=\R^d \to \R^h$ a feedforward network with depth-$P$, $\phi = \phi_{P} \circ \cdots \circ \phi_1$ where $\phi_{j}(\bx) = \sigma(\mathbf{W}_j \bx + \mathbf{b}_j)$ for $\mathbf{W}_j \in \R^{h \times d}$, $\mathbf{b}_j \in \R^{h}$.
We can then lift this to a map $\varphi: \R^d \to \T{\R^h}$ as prescribed in~\eqref{eq:oneplus}.
Hence, the resulting LS2T layer $\bx \mapsto (\tilde\Phi_{\tilde\theta} (\bx_1,\ldots,\bx_i))_{i=1,\ldots,L} $ is a sequence-to-sequence transform $\Seq{\R^d} \rightarrow \Seq{\R^h}$ that is parametrized by $\tilde \theta=(\bW_1, \bb_1, \dots, \bW_P, \bb_P, \ell_1^1, \dots, \ell^{N_1}_1)$.
% By stacking such sequence transforms $D$-times and evaluating at the endpoint as in~\eqref{eq:stacked seq2seq}, we get a map $\tilde\Phi_{\tilde \theta_1,\ldots, \tilde \theta_{D}}:\Seq{\R^d} \to \R^N_D$ that is differentiable with respect to the collections of parameters $(\tilde \theta_j)_{j \in \{1,\ldots,D\}}$, all of which can then be chosen by e.g.~gradient based optimization of some loss function. 
% This can also be combined with standard preprocessing techniques such as adding lags andconvolutions (see Appendix \ref{app:variations}). 

\paragraph{Bidirectional LS2T layers.} The transformation in \eqref{eq:seq2seq_lls2t} is completely causal in the sense that each step of the output sequence depends only on past information.
For generative models, it can behove us to make the output depend on both past and future information, see~\citet{graves2013hybrid, baldi1999exploiting,li2018disentangled}.
Similarly to bidirectional RNNs and LSTMs \citep{schuster1997bidirectional, Graves2005BiLSTM}, we may achieve this by defining a bidirectional layer, 
\begin{align}
    \tilde\Phi^{\operatorname{b}}_{(\tilde\theta_1, \tilde\theta_2)}(\mathbf{x}): \Seq{\R^d} \rightarrow \Seq{\R^{N + N^{\prime}}}, \quad \mathbf{x} \mapsto (\tilde\Phi_{\tilde\theta_1}(\bx_1, \ldots, \bx_i), \tilde\Phi_{\tilde\theta_2}(\bx_i, \ldots, \bx_L))_{i=1}^L.
\end{align}
The sequential nature is kept intact by making the distinction between what classifies as past (the first $N$ coordinates) and future (the last $N^\prime$ coordinates) information. This amounts to having a form of precognition in the model, and has been applied in e.g. dynamics generation \citep{li2018disentangled}, machine translation \citep{sundermeyer2014translation}, and speech processing \citep{graves2013hybrid}.

\paragraph{Convolutions and LS2T.} We motivate to replace the time-distributed feedforward layers proposed in the paragraph above by temporal convolutions (CNN) instead. Although theory only requires the preprocessing layer of the LS2T to be a static feature map, we find that it is beneficial to capture some of the sequential information in the preprocessing layer as well, e.g.~using CNNs or RNNs. From a mathematical point of view, CNNs are a straightforward extension since they can be interpreted as time-distributed feedforward layers applied to the input sequence augmented with a $p \in \NN$ number of its lags for CNN kernel size $p$ (see Appendix \ref{app:variations} for further discussion). 

In the following, we precede our deep LS2T blocks by one or more CNN layers. Intuitively, CNNs and LS2Ts are similar in that both transformations operate on subsequences of their input sequence. The main difference between the two lies in that \emph{CNNs operate on contiguous subsequences}, and therefore, capture local, short-range nonlinear interactions between timesteps; \emph{while LS2Ts (\eqref{eq:sum}) use all non-contiguous subsequences}, and hence, learn global, long-range interactions in time. This observation motivates that the inductive biases of the two types of layers (local/global time-interactions) are highly complementary in nature, and we suggest that the improvement in the experiments on the models containing vanilla CNN blocks are due to this complementarity.

\section{Experiments}\label{sec:5}
We demonstrate the modularity and flexibility of the above LS2T and its variants by applying it to 
\begin{enumerate*}[label=(\roman*)] 
\item  multivariate time series classification,
\item mortality prediction in healthcare, 
\item  generative modelling of sequential data. 
\end{enumerate*}
In all cases, we take a strong baseline model (FCN and GP-VAE, as detailed below) and upgrade it with LS2T layers. As Thm.~\ref{thm:univ} requires the Seq2Tens layers to be preceded by at least a static feature map, we expect these layers to perform best as an add-on on top of other models, which however can be quite simple, such as a CNN. The additional computation time is negligible (in fact, for FCN it allows to reduce the number of parameters significantly, while retaining performance), but it can yield substantial improvements.
This is remarkable, since the original models are already state-of-the-art on well-established (frequentist and Bayesian) benchmarks. 
%See Appendix \ref{app:details} for more details on the implementation.
\subsection{Multivariate time series classification} \label{subseq:4_tsc}

As the first task, we consider multivariate time series classification (TSC) on an archive of benchmark datasets collected by \citet{baydogan2015multivarate}. Numerous previous publications report results on this archive, which makes it possible to compare against several well-performing competitor methods from the TSC community. These baselines are detailed in Appendix \ref{app:tsc}. This archive was also considered in a recent popular survey paper on DL for TSC \citep{Fawaz2019DLforTSC}, from where we borrow the two best performing models as DL baselines: FCN and ResNet. 
% We provide further details on the experiment, baselines and datasets in Appendix \ref{app:tsc}.
The FCN is a fully convolutional network which stacks 3 convolutional layers of kernel sizes $(8, 5, 3)$ and filters $(128, 256, 128)$ followed by a global average pooling (GAP) layer, hence employing global parameter sharing. We refer to this model as \FCN{128}. The ResNet is a residual network stacking 3 FCN blocks of various widths with skip-connections in between \citep{he2016deep} and a final GAP layer.

The FCN is an interesting model to upgrade with LS2T layers, since the LS2T also employs parameter sharing across the sequence length, and as noted previously, convolutions are only able to learn local interactions in time, that in particular makes them ill-suited to picking up on long-range autocorrelations, which is exactly where the LS2T can provide improvements. 
% As a baseline we include several methods for which accuracies are available on this collection: DTW\textsubscript{i} \cite{Sakoe1978DTW}, ARKernel \cite{Cuturi2011AR}, SMTS \cite{baydogan2015learning}, LPS \cite{Baydogan2015TimeSR}, gRSF \cite{karlsson2016generalized}, mvARF \cite{tuncel2018autoregressive}, MUSE \cite{Schfer2017MUSE}, MLSTMFCM\footnote{For MLSTMFCN, the results are the same as those reported in \cite{Schfer2017MUSE}}. \cite{Karim2019LSTMFCN}. We also include two deep learning baselines from \cite{Wang2017TSCfromScratch}, FCN and ResNet, since they were found to be the best performing deep learning methods for TSC in \cite{Fawaz2019DLforTSC}. 
% \paragraph{LS2T\textsuperscript{3} and FCN+LS2T\textsuperscript{3}.}
As our models, we consider three simple architectures: \begin{enumerate*}[label=(\roman*)] \item \LStwoTwidth{64}{3} stacks $3$ LS2T layers of order-$2$ and width-$64$; \item \FCNLStwoTwidth{64}{64}{3} precedes the \LStwoTwidth{64}{3} block by an \FCN{64} block; a downsized version of \FCN{128}; \item \FCNLStwoTwidth{128}{64}{3} uses the full \FCN{128} and follows it by a \LStwoTwidth{64}{3} block as before. \end{enumerate*} Also, both FCN-LS2T models employ skip-connections from the input to the LS2T block and from the FCN to the classification layer, allowing for the LS2T to directly see the input, and for the FCN to directly affect the final prediction. These hyperparameters were only subject to hand-tuning on a subset of the datasets, and the values we considered were $H, N \in \{32, 64, 128\}$, $M \in \{2, 3, 4\}$ and $D \in \{1, 2, 3\}$, where $H, N \in \NN$ is the FCN and LS2T width, resp., while $M \in \NN$ is the LS2T order and $D \in \NN$ is the LS2T depth. We also employ techniques such as time-embeddings \citep{Liu2018coordConv}, sequence differencing and batch normalization, see Appendix \ref{app:variations}; Appendix \ref{app:tsc} for further details on the experiment and Figure \ref{fig:ls2t_architectures} in thereof for a visualization of the architectures.

\begin{table}
	\caption{Posterior probabilities given by a Bayesian signed-rank test comparison of the proposed methods against the baselines. $\{>\}$, $\{<\}$, $\{=\}$ refer to the respective events that the row method is better, the column method is better, or that they are equivalent.}
	\label{table:bayes_signed_rank_probs}
	\vspace{-10pt}
	\begin{center}
	\begin{small}
	\begin{sc}
	\resizebox{\textwidth}{!}{
		\begin{tabular}{lccccccccc}
			\toprule
			\multirow{2}{*}{Model} & \multicolumn{3}{c}{\LStwoTwidth{64}{3}} & \multicolumn{3}{c}{\FCNLStwoTwidth{64}{64}{3}} & \multicolumn{3}{c}{\FCNLStwoTwidth{128}{64}{3}} \\
			\cmidrule{2-10}
			& $p(>)$ & $p(=)$ & $p(<)$ & $p(>)$ & $p(=)$ & $p(<)$ & $p(>)$ & $p(=)$ & $p(<)$ \\
			\midrule 
			SMTS \citep{baydogan2015learning} & $0.180$ & $0.000$ & $\mathbf{0.820}$ & $0.010$ & $0.000$ & $\mathbf{0.990}$ & $0.008$ & $0.000$ & $\mathbf{0.992}$ \\
			LPS \citep{Baydogan2015TimeSR} & $0.191$ & $0.002$ & $\mathbf{0.807}$ & $0.012$ & $0.001$ & $\mathbf{0.987}$ & $0.006$ & $0.001$ & $\mathbf{0.993}$ \\
			mvARF \citep{tuncel2018autoregressive} & $0.011$ & $0.140$ & $\mathbf{0.849}$ & $0.000$ & $0.126$ & $\mathbf{0.874}$ & $0.000$ & $0.088$ & $\mathbf{0.912}$ \\
			DTW \citep{Sakoe1978DTW} & $0.033$ & $0.000$ & $\mathbf{0.967}$ & $0.001$ & $0.000$ & $\mathbf{0.999}$ & $0.000$ & $0.000$ & $\mathbf{1.000}$ \\
			ARKernel \citep{Cuturi2011AR} & $0.100$ & $0.097$ & $\mathbf{0.803}$ & $0.000$ & $0.021$ & $\mathbf{0.979}$ & $0.000$ & $0.015$ & $\mathbf{0.985}$ \\
			gRSF \citep{karlsson2016generalized} & $0.481$ & $0.011$ & $\mathbf{0.508}$ & $0.028$ & $0.013$ & $\mathbf{0.960}$ & $0.022$ & $0.013$ & $\mathbf{0.965}$ \\
			MUSE \citep{Schfer2017MUSE} & $0.405$ & $0.128$ & $\mathbf{0.467}$ & $0.001$ & $0.074$ & $\mathbf{0.925}$ & $0.001$ & $0.077$ & $\mathbf{0.922}$ \\
			MLSTMFCN \citep{Karim2019LSTMFCN} & $\mathbf{0.916}$ & $0.043$ & $0.041$ & $0.123$ & $0.071$ & $\mathbf{0.807}$ & $0.055$ & $0.110$ & $\mathbf{0.835}$ \\
			$\text{FCN}_{128}$ \citep{Wang2017TSCfromScratch} & $\mathbf{0.998}$ & $0.002$ & $0.000$ & $0.363$ & $0.186$ & $\mathbf{0.451}$ & $0.169$ & $0.011$ & $\mathbf{0.820}$ \\
			ResNet \citep{Wang2017TSCfromScratch} & $\mathbf{0.998}$ & $0.002$ & $0.001$ & $0.056$ & $0.240$ & $\mathbf{0.704}$ & $0.016$ & $0.048$ & $\mathbf{0.935}$ \\
			\LStwoTwidth{64}{3} & - & - & - & $0.000$ & $0.001$ & $\mathbf{0.999}$ & $0.000$ & $0.001$ & $\mathbf{0.999}$ \\
			\FCNLStwoTwidth{64}{64}{3} & $\mathbf{0.999}$ & $0.001$ & $0.000$ & - & - & - & $0.020$ & $0.387$ & $\mathbf{0.593}$ \\
			\bottomrule
		\end{tabular}}
	\end{sc}
	\end{small}
	\end{center}
\vspace{-20pt}
\end{table}

\paragraph{Results.}
We trained the models, \FCN{128}, ResNet, \LStwoTwidth{64}{3}, \FCNLStwoTwidth{64}{64}{3}, \FCNLStwoTwidth{128}{64}{3} on each of the $16$ datasets $5$ times while results for other methods were borrowed from the cited publications. In Appendix \ref{app:tsc}, Figure \ref{fig:box_and_cd} depicts the box-plot of distributions of accuracies and a CD diagram using the Nemenyi test \citep{nemenyi1963distribution}, while Table \ref{table:classification_accuracies} shows the full list of results.
Since mean-ranks based tests raise some paradoxical issues~\citep{Benavoli2016}, it is customary to conduct pairwise comparisons using frequentist \citep{demvsar2006statistical} or Bayesian \citep{benavoli2017time} hypothesis tests. We adopted the Bayesian signed-rank test from \citet{benavoli2014bayesian}, the posterior probabilities of which are displayed in Table \ref{table:bayes_signed_rank_probs}, while the Bayesian posteriors are visualized on Figure \ref{fig:baycomp1} in App.~\ref{app:tsc}. The results of the signed-rank test can be summarized as follows: \begin{enumerate*}[label=(\arabic*)] \item \LStwoTwidth{64}{3} already outperforms some classic TS classifiers with high probability ($p \geq 0.8$), but it is not competitive with other DL classifiers.  This observation is not surprising since even theory requires at least a static feature map to precede the LS2T. \item \FCNLStwoTwidth{64}{64}{3} outperforms almost all models with high probability ($p \geq 0.8$), except for ResNet (which is stil outperformed by $p \geq 0.7$), \FCN{128} and \FCNLStwoTwidth{128}{64}{3}. When compared with \FCN{128}, the test is unable to decide between the two, which upon inspection of the individual results in Table \ref{table:classification_accuracies} can be explained by that on some datasets the benefit of the added LS2T block is high enough that it outweighs the loss of flexibility incurred by reducing the width of the FCN - arguably these are the datasets where long-range autocorrelations are present in the input time series, and picking up on these improve the performance - however, on a few datasets the contrary is true. \item Lastly, \FCNLStwoTwidth{128}{64}{3}, \emph{outperforms all baseline methods with high probability ($p \geq 0.8$)}, and hence successfully improves on the \FCN{128} via its added ability to learn long-range time-interactions.
We remark that \emph{\FCNLStwoTwidth{64}{64}{3} has fewer parameters than \FCN{128} by more than 50\%}, hence we managed to compress the FCN to a fraction of its original size, while on average still slightly improving its performance, a nontrivial feat by its own accord.
\end{enumerate*}

% The results of the signed-rank test indicate that while LS2T\textsuperscript{3} is better with moderate probability ($p \geq 0.6)$ than 4 of the baselines, $\text{FCN-LS2T}^3$ performs better than all with high probability ($p \geq 0.75$), except for MLSTMFCN which is only slightly outperformed by $\text{FCN-LS2T}^3$. 
% Note however that \emph{$\text{FCN-LS2T}^3$ has fewer parameters than MLSTMFCN by more than $60\%$}, see Table \ref{table:params} for a parameter comparison. Also, as MLSTMFCN is the concatenation of an FCN with an LSTM \citep{Karim2019LSTMFCN}, the LSTM layer can bottleneck the computations for very long sequences as it scales linearly, while \emph{LS2T layers effectively scale sublinearly in sequence length}, Appendix \ref{app:complexity} for a computational comparison. In conclusion, we have verified that LS2T\textsuperscript{3} already performs well on some classification tasks, and by preceding it with an FCN, its performance is elevated to outperforming all baseline models. This observation is not surprising, as Thm. \ref{thm:univ} warrants, the full strength of the LS2T layer shines when it is preceded by at least some state-space nonlinearities. Hence, we expect LS2Ts to find their best use as a modular building block being part of larger models. We also suggest the improvement on the vanilla FCN composed of CNN blocks is due to the LS2T enhancing the model's ability to pick up long-range interactions in time as previously discussed at the end of Section \ref{sec:4}.

\subsection{Mortality prediction} \label{subseq:4_mortality}
We consider the  {\sc Physionet2012} challenge dataset \citep{goldberger2000components} for mortality prediction, which is a case of medical TSC as the task is to predict in-hospital mortality of patients after their admission to the ICU. This is a difficult ML task due to missingness in the data, low signal-to-noise ratio (SNR), and imbalanced class distributions with a prevalence ratio of around $14 \%$. We extend the experiments conducted in \citet{horn2020set}, which we also use as very strong baselines. Under the same experimental setting, we train two models: $\text{FCN-LS2T}$ as ours and the FCN as another baseline. For both models, we conduct a random search for all hyperparameters with 20 samples from a pre-specified search space, and the setting with best validation performance is used for model evaluation on the test set over 5 independent model trains, exactly the same way as it was done in \cite{horn2020set}. We preprocess the data using the same method as in \citet[eq.~(9)]{che2018recurrent} and additionally handle static features by tiling them along the time axis and adding them as extra coordinates. We additionally introduce in both models a \texttt{SpatialDropout1D} layer after all CNN and LS2T layers with the same tunable dropout rate to mitigate the low SNR of the dataset.

\begin{wraptable}{r}{0.55\textwidth}
	\vspace{-17.5pt}
	\caption{Comparison of FCN-LS2T and FCN on {\sc Physionet2012} with the results from \citet{horn2020set}.}
	\label{table:mort_pred}
	\vspace{-10pt}
	\begin{center}
	\begin{scriptsize}
	\begin{sc}
		\begin{tabular}{lcccc}
			\toprule
			Model & Accuracy & AUPRC & AUROC \\
			\midrule
			$\text{FCN-LS2T}$ & $\mathbf{84.1 \pm 1.6}$ & $\mathbf{53.9 \pm 0.5}$ & $85.6 \pm 0.5$ \\ 
			FCN & $80.7 \pm 1.7$ & $52.8 \pm 1.3$ & $85.6 \pm 0.2$ \\
			GRU-D & $80.0 \pm 2.9$ & $\mathit{53.7 \pm 0.9}$ & $\mathbf{86.3 \pm 0.3}$ \\
			GRU-Simple & $82.2 \pm 0.2$ & $42.2 \pm 0.6$ & $80.8 \pm 1.1$ \\
			IP-Nets & $79.4 \pm 0.3$ & $51.0 \pm 0.6$ & $\mathit{86.0 \pm 0.2}$ \\
			Phased-LSTM & $76.8 \pm 5.2$ & $38.7 \pm 1.5$ & $79.0 \pm 1.0$ \\
			Transformer & $\mathit{83.7 \pm 3.5}$ & $52.8 \pm 2.2$ & $\mathbf{86.3 \pm 0.8}$ \\
			Latent-ODE & $76.0 \pm 0.1$ & $50.7 \pm 1.7$ & $85.7 \pm 0.6$ \\
			SeFT-Attn. & $75.3 \pm 3.5$ & $52.4 \pm 1.1$ & $85.1 \pm 0.4$ \\
			\bottomrule
		\end{tabular}
	\end{sc}
	\end{scriptsize}
	\end{center}
\vspace{-15pt}
\end{wraptable}

\paragraph{Results.} Table \ref{table:mort_pred} compares the performance of $\text{FCN-LS2T}$ with that of FCN and the results from \cite{horn2020set} on 3 metrics: \begin{enumerate*}[label=(\arabic*)] \item {\sc accuracy}, \item area under the precision-recall curve ({\sc AUPRC}), \item area under the ROC curve ({\sc AUROC}). 
\end{enumerate*} We can observe that \emph{FCN-LS2T takes on average first place according to both {\sc Accuracy} and {\sc AUPRC}, outperforming FCN and all SOTA methods}, e.g.~{\sc Transformer} \citep{vaswani2017attention}, {\sc GRU-D} \cite{che2018recurrent}, {\sc SeFT} \citep{horn2020set}, and also being competitive in terms of {\sc AUROC}. This is very promising, and it suggests that LS2T layers might be particularly well-suited to complex and heterogenous datasets, such as medical time series, since the FCN-LS2T models significantly improved accuracy on {\sc ECG} as well, another medical dataset in the previous experiment.

% \vspace{-10pt}

\subsection{Generating sequential data} \label{subseq:4_gpvae}
Finally, we demonstrate on sequential data imputation for time series and video that LS2Ts do not only provide good representations of sequences in discriminative, but also generative models.
\paragraph{The GP-VAE model.}
In this experiment, we take as base model the recent GP-VAE \citep{fortuin2019gpvae}, that provides state-of-the-art results for probabilistic sequential data imputation. 
The GP-VAE is essentially based on the HI-VAE \citep{nazabal2018handling} for handling missing data in variational autoencoders (VAEs) \citep{kingma2013auto} adapted to the handling of time series data by the use of a Gaussian process (GP) prior \citep{williams2006gaussian} across time in the latent sequence space to capture temporal dynamics.
% The GP-VAE assumes that $\bx \in \Seq{\R^d} $ is generated as 
% \begin{align}
%     p_\theta(\bx_i \given \bz_i) = \cN(\bx_i \given g_\theta(\bz_i), \sigma^2 \mathbf{I}_d),
% \end{align}
% where $\bz \in \Seq{\R^{d^\prime}}$ denotes a latent process and $g_\theta: \R^{d^\prime} \rightarrow \R^d$ is the time-point-wise \textit{decoder network} parametrized by $\theta$.
% % while $\sigma^2 \in \R$ is the observation noise variance.
% The temporal interdependencies are modelled in the latent space by assigning independent GP priors to the coordinate processes of $\bz$, i.e.~denoting $\bz_i = (z_i^j)_{j=1}^d \in \R^{d^\prime}$, it is assumed that $z^j \sim \mathcal{GP}(m(\cdot), k(\cdot, \cdot))$, where $m$ and $k$ are the mean and covariance functions.
% As usual in VAEs, exact Bayesian inference is intractable and free-form variational approximations are inefficient. Hence they apply amortized variational inference \cite{Gershman2014AmortizedII, Rezende14Stochastic} to fit a Gaussian approximation to the posterior time-marginals of $\bz$, the means and covariances of which are represented by the so-called \textit{encoder network}.
% This allows one to efficiently fit a model to several examples jointly at training time, and make inference about unseen examples at testing time without any optimization overhead.
% Missing data is handled by imputing the missing values with $0$ before feeding them into the encoder network, while the ELBO loss is only computed across the observed features during training similarly to the HI-VAE solution \cite{nazabal2018handling}. 
Since the GP-VAE is a highly advanced model, its in-depth description is deferred to Appendix~\ref{app:imputation}.% and for the experiments Appendix \ref{app:imputation}.
% As is usual, the mean function is a-priori set to $m \equiv 0$, while for the covariance the authors use the Cauchy kernel 
% \begin{align}
%     k(\tau, \tau^\prime) = \tilde\sigma^2 \left(1 + \frac{(\tau - \tau^\prime)^2}{l^2}\right)^{-1}
% \end{align} 
We extend the experiments conducted in \citet{fortuin2019gpvae}, and we make one simple change to the GP-VAE architecture without changing any other hyperparameters or aspects: we introduce a single bidirectional LS2T layer (B-LS2T) into the encoder network that is used in the amortized representation of the means and covariances of the variational posterior.
The B-LS2T layer is preceded by a time-embedding and differencing block, and succeeded by channel flattening and layer normalization as depicted in Figure \ref{fig:gpvae_ls2t_encoder}. The idea behind this experiment is to see if we can improve the performance of a highly complicated model that is composed of many interacting submodels, by the naive introduction of LS2T layers.

			 %   Mean imputation & - & $0.168 \pm 0.000$ & $0.938 \pm 0.000$ & $0.013 \pm 0.000$ & $0.703 \pm 0.000$ \\
			 %   Forward imputation & - & $0.177 \pm 0.000$ & $0.935 \pm 0.000$ & $0.028 \pm 0.000$ & $0.710 \pm 0.000$ \\
    %             VAE \cite{kingma2013auto} & $0.599 \pm 0.002$ & $0.232 \pm 0.000$ & $0.922 \pm 0.000$ & $0.028 \pm 0.000$ & $0.677 \pm 0.002$ \\
    %             HI-VAE \cite{nazabal2018handling} & $0.372 \pm 0.008$ & $0.134 \pm 0.003$ & $\mathbf{0.962 \pm 0.00}1$ & $0.007 \pm 0.000$ & $0.686 \pm 0.010$ \\
    %             GP-VAE \cite{fortuin2019gpvae} & $0.350 \pm 0.007$  & $0.114 \pm 0.002$ & $\mathbf{0.960 \pm 0.002}$ & $\mathbf{0.002 \pm 0.000}$ & $0.730 \pm 0.006$ \\  
    %             GP-VAE (B-LS2T) & $\mathbf{0.251 \pm 0.008}$ & $\mathbf{0.092 \pm 0.003}$ & $\mathbf{0.962 \pm 0.001}$ & $\mathbf{0.002 \pm 0.000}$ & $\mathbf{0.743 \pm 0.007}$\\
    %             BRITS \cite{Cao2018brits} & - & - & - & - & $\mathbf{0.742 \pm 0.008}$\\

% \begin{minipage}[t]{\textwidth}
% \vspace{-20pt}
% \hspace{-20pt}

% \end{minipage}

\paragraph{Results.}
To make the comparison, we ceteris paribus re-ran all experiments the authors originally included in their paper \citep{fortuin2019gpvae}, which are imputation of Healing MNIST, Sprites, and Physionet 2012.
The results are in Table \ref{table:gp_vae_comparison}, which report the same metrics as used in \citet{fortuin2019gpvae}, i.e.~negative log-likelihood (NLL, lower is better), mean squared error (MSE, lower is better) on test sets, and downstream classification performance of a linear classifier (AUROC, higher is better). For all other models beside our GP-VAE (B-LS2T), the results were borrowed from \citet{fortuin2019gpvae}. We observe that simply adding the B-LS2T layer improved the result in almost all cases, except for Sprites, where the GP-VAE already achieved a very low MSE score. Additionally, when comparing GP-VAE to BRITS on Physionet, the authors argue that although the BRITS achieves a higher AUROC score, the GP-VAE should not be disregarded as it fits a generative model to the data that enjoys the usual Bayesian benefits of predicting distributions instead of point predictions. The results display that by simply adding our layer into the architecture, we managed to elevate the performance of GP-VAE to the same level while retaining these same benefits. We believe the reason for the improvement is a tighter amortization gap in the variational approximation \citep{Cremer2018inference} achieved by increasing the expressiveness of the encoder by the LS2T allowing it to pick up on long-range interactions in time. We provide further discussion in Appendix \ref{app:imputation}.

\begin{table}[t]
	\caption{Performance comparison of GP-VAE (B-LS2T) with the baseline methods}
	\label{table:gp_vae_comparison}
% 	\vskip 0.15in
	\vspace{-10pt}
	\begin{center}
	    \begin{small}
		\begin{sc}
		\makebox[\textwidth][c]{
	    \resizebox{\textwidth}{!}{
		\begin{tabular}{lccccc}
			\toprule
			\multirow{2}{*}{Method} & \multicolumn{3}{c}{HMNIST} & \multicolumn{1}{c}{Sprites} & \multicolumn{1}{c}{Physionet} \\
			\cmidrule{2-6}
			& NLL & MSE & AUROC & MSE & AUROC \\
			\midrule
			    Mean imputation & - & $0.168 \pm 0.000$ & $0.938 \pm 0.000$ & $0.013 \pm 0.000$ & $0.703 \pm 0.000$ \\
			    Forward imputation & - & $0.177 \pm 0.000$ & $0.935 \pm 0.000$ & $0.028 \pm 0.000$ & $0.710 \pm 0.000$ \\
                VAE & $0.599 \pm 0.002$ & $0.232 \pm 0.000$ & $0.922 \pm 0.000$ & $0.028 \pm 0.000$ & $0.677 \pm 0.002$ \\
                HI-VAE & $0.372 \pm 0.008$ & $0.134 \pm 0.003$ & $\mathbf{0.962 \pm 0.00}1$ & $0.007 \pm 0.000$ & $0.686 \pm 0.010$ \\
                GP-VAE & $0.350 \pm 0.007$  & $0.114 \pm 0.002$ & $\mathbf{0.960 \pm 0.002}$ & $\mathbf{0.002 \pm 0.000}$ & $0.730 \pm 0.006$ \\  
                GP-VAE (B-LS2T) & $\mathbf{0.251 \pm 0.008}$ & $\mathbf{0.092 \pm 0.003}$ & $\mathbf{0.962 \pm 0.001}$ & $\mathbf{0.002 \pm 0.000}$ & $\mathbf{0.743 \pm 0.007}$\\
                BRITS & - & - & - & - & $\mathbf{0.742 \pm 0.008}$\\
			\bottomrule
		\end{tabular}}}
	\end{sc}
	\end{small}
	\end{center}
	\vspace{-20pt}
\end{table}
	
\section{Related work and Summary}\label{sec: summary}	
%We used a classical non-commutative structure to construct a feature map for sequential data by composing algebraic non-linearities and state space non-linearities.
%Compositions of low rank functionals allowed us to build scalable ``tensorized'' versions of well-known neural network models.
%Finally, we would like to emphasize two points: firstly, although we provide a detailed theoretical analysis in the Appendix that applies advanced concepts of algebra, none of this is needed for practitioners who just want to apply the LS2T layers. 
%Secondly, our experiments show that simple applications of such layers in popular NN architectures yields improvements on competitive baselines.
%However, many other models can be modified by adding LS2T layers.
%Please try it out in your own favourite model!  
\paragraph{Related Work.}
The literature on tensor models in ML is vast. 
Related to our approach we mention pars-pro-toto Tensor Networks \citep{cichocki2016tensor}, that use classical LR decompositions, such as CP \citep{carroll1970analysis}, Tucker \citep{tucker1966some}, tensor trains \citep{oseledets2011tensor} and tensor rings \citep{zhao2019learning};
further, CNNs have been combined with LR tensor techniques \citep{cohen2016expressive,kossaifi2017tensor} and extended to RNNs \citep{khrulkov2019generalized}; Tensor Fusion Networks \citep{zadeh2017tensor} and its LR variants \citep{liu2018efficient,liang2019learning,hou2019deep}; tensor-based gait recognition \citep{tao2007general}.
Our main contribution to this literature is the use of the free algebra $\T{V}$ with its convolution product $\cdot$, instead of $V^{\otimes m}$ with the outer product $\otimes$ that is used in the above papers. 
While counter-intuitive to work in a larger space $\T{V}$, the additional algebra structure of $(\T{V},\cdot)$ is the main reason for the nice properties of $\Phi$ (\emph{universality, making sequences of arbitrary length comparable, convergence in the continuous time limit}; see Appendix~\ref{app:algebra}) which we believe are in turn the main reason for the \emph{strong benchmark performance}. 
Stacked LR sequence transforms allow to exploit this rich algebraic structure with little computational overhead.
Another related literature are path signatures in ML \citep{lyons2014rough, primer2016, graham2013sparse, Deepsig, Toth19}.
These arise as special case of Seq2Tens (Appendix~\ref{app:algebra}) and our main contribution to this literature is that Seq2Tens resolves a well-known computational bottleneck in this literature since it \emph{never needs to compute and store a signature}, instead it \emph{directly and efficiently learns the functional of the signature}.   

\paragraph{Summary.}
We used a classical non-commutative structure to construct a feature map for sequences of arbitrary length. %by translating temporal dependencies to algebraic relations.
By stacking sequence transforms we turned this into scalable and modular NN layers for sequence data.
The main novelty is the use of the free algebra $\T{V}$ constructed from the static feature space $V$.  
While free algebras are classical in mathematics, their use in ML seems novel and underexplored.
We would like to re-emphasize that $(\T{V}, \cdot)$ is not a mysterious abstract space: if you know the outer tensor product $\otimes$ then you can easily switch to the tensor convolution product $\cdot$ by taking sums of outer tensor products, as defined in~\eqref{eq:tensorprod}.  
% \begin{enumerate*}[label=(\roman*)] 
% 	\item Although our analysis in the appendices applies advanced ideas from algebra, this theoretical background is not needed for practitioners who want to apply the Seq2Tens approach in their models.
% 	\item Our experiments indicate that simple applications of such layers in popular architectures yields improvements on competitive baselines for both discriminative and generative models.
% \end{enumerate*}
As our experiments show, the benefits of this algebraic structure are not just theoretical but can significantly elevate performance of already strong-performing models.

\newpage

\bibliography{references}
\bibliographystyle{iclr2020_conference}

\newpage
\appendix
\section*{How to use this appendix}
% Section~\ref{app:tensor} recalls tensor product and functionals of tensors.
% Section~\ref{app:phi universal} proves that $\Phi$ is universal and Section~\ref{sec:3} that compositions of low-rank functionals of $\Phi$ are very efficient approximators.  
% Section~\ref{sec:4} includes more details on experiments and Section~\ref{app:algs} contains algorithms and implementation details. 

\textbf{For practitioners}, we recommend a look at Section \ref{app:tensor} for a refresher on tensor notation and an introduction to $\T{V}$;
further, the introduction of Section \ref{app:algebra} contains a brief summary of the main theoretical properties of Seq2Tens that make it an attractive feature map for sequence data.
Sections~\ref{app:algorithms_detail} and \ref{app:details} contain details on algorithms and experiments.
%In particular, we draw attention to~\ref{app:full_map} we summarize what we believe to be our main contributions in non-technical terms.
%, namely how 
% \begin{center}
%   \emph{the expensive computation of $\Phi$ can be avoided}.
% \end{center}
%This is all you need to apply Seq2Tens! 

\textbf{For theoreticians}, we recommend Section~\ref{app:algebra} for a proof that $\Phi$ is universal (Theorem~\ref{prop:univ}), how the Seq2Tens map behaves in the high-frequency limit as one goes from discrete to continuous time (Proposition \ref{prop:convergence to signature}), and to Section~\ref{app:iterated} for a quantitative statement of low-rank functionals can be turned into high-rank functionals with sequence-to-sequence transformations. 
We re-emphasize that these more algebra-heavy sections are not needed for practitioners.% but we hope that they convince some readers that tools from non-commutative algebra can lead to insight and new models in ML that capture structured objects such as sequences. 
%We reiterate that the theory of Section~\ref{app:algebra} and \ref{app:iterated} can be completely skipped but we  

\section{Tensors and the Free Algebra}\label{app:tensor}
  % Key to our approach is to construct a map
  % \begin{align}
  %  \Phi:\Seq{V} \rightarrow \T{V} 
  % \end{align}
  % that injects sequences in a vector space $V$ into the linear space $\T{V}$. 
  % The space $\T{V}$ consists of series of tensors of increasing degree and thus has a natural grading in terms of this degree; further it carries a non-commutative multiplication, the so-called convolution product. 
  % All these objects are classical and in a precise mathematical sense, the most general constructions,
  % but they might be less familiar to researchers in machine learning.
  
This section recalls some basics on the tensor product $\otimes$ and the convolution product that turns the linear space $\T{V}$ into an algebra -- the so-called \emph{free algebra} or \emph{free algebra over $V$}. 
  We refer to \cite[Chapter 16]{Lang} for more on tensors, to \citet{Reutenauer93} for free algebras.
  Put briefly, for any linear space $V$ there exists a linear space $\T{V}$ that contains $V$ but that also carries a non-commutative product.
%   The space $\T{V}$ can be realized as series of tensors of increasing degree, hence the name tensor algebra.
	\paragraph{Tensor products on $\R^d$.}
	If $ x = (x_1, \ldots, x_d) \in \R^d $ and $ y = (y_1, \ldots, y_e) \in \R^e $ are two vectors, then their \emph{tensor product} $ x \otimes y $ is defined as the $ (d\times e) $-matrix, or degree $ 2 $ tensor, with entries $ (x \otimes y)_{i,j} = x_iy_j $. This is also commonly called the \emph{outer product} of the two vectors. The space $ \R^d \otimes \R^e $ is defined as the linear span of all degree $ 2 $ tensors $ x \otimes y $ for $ x \in \R^d, y \in \R^e $.
	If $ z \in \R^f $ is another vector, then one may form a degree $ 3 $ tensor $ x \otimes y \otimes z $ with shape $ (d\times e \times f) $ defined to have entries $ (x \otimes y \otimes z)_{i,j,k} = x_iy_jz_k $. The space $ \R^d \otimes \R^e \otimes \R^f $ is analogously defined as the linear span of all degree $ 3 $ tensors $ x \otimes y \otimes z $ for $ x \in \R^d, y \in \R^e, z \in \R^f $.	
	
	The tensor product of two general vector spaces $ V $ and $ W $ can be defined even if they are infinite dimensional, see \cite[Chapter 16]{Lang}, but we invite readers unfamiliar with general tensor spaces to think of $ V $ as $\R^d$ below.
	
 \paragraph{The free algebra $\T{V}$.}
 Ultimately we are not only interested in tensors of some fixed degree $m$ -- that is an element of $V^{\otimes m}$ -- but sequences of tensors of increasing degree. Given some linear space $ V $, the linear space $ \T{V} $ is defined as set of all tensors of any degree over $ V $.
 Formally
 \begin{align}
   \T{V} := \prod_{m\geq 0} V^{\otimes m} =   \{\bt=(\bt_m)_{m\ge 0}\,\vert\, \bt \in V^{\otimes m}\}%= (1, \R^d, \R^d\otimes \R^d, \R^d\otimes \R^d\otimes \R^d, \ldots ).
%(V^{\otimes 0}, V, V^{\otimes 2}, V^{\otimes 3}, \ldots )
 \end{align}
 where we use the notation $V^{\otimes 1}=V$, $V^{\otimes 2} = V \otimes V,\, V^{\otimes 3} = V \otimes V \otimes V $ and so on; by convention we let $ V^{\otimes 0} = \R $.
 We normally write elements of $ \T{V} $ as $ \bt = (\bt_0, \bt_1, \bt_2, \bt_3, \ldots ) $ such that $ \bt_m \in V^{\otimes m} $, that is, $\bt_0$ is a scalar, $ \bt_1 $ is a vector, $ \bt_2 $ is a matrix, $ \bt_3 $ is a $ 3 $-tensor and so on.	
 Note that $\T{V}$ is again a linear space if we define addition and scalar multiplication as
 \begin{align}
 \bs + \bt = (\bs_m+\bt_m)_{m \ge 0} \in \T{V}\text{ and } c \cdot \bt = (c\bt_m)_{m \ge 0} \in \T{V}
 \end{align}
 for $\bs,\bt\in \T{V}$ and $c \in \R$.
	% If $ V $ has a basis\todo{remove?} $ e_1, \ldots, e_d $, then $ V^{\otimes m} $ is spanned by $ \{ e_{i_1}\otimes \cdots \otimes e_{i_m} \, : 1 \leq i_1, \ldots, i_m \leq d \} $. Using this, elements $ (\bt_m)_{m\geq0} $ of $ \T{V} $ can be written as infinite linear combinations of
  % \begin{align}
  % \{ e_{i_1}\otimes \cdots \otimes e_{i_m} \, : 1 \leq i_1, \ldots, i_m \leq d, m\geq 0 \}. 
  % \end{align}
  \begin{example}
    Let $V=\R^d$.
    For $\bv=(\bv_i)_{i=1,\ldots,d} \in \R^d$ consider $\bt=(\bv^{\otimes m})_{m \ge 0} \in \T{\R^d}$ where we denote for brevity
    \[\bt_m:=\bv^{\otimes m}:= \underbrace{\bv \otimes \cdots \otimes \bv}_{m \text{ many tensor products }\otimes} \in (\R^d)^{\otimes m} \text{ and by convention we set }\bv^{\otimes 0 }:=1 \in  (\R^d)^{\otimes 0}.\]
    That is, $\bt_1=\bv^{\otimes 1}=\bv$ is a $d$-dimensional vector, with the $i$ coordinate equal to $\bv_i$; $\bt_2=\bv^{\otimes 2}$ is $d\times d$-matrix with the $(i,j)$-coordinate equal to $\bv_i\bv_j$; $\bt_3=\bv^{\otimes 3}$ is degree $3$-tensor with the $(i,j,k)$-coordinate equal to $\bv_i\bv_j\bv_k$.
    In this special case, the element $\bt\in \T{\R^d}$ consists of entries $\bt_m=\bv^{\otimes m} \in (\R^{d})^{\otimes m}$ that are symmetric tensors, that is the $(i_1,\ldots,i_m)$-th coordinate is the same as the $(i_{\sigma(1)},\ldots,i_{\sigma(d)})$ coordinate if $\sigma$ is a permutation of $\{1,\ldots,d\}$. 
    However, we emphasize that in general an element of $\T{\R^d}$ does not need to be made up of symmetric tensors.
  \end{example}

  \paragraph{A product on $\T{V}$.}
  Key to our approach is that $\T{V}$ is not only a linear space, but what distinguishes it as a feature space for sequences is that it carries a non-commutative product.
  In other words, $\T{V}$ is not just a vector space but a (non-commutative) algebra (an algebra is a vector space where one can multiply elements).
  This is the so-called \emph{tensor convolution product} and defined as follows
	\begin{align}\label{eq:ncp}
    \bs \cdot \bt := \big( \sum_{i=0}^m \bs_i\otimes \bt_{m-i} \big)_{m\geq 0} = \big( 1, \bs_1 + \bt_1, \bs_2 + \bs_1\otimes \bt_1 + \bt_2,\ldots \big).
	\end{align}
%  Another way to motivate this product is to simply extend the tensor product $\otimes m$ by bilinearity.
 % \paragraph{The universal property of $\T{V}$.}
  In a precise mathematical sense, $ \T{V} $ is the most general algebra
  containing $ V $, namely \emph{$\T{V}$ is the ``free algebra'' that contains $V$}; see \cite[Chapter 16]{Lang} for the precise definition of free objects.

\section{A universal feature map for sequences of arbitrary length} \label{app:algebra}
Recall from Section \ref{sec:2}, that given a map defined on a set $\cX$
\[ \phi: \cX \to V \]
we lift $ \phi $ to a map $ \varphi: \cX \to \T{V} $ and define the Seq2Tens feature map for sequences in $\cX$ of arbitrary length as 
\[\Phi:\Seq{\cX} \rightarrow \T{V},\quad \bx \to \prod_{i=1}^T  \varphi(\bx_i).\] 
The remainder of Section~\ref{app:algebra} makes the following statements mathematically rigorous:
\begin{enumerate}[label=(\roman*)]
  \item 
    $\Phi$ is a universal feature map whenever $\phi$ is a universal (Section~\ref{app:phi universal} and \ref{app:proof universality}), 
  \item
    $\Phi$ makes sequences of different length comparable analogous to how $m$-mers make strings of different length comparable (Section~\ref{sec:different length}),  
  \item
    $\Phi$ converges to a well-defined object when we go from discrete to continuous time (sequences converge to paths) (Section~\ref{sec:convergence to sig}).
\end{enumerate}
\subsection{The universality of $\Phi$.}\label{app:phi universal}
% We say that a map $ f : \cX \to W $ from some topological space $ \cX $ to a vector space $ W $ is \emph{universal} if any continuous function on $ \cX $ is approximately a linear function on the image of $ f $. More formally
%Universal feature maps are guarantee that data in $ \cX $ can be analysed by linear maps on $ W $.

\begin{definition}
  Let $\cX$ be a topological space (the ``data space'') and $W$ a linear space (``the feature space'').
	We say that a function $ f: \cX \to W $ is universal (to $C_b(\cX)$) if the the set of functions
		\begin{align}
		\{ x \mapsto \langle \ell, f(x) \rangle \,:\, \ell \in W' \} \subseteq C_b(\cX)
		\end{align}
		is dense in $ C_b(\cX) $.
	\end{definition}
	
\begin{example} Classic examples of this in ML are 
		\begin{itemize}
			\item For $\cX\subset\R^d$ bounded and $W=\T{\R^d}$, the polynomial map $ p : \R^d \to \T{\R^d}, \bx \mapsto (1, \bx, \bx^{\otimes 2}, \bx^{\otimes 3}, \bx^{\otimes 4}, \ldots) $ is universal~\citep{rudin}.
			\item The $ 1 $-layer neural net map $ \bx \mapsto \prod_\theta N_\theta(\bx) $ where $ \theta $ runs over all configurations of parameters is universal under some very mild conditions~\citep{hornik1991approximation}.
		\end{itemize}
	\end{example}

  We now prove the main result of this section
  \begin{thm} \label{prop:univ}
  Let $ \varphi : \cX \to \T{V}, \bx \mapsto (\varphi_m(\bx))_{m \ge 0}, \varphi_m(\bx) \in V^{\otimes m}$ be such that:
  \begin{enumerate}
  	\item For any $ n\geq 1 $ the support of $(\varphi_0,\varphi_{1}, \ldots, \varphi_{m_1})^{\otimes n} $ and $ \varphi_{m_2} $ are disjoint if $ 1\leq m_1 < m_2 $. 
  	\item $ \varphi_0 =1 $ and $\varphi_1:\cX \rightarrow V$ is a bounded universal map with at least one constant term.
  \end{enumerate}
  Then
		\begin{align}\label{eq: phi}
		\Phi:\mathrm{Seq}(\cX) \to \T{V}, \quad (\bx_1,\ldots,\bx_L) \mapsto \prod_{i=1}^L \varphi(\bx_i)
		\end{align}
		is universal.
	\end{thm}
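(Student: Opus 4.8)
The plan is to prove universality one sequence-length at a time: fixing $L \ge 1$, I would show that the functions $\bx \mapsto \langle \ell, \Phi(\bx)\rangle$, as $\ell$ ranges over $\T{V}'$ and $\bx$ over $\cX^L$, are dense in $C_b(\cX^L)$; the full statement over $\Seq{\cX} = \bigsqcup_L \cX^L$ then follows from the disjoint-union structure, using the constant term of $\varphi_1$ to make the chosen functionals behave consistently across lengths. The guiding idea is a Stone--Weierstrass argument, but instead of first checking that these functionals form a subalgebra, I would directly exhibit a rich enough subfamily by isolating one well-behaved summand of $\Phi$.

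The key reduction is the following. Inside the degree-$L$ component of $\Phi(\bx) = \prod_{i=1}^L \varphi(\bx_i)$, the \emph{diagonal} term $\varphi_1(\bx_1) \otimes \cdots \otimes \varphi_1(\bx_L)$, in which each factor contributes exactly degree one, must be separated from every other contribution; any such other contribution is indexed by degrees $(d_1,\ldots,d_L)$ with $\sum_i d_i = L$ but not all equal to one, hence necessarily contains a factor $\varphi_m$ with $m \ge 2$ (together with lower-degree factors, recalling $\varphi_0 = 1$). This is exactly what the first hypothesis buys: the support of the diagonal term lies inside that of $(\varphi_0,\varphi_1)^{\otimes L}$, which is disjoint from the support of any $\varphi_{m_2}$ with $m_2 \ge 2$, so the diagonal term shares no coordinate with any off-diagonal term. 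Consequently, for every $\tilde\ell \in (V^{\otimes L})'$ I can define $\ell \in \T{V}'$ to vanish on all degrees $\ne L$ and, in degree $L$, to equal $\tilde\ell$ on $\mathrm{supp}(\varphi_1^{\otimes L})$ and zero elsewhere, so that $\langle \ell, \Phi(\bx)\rangle = \langle \tilde\ell,\, \varphi_1(\bx_1) \otimes \cdots \otimes \varphi_1(\bx_L)\rangle$. I expect this to be the main obstacle, since it requires pinning down the precise meaning of ``support'' and verifying carefully that condition 1 annihilates not only the remaining degree-$L$ terms but all lower-degree contributions as well.

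Finally I would deduce density of the isolated family $\{\bx \mapsto \langle \tilde\ell,\, \bigotimes_{i} \varphi_1(\bx_i)\rangle\}$ in $C_b(\cX^L)$. Taking rank-one functionals $\tilde\ell = \ell^1 \otimes \cdots \otimes \ell^L$ yields products $\prod_{i=1}^L \langle \ell^i, \varphi_1(\bx_i)\rangle$, and since $\varphi_1$ is universal each factor approximates an arbitrary $f_i \in C_b(\cX)$; finite linear combinations therefore approximate the algebraic tensor product $C_b(\cX)^{\otimes L}$, which is dense in $C_b(\cX^L)$ by Stone--Weierstrass. The two hypotheses of that theorem hold because universality of $\varphi_1$ forces its linear functionals to separate the points of $\cX$, while the constant term of $\varphi_1$ (with $\varphi_0 = 1$) supplies the constants; this same constant term is what allows shorter sequences to be absorbed uniformly when assembling the global statement over $\Seq{\cX}$.
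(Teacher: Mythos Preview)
Your approach is genuinely different from the paper's. The paper does not fix a length; it shows directly on all of $\Seq{\cX}$ that the family $\{\bx \mapsto \langle \ell, \Phi(\bx)\rangle\}$ is a point-separating subalgebra of $C_b(\Seq{\cX})$ and then invokes a Stone--Weierstrass theorem in the strict topology. The algebra property is obtained via the \emph{quasi-shuffle} product $\star$: one proves $\langle \ell_1, \Phi(\bx)\rangle\,\langle \ell_2, \Phi(\bx)\rangle = \langle \ell_1 \star \ell_2, \Phi(\bx^\star)\rangle$, and then uses universality of $\varphi_1$ to show that the extra $e_i\otimes e_j$ terms appearing in $\ell_1\star\ell_2$ are themselves approximable by linear functionals on $\varphi_1$, so one never leaves the original family. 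Condition~1 is used only to reduce to $\varphi = 1+\varphi_1$, and the constant term enters through an inductive injectivity lemma that yields point separation. Your diagonal-isolation trick is an elegant and more elementary route for a \emph{fixed} $L$: it extracts $\varphi_1(\bx_1)\otimes\cdots\otimes\varphi_1(\bx_L)$ cleanly and sidesteps the quasi-shuffle machinery altogether.

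The gap is the gluing across lengths. A functional $\ell$ supported in degree $L$ on $\mathrm{supp}(\varphi_1^{\otimes L})$ does vanish on sequences of length $L'<L$ (every degree-$L$ summand of $\Phi(\bx_1,\ldots,\bx_{L'})$ then contains some $\varphi_m$ with $m\ge 2$), but for $L'>L$ it does \emph{not}: the degree-$L$ component of $\Phi(\bx_1,\ldots,\bx_{L'})$ contains $\sum_{i_1<\cdots<i_L}\varphi_1(\bx_{i_1})\otimes\cdots\otimes\varphi_1(\bx_{i_L})$, so your $\ell$ picks up every length-$L$ subsequence of the longer input. Thus if you try $\ell=\sum_L \ell_L$, each $\ell_L$ contaminates all $\cX^{L'}$ with $L'>L$ by uncontrolled sub-sums, and the approximation there is destroyed. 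Your remark that the constant term makes functionals ``behave consistently across lengths'' does not resolve this: the constant term lets you \emph{read off} the length via $\langle e_0^{\otimes m},\Phi_m(\bx)\rangle=\binom{L}{m}$, but turning that into a length indicator is a nonlinear operation, not a linear functional. The argument can be repaired by a triangular correction (pick $\ell_1$ first, then choose $\ell_2$ to approximate $f|_{\cX^2}$ minus the restriction of $\langle\ell_1,\Phi(\cdot)\rangle$ to $\cX^2$, and so on) together with the observation that in the strict topology any compact set meets only finitely many $\cX^L$, but this is substantially more than you have sketched and is exactly the step the paper's all-at-once algebra argument is designed to avoid.
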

	\begin{remark}\label{rem:seq2tens examples}\,
		\begin{enumerate}[label=(\roman*)]
%      \item By taking $\varphi(\bx)=(1,\bx,0,0,\ldots)$ one generalizes $k$-mers, Idea 2 and Example~\ref{ex:represent}, 
    \item \label{itm:gen}
      Theorem\ref{prop:univ} implies that $\bx \mapsto \prod_{i=1}^L(1,\phi(\bx),0,\ldots)$ is universal whenever $\phi:\cX \rightarrow V$ is universal.
      This is the lift we use throughout the main text, see \eqref{eq:oneplus}.

    \item\label{itm:rp}
        By taking $\varphi : \R^d \to \T{\R^d}, \bx \mapsto (1,\bx,\frac{\bx^{\otimes 2}}{2!}, \frac{\bx^{\otimes{3}}}{3!},\ldots) $ one recovers Chen's signature \citep{Chen54, Chen57, Chen58} as used in rough paths.
      \item\label{itm:drp}
        By taking $\varphi: \R^d \to \T{V}$, $\varphi_1(\bx)$ the polynomial map and $\varphi_m(\bx)=0$ for $m\ge 2$ one recovers the iterated sums of \citet{Tapia19} and \citet{kiraly2016kernels}.   
      \item
        By taking each $ \varphi_m $ to be a trainable Neural Network one gets a trainable universal map $\Phi$ for sequences that includes all of the above,
		\end{enumerate}
	\end{remark}
%   We emphasize that the feature map \eqref{eq: phi} is very general since different choices of $\varphi$ yield very different algebraic structures; e.g.~for signature as in \ref{itm:rp} $\Phi$ takes values in the character group of the shuffle algebra \cite{Reutenauer93}; but even for \ref{itm:drp} the resulting structure is very different and was only recently studied, see \cite{Tapia19}. \ref{itm:gen} allows one to use a NN for $\varphi_1$ to learn the best structure implicitly from the data.
% Despite this generality, Theorem~\ref{prop:univ} guarantees that the resulting map $\Phi$ will be a universal feature map for sequences. 
  
\subsubsection{The algebra of linear functionals on $ \Phi $.}\label{app:quasi} 
The proof of Theorem~\ref{prop:univ} uses that if $ \varphi $ is universal, then the space of linear functionals on $ \Phi(\bx) $ forms a commutative algebra, that is for two linear functionals $\ell_1,\ell_2$ there exists another linear functional $\ell$ such that
\begin{align}
  \langle \ell_1,\Phi(\bx) \rangle \langle \ell_2,\Phi(\bx) \rangle =  \langle \ell,\Phi(\bx) \rangle. 
\end{align}
This new functional $\ell$ is constructed in explicit way from $\ell_1$ and $\ell_2$, with a so-called quasi-shuffle product.
In the remainder of this section \ref{app:phi universal}, we prepare and give the proof of Theorem~\ref{prop:univ}: subsection~\ref{app:quasi} introduces the quasi-shuffle product, and subsection~\ref{app:quasi} uses this to prove Theorem~\ref{prop:univ}.     

We spell out the proof for the case $ \varphi=(1,\phi, 0,0,\dots) \in \T{V}$ since this is the form we use in the main text, Proposition~\ref{thm:univ}, and the other cases follow similarly.  
In fact, without loss of generality we can take $ \phi = \mathrm{id} $ since this does not change the algebraic structure in any way.
That is, we take 
	\begin{align} \label{eq:phi}
	\Phi:\Seq{V}\to \T{V}, \quad \Phi(\bx_1, \ldots, \bx_L) := \prod_{i=1}^L \varphi(\bx_i)
	\end{align}
  with $\varphi(\bx)=(1,\bx,0,0,\ldots)$.
%	Since $ \Phi $ takes values in the linear space $ \T{V} $, we may pair it with linear functional as defined in \eqref{eq: coordinates functional}.
  By using the definition of the product in $\T{V}$ and expanding \eqref{eq:phi} we get  
	\begin{align}
	\Phi(\bx_1, \ldots, \bx_L) = (1, \sum_{i=1}^L \underbrace{\bx_i}_{V}, \sum_{1 \leq i_1 < i_2 \leq L} \underbrace{\bx_{i_1}\otimes \bx_{i_2}}_{V^{\otimes 2}}, \sum_{1 \leq i_1 < i_2 < i_3 \leq L} \underbrace{\bx_{i_1}\otimes \bx_{i_2}\otimes \bx_{i_3}}_{V^{\otimes 3}}, \cdots) 
	\end{align}
	In general, writing $ \Phi_m(\bx) $ for the projection of $ \Phi(\bx) $ onto $ V^{\otimes m} $, we have
	\begin{align} \label{eq:phim}
	\Phi_m(\bx) = \sum_{1 \le i_1 < \cdots < i_m \le L} \bx_{i_1} \otimes \cdots \otimes \bx_{i_m}.
	\end{align}
	So if $ \ell =(0,0,\ldots, \bv_1 \otimes \cdots \otimes \bv_m,0,\ldots) $ with $\bv_1,\ldots,\bv_m \in V$, then 
	\begin{align}
	&\langle \ell, \Phi_m(\bx) \rangle 
	= \langle \bv_1 \otimes \cdots \otimes \bv_m, \sum_{1 \le i_1 < \cdots < i_m \le L} \bx_{i_1} \otimes \cdots \otimes \bx_{i_m} \rangle \\
	&= \sum_{1 \le i_1 < \cdots < i_m \le L} \langle \bv_1 \otimes \cdots \otimes \bv_m, \bx_{i_1} \otimes \cdots \otimes \bx_{i_m} \rangle
	= \sum_{1 \le i_1 < \cdots < i_m \le L} \langle \bv_1 , \bx_{i_1} \rangle \cdots \langle \bv_m, \bx_{i_m} \rangle.
	\end{align}
	Hence $ \langle \ell, \Phi_m(\bx) \rangle $ can be computed efficiently without computing $ \Phi(\bx) $.
  Proposition \ref{prop:rank one} follows by linearity since by definition $\langle \ell, \Phi(\bx) \rangle = \sum_{m  \ge 0} \langle \ell_m, \Phi(\bx) \rangle$ and for each of the terms we can use the above formula when $\ell=(\ell_0,\ell_1,\ell_2,\ldots,\ell_M,0,\ldots)$ is of rank-$1$ and of degree $M$ (Definition~\ref{def:rank}).
%	\begin{example}
%		Assume that $ V = \R^d $, then
%		\begin{align}
%		\langle \ell, \Phi(\bx) \rangle = \sum_{i=1}^d \sum_{k=1}^L \ell_i\bx_k^i, \quad 
%		\langle \ell, \Phi(\bx) \rangle = \sum_{i,j=1}^d \sum_{1 \leq k_1 < k_2 \leq L} \ell_{ij}\bx_{k_1}^i\bx_{k_2}^{j}
%		\end{align}
%	\end{example}
	
	\paragraph{Non-linear functionals acting on $ \Phi $.} We now investigate what happens when one applies non-linear functions to $ \Phi(\bx) $. To do this, we first note that since $ \T{V} $ is a vector space, we may form the free algebra over $ \T{V} $, denoted by $ \T{\T{V}} $, or $ \Tra{2}{V} $. It may be decomposed as 
	\begin{align}
	\Tra{2}{V} = \prod_{n_1, \ldots, n_k \geq 0} V^{\otimes n_1} \big\vert \cdots \big\vert V^{\otimes n_k}
	\end{align}
	where we use the notation $ \otimes $ for the tensor product on $ V $ and the bar $ \vert $ for the tensor product on $ \T{V} $. See~\citet{Ebrahimi15} for more on $ \Tra{2}{V} $ and the bar notation. 
	
	\begin{definition}
		If $ x \in V $ is a vector, we denote by $ x^\star $ its extension 
		\begin{align}
		x^\star := (x^{\otimes m})_{m\geq 0} = (1, x, x^{\otimes 2}, x^{\otimes 3}, \ldots )
		\end{align}
		and if $ \bx = (\bx_1, \ldots, \bx_L) \in \Seq{V} $ is a sequence, then 
		\begin{align}
		\bx^\star := (\bx_1^\star, \ldots, \bx_L^\star) \in \Seq{\T{V}}
		\end{align}
	\end{definition}
%	\begin{definition}
%		If $ V $ has a basis $ e_1, \ldots, e_d $, then we denote by $ V^\star $ the extended $ d+d^2 $-dimensional vector space spanned by
%		\begin{align}
%		\{ e_i : 1\leq i \leq d \} \cup \{ [e_ie_j] : 1\leq i, j \leq d \}
%		\end{align}
%		where $ [e_ie_j] $ is a new basis vector independent of all other basis vectors. If $ x = (x_1, \ldots, x_d) $ is a vector, then we denote by $ x^\star $ the extended vector 
%		\begin{align}
%		x^\star = (x_1, \ldots, x_d, [x_1x_1], \ldots, [x_1x_d], \ldots, [x_dx_1], \ldots, [x_dx_d])
%		\end{align}
%	\end{definition}	
%	\begin{example}
%		$ (\R^2)^\star $ is spanned by the three vectors $ e_1, e_2, [e_1e_2] $.
%	\end{example}

	Since $ \bx^\star $ is a sequence in $ \T{V} $, we may compute $ \Phi(\bx^\star) $ which takes values in $ \T{\T{V}} = \Tra{2}{V} $.

	The reason for the above definition is that when products of linear functions in $ \T{V} $ act on $ \Phi(\bx) $, they may be described as linear functions in $ \Tra{2}{V} $ acting on $ \Phi(\bx^\star) $. That is, $ \T{V} $ is not big enough to capture all non-linear functions acting on $ \Phi(\bx) $, but $ \Tra{2}{V} $ is.
	
	%Note that since $ \bx^\star $ takes values in $ \T{V} $, rank $ 1 $ linear functionals on $ \Phi(\bx^\star) $ can be written as $ e_{i_1} \vert \cdots \vert e_{i_n} $. 
	
	\begin{definition}
		Assume that $ V $ has basis $ e_1, \ldots, e_d $. The \emph{quasi-shuffle} product 
		\begin{align}
		\star : \Tra{2}{V} \times \Tra{2}{V} \to \Tra{2}{V}
		\end{align}
		is defined inductively on rank $ 1 $ elements $ \ell_1 = e_{i_1} \vert \cdots \vert e_{i_m}, \ell_2 = e_{j_1} \vert \cdots \vert e_{j_n} $ by
		\begin{align}
		(\ell_1 \vert e_i)\star(\ell_2 \vert e_j) = (\ell_1 \vert e_i\star \ell_2)\vert e_j + (\ell_1\star \ell_2 \vert e_j)\vert e_i + (\ell_1\star \ell_2)\vert(e_i\otimes e_j).
		\end{align}
		By linearity $ \star $ extends to a product on all of $ \T{V} $.
	\end{definition}

	% \begin{example} \phantom{.}\\[-15pt]
	% 	\begin{align}
	% 	e_i \star e_j &= e_i\vert e_j + e_j\vert e_i + e_i\otimes e_j \\[5pt]
	% 	e_i \star (e_j\vert e_k) &= e_i\vert e_j\vert e_k + e_j\vert e_i\vert e_k + e_j\vert e_k\vert e_i \\
	% 	&+ (e_i\otimes e_j)\vert e_k + e_j\vert(e_i\otimes e_k) \\[5pt]
	% 	(e_i\vert e_j) \star (e_k\vert e_l) &= e_i\vert e_j\vert e_k\vert e_l + e_i\vert e_k\vert e_j\vert e_l + e_i\vert e_k\vert e_l\vert e_j \\ 
	% 	&+ e_k\vert e_i\vert e_j\vert e_l + e_k\vert e_i\vert e_l\vert e_j + e_k\vert e_l\vert e_i\vert e_j \\ 
	% 	&+ (e_i\otimes e_k)\vert e_j\vert e_l +e_i\vert (e_j\otimes e_k)\vert e_l+(e_i\otimes e_k)\vert e_l\vert e_j \\
	% 	&+e_k\vert (e_i\otimes e_l)\vert e_j +(e_i\otimes e_k)\vert (e_j\otimes e_l)
	% 	\end{align}
	% \end{example}
	
	\begin{lemma} \label{lem:shuff}
		The map $ \Phi $ satisfies the following
		\begin{align}
		\langle \ell_1, \Phi(\bx) \rangle \langle \ell_2, \Phi(\bx) \rangle = \langle \ell_1\star \ell_2, \Phi(\bx^\star) \rangle.
		\end{align}
	\end{lemma}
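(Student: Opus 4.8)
The plan is to reduce to rank-$1$ functionals and then induct on the length $L$ of the sequence, peeling off the last entry $\bx_L$ and checking that the resulting recursion is mirrored exactly by the defining recursion of the quasi-shuffle product $\star$. First I would invoke bilinearity: both the pairing $\langle\,\cdot\,,\Phi(\bx)\rangle$ and the product $\star$ are bilinear, so it suffices to treat rank-$1$ functionals $\ell_1 = e_{a_1}\otimes\cdots\otimes e_{a_m}$ and $\ell_2 = e_{b_1}\otimes\cdots\otimes e_{b_n}$; a general pair is a finite sum of these and both sides expand accordingly. I would also fix the identification under which a degree-$m$ functional $e_{a_1}\otimes\cdots\otimes e_{a_m}$ is read as the rank-$1$ element $e_{a_1}\vert\cdots\vert e_{a_m}$ of $\Tr{2}{V}$ with one letter per bar-slot. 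Since the degree-$1$ slot of $\bx_i^\star$ is $\bx_i$, formula \eqref{eq:phim} gives $\langle \ell, \Phi(\bx)\rangle = \langle \ell, \Phi(\bx^\star)\rangle$ for a single such $\ell$; this is what lets the same symbol $\ell_j$ act on $\Phi(\bx)\in\T{V}$ on the left and, through $\star$, on $\Phi(\bx^\star)\in\Tr{2}{V}$ on the right, and it also settles the base case $L=0$, where $\Phi=1$ and only empty words contribute to either side.

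The engine of the induction is the factorization $\Phi(\bx)=\Phi(\bx')\cdot(1+\bx_L)$ with $\bx':=(\bx_1,\ldots,\bx_{L-1})$. Extracting the degree-$m$ component and pairing against $\ell_1$ gives $\langle\ell_1,\Phi(\bx)\rangle=\langle\ell_1,\Phi(\bx')\rangle+\alpha\,\langle\ell_1^-,\Phi(\bx')\rangle$, where $\alpha:=\langle e_{a_m},\bx_L\rangle$ and $\ell_1^-$ drops the last tensor factor; the identical recursion holds for $\Phi(\bx^\star)=\Phi((\bx')^\star)\cdot(1+\bx_L^\star)$ in $\Tr{2}{V}$, with a last bar-slot $c$ contributing the scalar $\langle c,\bx_L^\star\rangle$. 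Expanding the left-hand product into its four terms and applying the inductive hypothesis at length $L-1$ to the pairs $(\ell_1,\ell_2)$, $(\ell_1,\ell_2^-)$, $(\ell_1^-,\ell_2)$, $(\ell_1^-,\ell_2^-)$ rewrites $\langle\ell_1,\Phi(\bx)\rangle\langle\ell_2,\Phi(\bx)\rangle$ as $\langle\ell_1\star\ell_2,\Phi((\bx')^\star)\rangle$ plus three correction terms weighted by $\alpha$, $\beta:=\langle e_{b_n},\bx_L\rangle$, and $\alpha\beta$.

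On the right-hand side I would first apply the defining recursion to write $\ell_1\star\ell_2 = (\ell_1^-\star\ell_2)\vert e_{a_m} + (\ell_1\star\ell_2^-)\vert e_{b_n} + (\ell_1^-\star\ell_2^-)\vert(e_{a_m}\otimes e_{b_n})$, and then peel off $\bx_L$ from $\Phi(\bx^\star)$ in each of the three summands. The three ``prefix'' pieces recombine, by the same recursion read backwards, into $\langle\ell_1\star\ell_2,\Phi((\bx')^\star)\rangle$, while the three ``last-slot'' pieces produce exactly the $\alpha$-, $\beta$-, and $\alpha\beta$-weighted terms found above. Here the only nonformal computation is the collision slot, $\langle e_{a_m}\otimes e_{b_n},\bx_L^\star\rangle=\langle e_{a_m},\bx_L\rangle\langle e_{b_n},\bx_L\rangle=\alpha\beta$, which is precisely why the symmetric powers appearing in $\bx^\star$ and the merge term $e_i\otimes e_j$ in $\star$ are needed. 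Matching the two expansions term by term then closes the induction.

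I expect the main obstacle to be bookkeeping rather than conceptual: aligning the parenthesization of the quasi-shuffle recursion with the four-term expansion of the product, and keeping straight the two distinct tensor structures, namely $\otimes$ within a single bar-slot versus $\vert$ between slots. The genuinely load-bearing step is the collision identity $\langle e_{a_m}\otimes e_{b_n},\bx_L^\star\rangle=\alpha\beta$, which forces $\Phi$ to be evaluated on $\bx^\star$ rather than $\bx$ and produces the quasi-shuffle rather than the ordinary shuffle. A direct combinatorial proof, classifying pairs of ordered index tuples by their interleaving-and-collision pattern, is also available and gives the same answer, but the inductive argument above is cleaner and follows the recursive definition of $\star$ verbatim.
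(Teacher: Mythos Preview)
Your proposal is correct, but it runs the induction along a different axis than the paper. The paper inducts on the \emph{degree of the functionals}: assuming the identity for $\ell_1,\ell_2$, it proves it for $\ell_1\vert e_i,\,\ell_2\vert e_j$ by expanding both factors via the recurrence $\langle \ell\vert e_i,\Phi(\bx)_L\rangle=\sum_{k=1}^{L-1}\langle\ell,\Phi(\bx)_k\rangle\langle e_i,\bx_{k+1}\rangle$, splitting the resulting double sum over $(k_1,k_2)$ according to $k_1<k_2$, $k_1>k_2$, $k_1=k_2$, and collapsing each piece via the inductive hypothesis; the three pieces then match the three summands in the defining recursion of $\star$. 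You instead fix $\ell_1,\ell_2$ and induct on the \emph{length $L$ of the sequence}, peeling off $\bx_L$ from $\Phi(\bx)=\Phi(\bx')(1+\bx_L)$ on both sides.

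Both routes are sound. The paper's is marginally more direct in that its three summands line up one-to-one with the quasi-shuffle recursion, whereas your route produces four terms on the left and must first regroup the three ``prefix'' pieces on the right back into $\langle\ell_1\star\ell_2,\Phi((\bx')^\star)\rangle$ before the remaining three ``last-slot'' pieces match. Conversely, your induction on $L$ has the advantage that the hypothesis is available for \emph{all} functional pairs at once at length $L-1$, so you never have to worry that objects like $\ell_1^-\star\ell_2$ are sums of rank-$1$ terms spread over several degrees; the paper's induction handles this implicitly by linearity but your scheme sidesteps it entirely. The collision identity $\langle e_{a_m}\otimes e_{b_n},\bx_L^\star\rangle=\alpha\beta$ that you single out as load-bearing is exactly the diagonal term $k_1=k_2$ in the paper's argument.
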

	\begin{proof}
		By writing out \eqref{eq:phim} in coordinates we get
		\begin{align}
		\langle e_{i_1} \vert \cdots \vert e_{i_m}, \Phi(\bx) \rangle = \sum_{1 \leq k_1 < \cdots < k_m \leq L} \langle e_{i_1}, \bx_{k_1} \rangle \cdots \langle e_{i_m}, \bx_{k_m} \rangle.
		\end{align}
	which shows that $ \Phi $ satisfies a recurrence equation.
		% \begin{align} \label{eq:rec}
		% \langle \ell \vert e_i, \Phi(\bx)_L \rangle = \sum_{k=1}^{L-1} \langle \ell,\Phi(\bx)_k\rangle \langle e_i, \bx_{k+1} \rangle
		% \end{align}
		% where we use the notation $ \Phi(\bx)_k = \Phi(\bx_1, \ldots, \bx_k) $.
    The proof follows by induction.
	\end{proof}

	% \begin{example}
	% 	For a simple example of this, note that
	% 	\begin{align}
	% 	\langle e_i, \Phi(\bx) \rangle \langle e_j, \Phi(\bx) \rangle = \big( \sum_{k=1}^L \langle e_i, \bx_k \rangle \big)\big( \sum_{k=1}^L \langle e_j, \bx_k \rangle \big) = \sum_{k_1, k_2=1}^L \langle e_i, \bx_{k_1} \rangle\langle e_j, \bx_{k_2} \rangle
	% 	\end{align}
	% 	One may also write this with the quasi shuffle product since
	% 	\begin{align}
	% 	&\langle e_i, \Phi(\bx) \rangle \langle e_j, \Phi(\bx) \rangle = \langle e_i\vert e_j+e_j\vert e_i+e_i\otimes e_j, \Phi(\bx) \rangle \\
	% 	&= \sum_{1 \leq k_1 < k_2 \leq L} \langle e_i, \bx_{k_1}\rangle\langle e_j, \bx_{k_2}\rangle + \sum_{1 \leq k_1 < k_2 \leq L} \langle e_j, \bx_{k_1}\rangle\langle e_i, \bx_{k_2}\rangle + \sum_{k=1}^L \langle e_i, \bx_{k}\rangle\langle e_j, \bx_{k}\rangle \\
	% 	&= \sum_{k_1, k_2=1}^L \langle e_i, \bx_{k_1} \rangle\langle e_j, \bx_{k_2} \rangle
	% 	%&\langle ij, \Phi_1 \rangle \langle k, \Phi_1 \rangle = \langle ij \sqcup k+[ik]j+i[kj], \Phi_1 \rangle
	% 	\end{align}
	% \end{example}
	
	% We refer to \cite{Tapia19} for more on quasi-shuffle algebras and their use for time warping invariant features of time-series.
	
	The space $ \Tra{2}{V} $ might seem very large and difficult to work with at first. The power of this representation comes from the fact that one may leverage this in proving strong statements about the original map $ \Phi : \Seq{V} \to \T{V} $, and we will use this in the next subsection.
	 
\subsection{Proof of Theorem~\ref{prop:univ}.}\label{app:proof universality}	
We prepare the proof of Theorem \ref{prop:univ} with the following lemma.
	\begin{lemma} \label{lem:inject}
		Let $ \mathrm{Seq}^1(V) $ be the set of all $ \bx = (\bx_1, \ldots, \bx_L) \in \Seq{V} $ with the form $ x_i = (1, x_i^1, \ldots, x_i^d) $. That is, all sequences where one of the terms is constant. Then the map
		\begin{align}
		\mathrm{Seq}^1(V) \to \T{V}, \quad (\bx_1, \ldots, \bx_L) \to \prod_{i=1}^L (1 + \bx_i)
		\end{align}
		is injective.
	\end{lemma}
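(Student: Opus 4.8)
The plan is to reconstruct the sequence $\bx = (\bx_1, \ldots, \bx_L)$ explicitly from its image $\Phi(\bx)$, using crucially that each entry carries a fixed coordinate. Write $e_0, e_1, \ldots, e_d$ for the standard basis of $V$, so that $\langle e_0, \bx_i \rangle = 1$ for all $i$ by definition of $\mathrm{Seq}^1(V)$, and abbreviate $x_i^k := \langle e_k, \bx_i \rangle$ for $k \ge 1$. Recall from \eqref{eq:phim} that $\Phi_m(\bx) = \sum_{1 \le i_1 < \cdots < i_m \le L} \bx_{i_1} \otimes \cdots \otimes \bx_{i_m}$, so that every rank-$1$ functional evaluates to a sum of products of coordinates, which is what makes the reconstruction tractable.

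First I would recover the length. Pairing $\Phi(\bx)$ with $e_0$ gives $\langle e_0, \Phi_1(\bx) \rangle = \sum_{i=1}^L \langle e_0, \bx_i\rangle = L$, so two sequences with equal images necessarily have the same length. This reduces injectivity to recovering the entries of a sequence of known, fixed length.

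Next I would recover each coordinate. The key computation is that for $1 \le m \le L$ the rank-$1$ functional $e_0^{\otimes(m-1)} \otimes e_k$ satisfies
\begin{align}
\langle e_0^{\otimes(m-1)} \otimes e_k, \Phi_m(\bx) \rangle = \sum_{1 \le i_1 < \cdots < i_m \le L} x_{i_m}^k = \sum_{j=1}^L \binom{j-1}{m-1} x_j^k,
\end{align}
where the second equality counts, for each choice $i_m = j$, the $\binom{j-1}{m-1}$ ways to select the preceding indices from $\{1, \ldots, j-1\}$. Collecting these values for $m = 1, \ldots, L$ gives a linear system in the unknowns $x_1^k, \ldots, x_L^k$ with matrix $M_{m,j} = \binom{j-1}{m-1}$. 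Since $M_{m,j} = 0$ whenever $j < m$ and $M_{m,m} = \binom{m-1}{m-1} = 1$, the matrix $M$ is triangular with unit diagonal, hence invertible. Inverting it recovers $x_1^k, \ldots, x_L^k$, and repeating over all $k \in \{1, \ldots, d\}$ reconstructs every $\bx_j = (1, x_j^1, \ldots, x_j^d)$, which establishes injectivity.

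The main obstacle is the reconstruction step itself: the image $\Phi(\bx)$ exposes only symmetric, order-weighted sums of the entries, and it is not a priori obvious that the individual $\bx_j$ can be disentangled from such sums. The observation that dissolves this difficulty is that the constant coordinate $e_0$ serves as a bookkeeping device, converting the choice of one distinguished index into the binomial weights $\binom{j-1}{m-1}$; the resulting triangular structure is precisely what guarantees unique recovery. The only point requiring care is the edge behaviour of the binomial coefficients (in particular $\binom{j-1}{m-1} = 0$ for $j < m$) needed to confirm triangularity, but this is routine.
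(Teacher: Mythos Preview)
Your proof is correct and takes a genuinely different route from the paper. The paper argues by induction on the sequence length $L$: given two sequences $\bx,\by$ of length $L+1$ with the same image, it invokes the inductive hypothesis to obtain functionals $\ell_1,\ell_2$ separating the length-$L$ prefixes and suffixes respectively, and then combines them via $\ell_1\otimes e_0 + \gamma\, e_0\otimes \ell_2$ for a suitable scalar $\gamma$ to produce a separating functional for the full sequences. Your argument, by contrast, is fully constructive: you first read off $L$ from $\langle e_0,\Phi_1(\bx)\rangle$, and then for each coordinate $k$ you extract the values $\langle e_0^{\otimes(m-1)}\otimes e_k,\Phi_m(\bx)\rangle$, $m=1,\ldots,L$, recognise these as a triangular linear system in $x_1^k,\ldots,x_L^k$ with unit diagonal, and invert. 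What your approach buys is an explicit reconstruction algorithm and a self-contained proof that avoids the inductive bookkeeping; in particular the paper's induction step tacitly assumes the prefixes and suffixes themselves differ, a case analysis you never need to enter. What the paper's approach buys is that it produces a single separating functional of controlled form, which is closer in spirit to the Stone--Weierstrass argument used immediately afterwards in the proof of Theorem~\ref{prop:univ}.
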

	\begin{proof}
		Follows from an induction argument over $ L $. For $ L=1 $ it is clear since
		\begin{align}
		\langle e_i, \Phi(\bx)\rangle = x_i.
		\end{align}
		Assume that it is true for $ L $, let $ \bx = (\bx_1, \ldots, \bx_{L+1}), \by = (\by_1, \ldots, \by_{L+1}) $, where we may assume that both have length $ L+1 $ by taking any number of components to be $ 0 $ if necessary. Let $ \ell_1 $ be some linear function that separates $ \Phi(\bx_1, \ldots, \bx_L) $ and $ \Phi(\by_1, \ldots, \by_L) $ and $ \ell_2 $ some linear function that separates $ \Phi(\bx_2, \ldots, \bx_{L+1}) $ and $ \Phi(\by_2, \ldots, \by_{L+1}) $, then by fixing some $ \gamma \in \R $:
		\begin{align}
		&\langle \ell_1\otimes e_0 + \gamma e_0\otimes \ell_2, \Phi(\bx)-\Phi(\by)\rangle \\
		&= \langle \ell_1, \Phi(\bx_1, \ldots, \bx_L)-\Phi(\by_1, \ldots, \by_L)\rangle + \gamma\langle \ell_2, \Phi(\bx_2, \ldots, \bx_{L+1})-\Phi(\by_2, \ldots, \by_{L+1})\rangle.
		\end{align}
		Since neither $ \langle \ell_1, \Phi(\bx_1, \ldots, \bx_L)-\Phi(\by_1, \ldots, \by_L)\rangle $ nor $ \langle \ell_2, \Phi(\bx_2, \ldots, \bx_{L+1})-\Phi(\by_2, \ldots, \by_{L+1})\rangle $ are $ 0 $ by assumption there exists some $ \gamma \in \R $ such that $ \langle \ell_1\otimes e_0 + \gamma e_0\otimes \ell_2, \Phi(\bx)-\Phi(\by)\rangle \not= 0 $. This shows the assertion.
	\end{proof}
  We now have everything to give a proof of Theorem~\ref{prop:univ}.
  %Recall that $ \Phi $ is the map $ \Phi(\bx) = \prod_{i=1}^L \varphi(\bx_i) $ and that we may assume that $ \varphi $ has the form $ \varphi(\bx) = 1 + \varphi_1(\bx) \in (V^{\otimes 0}, V)$, where $ \varphi_1 : \cX \to V $ is a bounded universal map.
	\begin{proof}[Proof of Theorem~\ref{prop:univ}]
		We will show that linear functionals on $ \Phi $ are dense in the \emph{strict topology} \citep{Giles71}. By Theorem \cite[Theorem 3.1]{Giles71} it is enough to show that linear functions on $ \Phi $ form an algebra since by Lemma \ref{lem:inject} they separates the points of $ \Seq{\cX} $. Since they clearly form a vector space it is enough to show that they are closed under point-wise multiplication. Let $ \ell_1,\ell_2 $ be two such, then by Lemma \ref{lem:shuff}
		\begin{align}
		\langle \ell_1, \Phi(\bx) \rangle \langle \ell_2, \Phi(\bx) \rangle = \langle \ell_1\star \ell_2, \prod_{i=1}^L \phi(\bx_i)^\star \rangle
		\end{align}
		so it is enough to show that $ \ell_1\star \ell_2 $ also is a linear function on $\Phi(\bx) $. Note that inductively it is enough to show that if $ e_i,e_j $ are unit vectors, then $ e_i\otimes e_j $ is a linear function on $\Phi(\bx) $. By assumption $ \phi $ is bounded and universal, so the continuous bounded function $ \bx \mapsto \langle e_i, \phi(\bx_k)\rangle \langle e_j,\phi(\bx_k)\rangle $ is approximately linear, and we may write
		\begin{align}
		\langle e_i, \phi(\bx_k)\rangle \langle e_j,\phi(\bx_k)\rangle = \langle h, \phi(\bx_k) \rangle + \varepsilon(\bx_k)
		\end{align}
		where $ \varepsilon(\bx_k) $ can be made arbitrarily small in the strict topology. The assertion now follows since
		\begin{align}
		&\langle e_i\otimes e_j, \prod_{i=1}^L \phi(\bx_i)^\star) \rangle = \sum_{k=1}^L \langle e_i, \phi(\bx_k)\rangle \langle e_j,\phi(\bx_k)\rangle = \sum_{k=1}^L \langle h, \phi(\bx_k)\rangle + \varepsilon(\bx_k) \\
		&= \langle e_{h}, \Phi(\bx) \rangle + n\max_{1\leq k\leq n}\varepsilon(\bx_k).
		\end{align}
	\end{proof}

\subsection{Seq2Tens makes sequences of different length comparable}\label{sec:different length}	
The simplest kind of a sequence is a string, that is a sequence of letters. Strings are determined by
\begin{enumerate}[label=(\roman*)]
\item  
  what letters appear in them,
  \item 
    in what order the letters appear.
\end{enumerate}
A classical way to produce a graded description of strings is by counting their non-contiguous sub-strings.
These are the so-called \emph{$ m $-mers}; % which count all the non-contiguous sub-strings of length $ \leq m $.
for example,
\begin{align}
 \text{The string "aabc" has the 2-mers }  \{  aa, ab, ac, ab, ac, bc\}.
\end{align}
Measuring similarity between strings by counting how many substrings they have in common is a sensible similarity measure, even if the strings have different length; we refer to ~\citet{Leslie04} for applications and to \cite[Chapter 11]{Taylor00} for detailed introduction to the use of substrings in ML. 
% The resulting feature map that maps a string to vector indexed by $k$-mers can be kernelized and has found many applications from text analysis to bioinformatics.  

Our Seq2Tens feature map can be regarded as a vast generalization of such subpattern matching: if $\Phi(\bx) = (\Phi_m(\bx))_{m \ge 0} \in \T{V}$ then the tensor $\Phi_m(\bx) \in V^{\otimes m}$ represents non-contiguous sub-sequences of $\bx=(\bx_1,\ldots,\bx_T)$ of length $m$ and thus comparing $\Phi_m(\bx)$ and $\Phi_m(\by)$ is meaningful even when $\bx$ and $\by$ are of different length. 
It is instructive to spell out in detail how $m$-mers are a special case of Seq2Tens, Example \ref{ex: strings}, and how it generalizes, Example~\ref{ex: signature pattern}.
\begin{example}\label{ex: strings}
 Let $\cX=\{a,b,c\}$ and $\phi:\cX \to V=\R^3$ defined by mapping $a,b,c \in \cX$ to the unit vectors $e_1,e_2,e_3 \in V$, so that $\varphi(a)=(1,e_1,0,0,...) \in \T{V}$, $\varphi(b)=(1,e_2,0,\ldots) \in \T{V}$, and $\varphi(c)=(1,e_3,0,\ldots) \in \T{V}$. 
For the sequence $\bx=(a,a,b,c) \in \Seq{\cX}$ we get 
  \begin{align}
    \Phi(\bx) =& \varphi(a)\varphi(a)\varphi(b)\varphi(c)\\
              =& (1,e_1,0,0,\ldots,) \cdot (1,e_1,0,0,\ldots) \cdot (1,e_2,0,0,\ldots)\cdot (1,e_3,0,0,\ldots)\\
              =& (1, \underbrace{2e_1 + e_2 + e_3}_{\in V},\underbrace{ e_1\otimes e_1 + 2 e_1 \otimes e_2+ 2e_1 \otimes e_3 + e_2 \otimes e_3}_{\in V^{\otimes 2}},\\
                & \underbrace{e_1\otimes e_1 \otimes e_2+e_1\otimes e_1 \otimes e_3, e_1 \otimes e_1 \otimes e_3 \otimes e_4}_{\in V^{\otimes 3}},\underbrace{e_1 \otimes e_1 \otimes e_2 \otimes e_3}_{\in V^{\otimes 4}},0 , \ldots ).
  \end{align}
We see that the tensor $\Phi(\bx) \in V^{\otimes m}$ of degree $m$ contains the
$m$-mers, that is the coordinates of $\Phi_m(\bx)$  count how often a subsequence of length $m$ in $\bx=(a,a,b,c)$ appears; e.g.~the coordinate $e_1 \otimes e_2$ of $\Phi_2(\bx)$ equals $2$ because the substring ``a,b'' appears twice but the $e_1\otimes e_1$ coordinate of $\Phi_2(\bx)$ equals $1$ since the substring ``a,a'' appears only once in $(a,a,b,c)$, etc.  
Similarly, the only non-zero coordinate of $\Phi_4(\bx)$ is $e_1 \otimes e_1 \otimes e_2 \otimes e_3$ since "a,a,b,c" is the only substring of $(a,a,b,c)$ and consequently all coordinate of $\Phi_m(\bx)$ are $0$ for $m \ge 5$. 
\end{example}
An analogous calculation shows that for continuous domain such as $\cX=\R^d$ the coordinates of $\Phi_m(\bx) \in (\R^d)^{\otimes m}$ measure movement patterns in these coordinates. 
Section \ref{sec:convergence to sig} below shows that this interpretation also holds when we go from discrete time (sequences) to continuous time (paths); Example \ref{ex: signature pattern}.
\subsection{Convergence from discrete to continuous time}\label{sec:convergence to sig}	
A common source of of sequence data is to measure a quantity $(x(t))_{t \in [0,T]}$ that evolves in continuous time at fixed times $t_1,\ldots,t_L$ to produce a sequence $\bx=(x(t_1),\ldots,x(t_L))  \in \Seq{\cX}$.
Often the measurements are of high-frequency ($|t_{i+1}-t_i| \to 0$ and $L \to \infty$) and is interesting to understand how our Seq2Tens approach behaves in this limiting case.
As it turns out, when combined with taking finite differences\footnote{This is necessary to counteract the fact the magnitude of $ \Phi $ grows with the sum of the elements in the sequence} our feature map $\Phi(\bx)$ converges to a classical object in analysis, the so-called \emph{signature of the path} $x$ in this limit, \citep{primer2016} .
For brevity, we spell it out here for smooth paths and with the lift $ \varphi(x) = (1, \phi(x), 0, \ldots) $, but readers familiar with rough paths will notice that the result generalizes even to non-smooth paths such as Brownian motion.
\begin{proposition}\label{prop:convergence to signature}
  Let $x \in C^1([0,1],\R^d)$ and for every $L \ge 1$ define 
  \begin{align}
  \bx^L:=(x(t_0^L), x(t_1^L) - x(t_0^L), \ldots, x(t_L^L)-x(t_{L-1}^L))
  \end{align}
  where $t_i:= \frac{i}{L}$ for $i=0,\ldots,L$.
  Then for every $m \ge 0$ we have
  \begin{align}
    \Phi_m(\bx^L) \to \int_{0 \le t_1 \le \cdots \le t_m \le 1} \frac{dx}{dt}(t_1) \otimes \cdots \otimes \frac{dx}{dt}(t_m) dt_1 \cdots d t_m  \text{ as } L \to \infty. 
  \end{align}
\end{proposition}
\begin{proof}
  Using the recurrence relation from the proof of Lemma \ref{lem:shuff} we may write
\begin{align}
\Phi_m(\bx^L) &= \sum_{i=1}^L \Phi_{m-1}(\bx_i^L) \otimes \big(x(t_{i+1}) - x(t_i)\big) 
\end{align}
where $ \bx_i^L $ denotes the sequence $ (x(t_0^L), x(t_1^L) - x(t_0^L), \ldots, x(t_i^L)-x(t_{i-1}^L)) $. By a Taylor expansion this is equal to
\begin{align}
\Phi_m(\bx^L) &= \sum_{i=1}^L \Phi_{m-1}(\bx_i^L) \otimes \frac{dx}{dt}(t_i) (t_{i+1} - t_i) + O(1/L),
\end{align}
so by unravelling the recurrence relation we may write
\begin{align}
\Phi_m(\bx^L) &= \sum_{1 \leq i_1 < \ldots < i_m \leq L} \frac{dx}{dt}(t_{i_1}) \otimes \cdots \otimes \frac{dx}{dt}(t_{i_m}) (t_{{i_1}+1} - t_{i_1})\cdots(t_{{i_m}+1} - t_{i_m}) + O(1/L),
\end{align}
which is a Riemann sum, and thus converges to the asserted limit.
\end{proof}
The interpretation of $\Phi_m(\bx)$ as counting sub-patterns remains even in the continuous time case: 
\begin{example}\label{ex: signature pattern}
Let $x \in C^1([0,1],\R^d)$ and $e_1,\ldots,e_d$ the standard basis of $V=\R^d$.  
For $m=1$ the $e_1$ coordinate equals 
\[\langle e_1, \int_{t = 0}^1 \frac{dx}{dt}(t) dt \rangle =  \langle e_1, x(1)-x(0) \rangle\]
which measures the total movement of the path $ x $ in the direction $ e_1 $.
Analogously, for $m=2$ the $e_1 \otimes e_2$ coordinate equals
\begin{align}
  \langle e_1 \otimes e_2, \int_{0 \le t_1 \le \cdots \le t_m \le 1} \frac{x(t_1)}{dt_1} \otimes \frac{x(t_2)}{dt_2} dt_1 dt_2= \int_{0 \le t_1 \le t_2 \le 1} \langle e_1, \frac{x(t_1)}{dt_1} \rangle \langle e_2, \frac{x(t_2)}{dt_2}\rangle dt_1 dt_2.   
\end{align}
which measure the number of ordered tuples $(t_1,t_2)$, $t_1 < t_2$, such that $x$ moves in direction $e_1$ at time $t_1$ and subsequently in direction $e_2$ at time $t_2$.
\end{example}

\section{Stacking sequence-to-sequence transforms}\label{app:iterated}
% \paragraph{Seq2Tens in a nutshell.}	
% Even computing just the first $M$ entries of $\Phi=(\Phi_m(\bx))_{m \ge 0} \in \T{\R^d}$ is very costly since it involves $O(d^m)$ coordinates. 
% The key of our approach is that \emph{we never compute} $\Phi(\bx)$ (or the first $m$ levels)! 
% Instead, we use formula~\eqref{eq:sum} as derived in Section~\ref{app:quasi}, that shows that LR functionals $\langle \ell, \Phi(\bx)  \rangle$ are extremely fast to evaluate. 
% In the remainder of this Section~\ref{app:iterated} we formalize idea 3, in the sense that we prove that compositions of very simple objects (LR functionals) efficiently approximate complicated objects (full rank functionals of sequences), Proposition~\ref{prop:half}.  
	As $\Phi$ applies to sequences of any length, we may use it to map the original sequence to another sequence in feature space,
	\begin{align} 
	\Seq{V} &\rightarrow \Seq{\T{V}}\\
	(\bx_1,\bx_2,\bx_3,\ldots, \bx_L) &\mapsto \Big(\Phi(\bx_1),\Phi(\bx_1, \bx_2), \Phi(\bx_1, \bx_2, \bx_3), \ldots, \Phi(\bx_1,\ldots,\bx_L) \Big).
	\end{align}
	Since $\T{V}$ is again a linear space, we can repeat this procedure to map $\Seq{\T{V}} $ to $ \Seq{\T{\T{V}}}$.
	By repeating this $D$ times, we have constructed sequence-to-sequence transforms 
	\begin{align}\label{eq:seq2seq}
	\Seq{V}=\Seq{\T{V}} \rightarrow \Seq{\T{\T{V}}} \rightarrow \cdots \rightarrow \Seq{\underbrace{T(\cdots T}_{D \text{ times }}(V)\cdots )}.
	\end{align}
	
	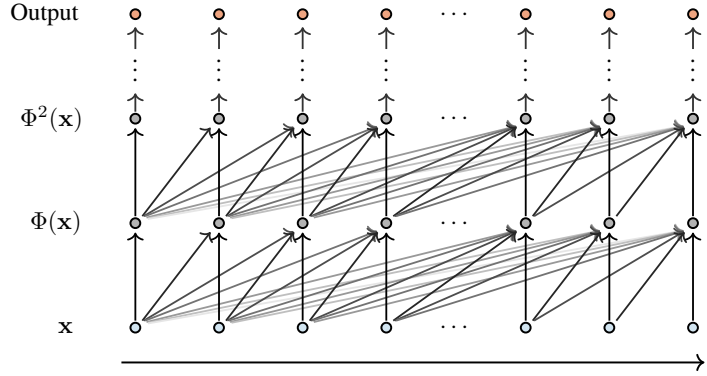
\begin{figure}
		\centering
		\resizebox{.7\textwidth}{!}{
				\begin{tikzpicture}[shorten >=1pt,draw=black!50, node distance=\layersep, x=2cm]
				\node[text width=2em, text centered] at (-.5,0) {$ \mathbf{x} $};
				
				\draw[black, thick,  ->] (-.1,-.5) -- (4.1,-.5);
				
				\draw[black, fill=blue!30, thick] (0,0) circle (2pt);
%				\draw[black, ->] (0.1,0) -- (0.5,0);
				\draw[black, fill=blue!30, thick] (.6,0) circle (2pt);
%				\draw[black, ->] (0.7,0) -- (1.1,0);
				\draw[black, fill=blue!30, thick] (1.2,0) circle (2pt);
%				\draw[black, ->] (1.3,0) -- (1.7,0);
				\draw[black, fill=blue!30, thick] (1.8,0) circle (2pt);
				
				\node[text width=2em, text centered] at (2.3,0) {$ \cdots $};
				
				\draw[black, fill=blue!30, thick] (2.8,0) circle (2pt);
%				\draw[black, ->] (2.9,0) -- (3.3,0);
				\draw[black, fill=blue!30, thick] (3.4,0) circle (2pt);
%				\draw[black, ->] (3.5,0) -- (3.9,0);
				\draw[black, fill=blue!30, thick] (4,0) circle (2pt);

				\node[text width=2em, text centered] at (-.58,1.5) {$ \Phi(\mathbf{x}) $};
				
				\draw[black, fill=black!30, thick] (0,1.5) circle (2pt);
%				\draw[black, ->] (0.1,1.5) -- (0.5,1.5);
				\draw[black, ->] (0,0.1) -- (0,1.4);
				\draw[black!85, ->] (0.05,0.1) -- (0.55,1.4);
				\draw[black!70, ->] (0.06,0.1) -- (1.15,1.4);
				\draw[black!55, ->] (0.07,0.1) -- (1.75,1.4);
				\draw[black!40, ->] (0.08,0.08) -- (2.75,1.4);
				\draw[black!25, ->] (0.09,0.07) -- (3.35,1.4);
				\draw[black!10, ->] (0.09,0.06) -- (3.95,1.4);
				
				\draw[black, fill=black!30, thick] (.6,1.5) circle (2pt);
%				\draw[black, ->] (0.7,1.5) -- (1.1,1.5);
				\draw[black, ->] (.6,0.1) -- (0.6,1.4);
				\draw[black!85, ->] (.65,0.1) -- (1.15,1.4);
				\draw[black!70, ->] (0.66,0.1) -- (1.75,1.4);
				\draw[black!55, ->] (0.67,0.1) -- (2.75,1.4);
				\draw[black!40, ->] (0.68,0.08) -- (3.35,1.4);
				\draw[black!25, ->] (0.69,0.07) -- (3.95,1.4);
				
				\draw[black, fill=black!30, thick] (1.2,1.5) circle (2pt);
%				\draw[black, ->] (1.3,1.5) -- (1.7,1.5);
				\draw[black, ->] (1.2,0.1) -- (1.2,1.4);
				\draw[black!85, ->] (1.25,0.1) -- (1.75,1.4);
				\draw[black!70, ->] (1.26,0.1) -- (2.75,1.4);
				\draw[black!55, ->] (1.27,0.1) -- (3.35,1.4);
				\draw[black!40, ->] (1.28,0.08) -- (3.95,1.4);
				
				\draw[black, fill=black!30, thick] (1.8,1.5) circle (2pt);
				\draw[black, ->] (1.8,0.1) -- (1.8,1.4);
				\draw[black!85, ->] (1.85,0.1) -- (2.75,1.4);
				\draw[black!70, ->] (1.86,0.1) -- (3.35,1.4);
				\draw[black!55, ->] (1.87,0.1) -- (3.95,1.4);
				
				\node[text width=2em, text centered] at (2.3,1.5) {$ \cdots $};
				
				\draw[black, fill=black!30, thick] (2.8,1.5) circle (2pt);
%				\draw[black, ->] (2.9,1.5) -- (3.3,1.5);
				\draw[black, ->] (2.8,0.1) -- (2.8,1.4);
				\draw[black!85, ->] (2.85,0.1) -- (3.35,1.4);
				\draw[black!70, ->] (2.86,0.1) -- (3.95,1.4);
				
				\draw[black, fill=black!30, thick] (3.4,1.5) circle (2pt);
%				\draw[black, ->] (3.5,1.5) -- (3.9,1.5);
				\draw[black, ->] (3.4,0.1) -- (3.4,1.4);
				\draw[black!85, ->] (3.45,0.1) -- (3.95,1.4);
				
				\draw[black, fill=black!30, thick] (4,1.5) circle (2pt);
				\draw[black, ->] (4,0.1) -- (4,1.4);

				\node[text width=2em, text centered] at (-.65,3) {$ \Phi^2(\mathbf{x}) $};
				
				\draw[black, fill=black!30, thick] (0,3) circle (2pt);
%				\draw[black, ->] (0.1,3) -- (0.5,3);
				\draw[black, ->] (0,1.6) -- (0,2.9);
				\draw[black!85, ->] (0.05,1.6) -- (0.55,2.9);
				\draw[black!70, ->] (0.06,1.6) -- (1.15,2.9);
				\draw[black!55, ->] (0.07,1.6) -- (1.75,2.9);
				\draw[black!40, ->] (0.08,1.58) -- (2.75,2.9);
				\draw[black!25, ->] (0.09,1.57) -- (3.35,2.9);
				\draw[black!10, ->] (0.09,1.56) -- (3.95,2.9);
				
				\draw[black, fill=black!30, thick] (.6,3) circle (2pt);
%				\draw[black, ->] (0.7,3) -- (1.1,3);
				\draw[black, ->] (.6,1.6) -- (0.6,2.9);
				\draw[black!85, ->] (.65,1.6) -- (1.15,2.9);
				\draw[black!70, ->] (0.66,1.6) -- (1.75,2.9);
				\draw[black!55, ->] (0.67,1.6) -- (2.75,2.9);
				\draw[black!40, ->] (0.68,1.58) -- (3.35,2.9);
				\draw[black!25, ->] (0.69,1.57) -- (3.95,2.9);
				
				\draw[black, fill=black!30, thick] (1.2,3) circle (2pt);
%				\draw[black, ->] (1.3,3) -- (1.7,3);
				\draw[black, ->] (1.2,1.6) -- (1.2,2.9);
				\draw[black!85, ->] (1.25,1.6) -- (1.75,2.9);
				\draw[black!70, ->] (1.26,1.6) -- (2.75,2.9);
				\draw[black!55, ->] (1.27,1.6) -- (3.35,2.9);
				\draw[black!40, ->] (1.28,1.58) -- (3.95,2.9);
				
				\draw[black, fill=black!30, thick] (1.8,3) circle (2pt);
				\draw[black, ->] (1.8,1.6) -- (1.8,2.9);
				\draw[black!85, ->] (1.85,1.6) -- (2.75,2.9);
				\draw[black!70, ->] (1.86,1.6) -- (3.35,2.9);
				\draw[black!55, ->] (1.87,1.6) -- (3.95,2.9);
				
				\node[text width=2em, text centered] at (2.3,3) {$ \cdots $};
				
				\draw[black, fill=black!30, thick] (2.8,3) circle (2pt);
%				\draw[black, ->] (2.9,3) -- (3.3,3);
				\draw[black, ->] (2.8,1.6) -- (2.8,2.9);
				\draw[black!85, ->] (2.85,1.6) -- (3.35,2.9);
				\draw[black!70, ->] (2.86,1.6) -- (3.95,2.9);
				
				\draw[black, fill=black!30, thick] (3.4,3) circle (2pt);
%				\draw[black, ->] (3.5,3) -- (3.9,3);
				\draw[black, ->] (3.4,1.6) -- (3.4,2.9);
				\draw[black!85, ->] (3.45,1.6) -- (3.95,2.9);
				
				\draw[black, fill=black!30, thick] (4,3) circle (2pt);
				\draw[black, ->] (4,1.6) -- (4,2.9);
				
				\draw[black!80, ->] (4,3.1) -- (4,3.4);
				\node[text width=2em, text centered] at (4,3.8) {$ \vdots $};
				\draw[black!80, ->] (3.4,3.1) -- (3.4,3.4);
				\node[text width=2em, text centered] at (3.4,3.8) {$ \vdots $};
				\draw[black!80, ->] (2.8,3.1) -- (2.8,3.4);
				\node[text width=2em, text centered] at (2.8,3.8) {$ \vdots $};
				\draw[black!80, ->] (1.8,3.1) -- (1.8,3.4);
				\node[text width=2em, text centered] at (1.8,3.8) {$ \vdots $};
				\draw[black!80, ->] (1.2,3.1) -- (1.2,3.4);
				\node[text width=2em, text centered] at (1.2,3.8) {$ \vdots $};
				\draw[black!80, ->] (0.6,3.1) -- (0.6,3.4);
				\node[text width=2em, text centered] at (.6,3.8) {$ \vdots $};
				\draw[black!80, ->] (0,3.1) -- (0,3.4);
				\node[text width=2em, text centered] at (0,3.8) {$ \vdots $};

				\node[text width=2em, text centered] at (-.72,4.5) {Output};
				
				\draw[black, fill=red!40!yellow!60, thick] (0,4.5) circle (2pt);
%				\draw[black, ->] (0.1,4.5) -- (0.5,4.5);
				\draw[black!80, ->] (0,4) -- (0,4.35);
				\draw[black, fill=red!40!yellow!60, thick] (.6,4.5) circle (2pt);
%				\draw[black, ->] (0.7,4.5) -- (1.1,4.5);
				\draw[black!80, ->] (0.6,4) -- (0.6,4.35);
				\draw[black, fill=red!40!yellow!60, thick] (1.2,4.5) circle (2pt);
%				\draw[black, ->] (1.3,4.5) -- (1.7,4.5);
				\draw[black!80, ->] (1.2,4) -- (1.2,4.35);
				\draw[black, fill=red!40!yellow!60, thick] (1.8,4.5) circle (2pt);
				\draw[black!80, ->] (1.8,4) -- (1.8,4.35);
				
				\node[text width=2em, text centered] at (2.3,4.5) {$ \cdots $};
				
				\draw[black, fill=red!40!yellow!60, thick] (2.8,4.5) circle (2pt);
%				\draw[black, ->] (2.9,4.5) -- (3.3,4.5);
				\draw[black!80, ->] (2.8,4) -- (2.8,4.35);
				\draw[black, fill=red!40!yellow!60, thick] (3.4,4.5) circle (2pt);
%				\draw[black, ->] (3.5,4.5) -- (3.9,4.5);
				\draw[black!80, ->] (3.4,4) -- (3.4,4.35);
				\draw[black, fill=red!40!yellow!60, thick] (4,4.5) circle (2pt);
				\draw[black!80, ->] (4,4) -- (4,4.35);
				\end{tikzpicture}
			}
		\caption{The sequence-to-sequence transformation. The bottom row is the original sequence $ \mathbf{x} $ and subsequent ones apply the map $ \Phi $ in the sequence-to-sequence manner.}
		\label{fig:seq2seq}
	\end{figure}
	
	See Figure \ref{fig:seq2seq} for an illustration. We emphasize that in each step of the iteration, the newly created sequence evolves in a much richer space than in the previous step. To make this precise we now introduce the \emph{higher rank free algebras}.
	
\paragraph{Higher rank free algebras.}	
	Just like in Appendix \ref{app:algebra} we need to enlarge the ambient space $ \T{V} $. Recall that we defined $ \Tra{2}{V} := \T{\T{V}} $. This construction can be iterated indefinitely and leads to the higher rank free algebras, recursively defined as follows:
	\begin{definition}
		Define the spaces
		\begin{align}
		\Tra{0}{V} &= V, \quad 
		\Tra{D}{V} = \prod_{m\geq 0} \big( \Tra{D-1}{V}\big)^{\otimes m}. 
		\end{align}
	\end{definition}
	We use the notation $ \otimes_{(D)} $ for the tensor product on $ \Tra{D-1}{V} $ which makes $ \big(\Tra{D}{V}, +, \otimes_{(D)}\big) $ into a multi-graded algebra over $ \vs $. See \citet{BCO20} for more on this iterated construction.
\paragraph{Half-shuffles.}	
	By iterating the sequence-to-sequence $ D $ times, one gets a map 
	\begin{align}
	\Phi^D : \Seq{V} \to \Tra{D}{V}.
	\end{align} 
	These are very large spaces, but as we will see in Proposition \ref{prop:half} below, linear functionals on the full map $ \Seq{V} \to \Tra{D}{V} $ can be de-constructed into so called \emph{half-quasi-shuffle} on the original map $ \Phi : \Seq{V} \to \T{V} $.
	
	Just like in Appendix \ref{app:algebra} we consider the sequence $ \bx $ as its extension $ \bx^\star $ taking values in $ \T{V} $. Hence linear functionals can be written as linear combinations of elements of the form $ e_{i_1} \otimes_{(2)} \cdots \otimes_{(2)} e_{i_n} $.
	
	\begin{definition}
		The half-quasi-shuffle product is defined on rank $ 1 $ tensors by
		\begin{align}
		\ell_1\prec (\ell_2\otimes_{(2)} e_i) = (\ell_1\star \ell_2)\otimes_{(2)} e_i 
		\end{align}
		and extends by bi-linearity to a map $ \Tra{2}{V}\times \Tra{2}{V} \to \Tra{2}{V} $.
	\end{definition}
	
	Proposition \ref{prop:half} shows that by composing $ \Phi $ with itself, low degree tensors on the second level can be rewritten as higher degree tensors on the first level. This indicates that iterated compositions of $ \Phi $ can be much more efficient than computing everything on the first level. We show this for the first level, but by iterating the statement it can be applied for any number $ D \geq 2 $. 
	
%	For the next statement, let $ \ell $ be a linear functional on $ \Tra{2}{V} $, then the unit vector $ e_\ell $ is a linear functional on $ \Tra{3}{V} $.
%  We again assume without loss of generality that $ \varphi(\bx) = (1, \bx,0,\ldots) $.
	
	\begin{proposition} \label{prop:half}
		Let $ \bphi $ be the sequence-to-sequence transformation:
		\begin{align}
		\Seq{V} \to \Seq{\T{V}}, \quad (\bx_1, \ldots, \bx_L) \mapsto (\Phi(\bx_1), \ldots, \Phi(\bx, \ldots, \bx_L))
		\end{align}
		and let $ \Delta (\bx_0, \ldots, \bx_L) = (\bx_1-\bx_0, \ldots, \bx_L-\bx_{L-1}) $. Then
		\begin{align}
		\langle e_{\ell_1}\otimes_{(3)} e_{\ell_2}, \Phi(\Delta\bphi(\bx)) \rangle = \langle \ell_1 \prec \ell_2, \Phi(\bx)\rangle
		\end{align}
	\end{proposition}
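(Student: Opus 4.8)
The plan is to reduce both sides to explicit sums over time indices and then match them by an induction that peels off the last letter of $\ell_2$, in exactly the style of the proof of Lemma~\ref{lem:shuff}, letting the quasi-shuffle $\star$ absorb the combinatorics.

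First I would unwind the left-hand side. Since $\bphi(\bx)_k = \Phi(\bx_1,\ldots,\bx_k)$ and $\Delta$ takes consecutive differences, the increment $\by_k := \Delta\bphi(\bx)_k = \Phi(\bx)_k - \Phi(\bx)_{k-1} = \Phi(\bx)_{k-1}\cdot\bx_k$ has components
\[
(\by_k)_m = \sum_{1\le i_1<\cdots<i_{m-1}<k}\bx_{i_1}\otimes\cdots\otimes\bx_{i_{m-1}}\otimes\bx_k,
\]
i.e.\ $\by_k$ collects precisely the length-$m$ subsequences of $\bx$ whose largest index equals $k$. Applying the explicit formula \eqref{eq:phim} for $\Phi$ at the outer level and pairing against the degree-$2$ functional $e_{\ell_1}\otimes_{(3)} e_{\ell_2}$ yields
\[
\langle e_{\ell_1}\otimes_{(3)} e_{\ell_2}, \Phi(\Delta\bphi(\bx))\rangle = \sum_{1\le k_1<k_2\le L}\langle\ell_1,\by_{k_1}^\star\rangle\,\langle\ell_2,\by_{k_2}^\star\rangle,
\]
so that $\ell_1$ and $\ell_2$ are anchored at two ordered times $k_1<k_2$, each evaluated on the subsequences ending at its anchor.

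Next I would run the induction on $\ell_2$. Writing $\ell_2 = \ell_2'\,\vert\,e_i$ and using the recurrence $\langle \mu\,\vert\,e_i,\Phi(\bx)_L\rangle = \sum_k\langle\mu,\Phi(\bx)_{k-1}\rangle\langle e_i,\bx_k\rangle$ established in the proof of Lemma~\ref{lem:shuff}, the last letter $e_i$ is forced onto the global maximum time $k_2$, while $\ell_1$ together with $\ell_2'$ must be distributed over the strictly earlier stream $\Phi(\bx)_{k_2-1}$. The decisive step is to read the two earlier factors as a product of linear functionals on this common stream and to invoke Lemma~\ref{lem:shuff}, which rewrites $\langle\ell_1,\cdot\rangle\langle\ell_2',\cdot\rangle$ as the single functional $\langle\ell_1\star\ell_2',\cdot^\star\rangle$; reattaching the anchored $e_i$ reproduces $(\ell_1\star\ell_2')\,\vert\,e_i = \ell_1\prec\ell_2$ and closes the induction. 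The base case $\ell_2=e_i$ is the direct identity $\ell_1\prec e_i = \ell_1\,\vert\,e_i$, which matches the $k_1<k_2$ sum above.

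The main obstacle is that the extension $\bt\mapsto\bt^\star$ is nonlinear, so increments of extensions do not telescope and one cannot simply differentiate a cumulative quantity. The whole content of the proposition is that summing the increment-products over the ordered anchors $k_1<k_2$ regenerates the \emph{full} quasi-shuffle $\star$, including all of its diagonal (coincidence) terms, where indices contributed by $\ell_1$ and $\ell_2'$ collide at a common earlier time. Making this rigorous requires careful two-level bookkeeping: one must keep the inner product $\otimes$ (which spreads the components of a single letter over increasing times ending at an anchor) cleanly separated from the outer product $\otimes_{(2)}$, and verify that Lemma~\ref{lem:shuff} supplies exactly the coincidence terms demanded by the increment structure of the $\by_k$. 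I expect the combinatorial alignment of these coincidences with the recursive definition of $\prec$ to be the only genuinely delicate point; the rest is an expansion of \eqref{eq:phim} and reindexing.
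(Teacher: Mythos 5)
Your proposal is correct and follows essentially the same route as the paper's own proof: expand the left-hand side as a sum over ordered anchors $k_1<k_2$ of increment products, telescope over $k_1$, peel the last letter $e_i$ of $\ell_2$ onto the anchor time via the recurrence $\langle \ell\vert e_i,\Phi(\bx)_L\rangle=\sum_k\langle\ell,\Phi(\bx)_k\rangle\langle e_i,\bx_{k+1}\rangle$, and invoke Lemma~\ref{lem:shuff} to fuse the two functionals evaluated at the common earlier time into $\ell_1\star\ell_2'$, yielding $(\ell_1\star\ell_2')\vert e_i=\ell_1\prec\ell_2$. The only cosmetic difference is that the paper carries out this single peeling step directly rather than as an explicit induction, since Lemma~\ref{lem:shuff} already handles the quasi-shuffle of arbitrary words in one application.
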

	\begin{proof}
    We use the notation $ \Phi(\bx)_k = \Phi(\bx_1, \ldots, \bx_k) $.
		By induction:
		\begin{align}
		&\langle e_{\ell_1}\otimes_{(3)} e_{\ell_2\otimes_{(2)} e_i}, \Phi(\Delta\bphi(\bx)) \rangle \\
		&= \sum_{1 \leq k_1 < k_2 \leq L} \big(\langle \ell_1, \Phi(\bx)_{k_1}-\langle \ell_1, \Phi(\bx)_{k_1-1}\big) \rangle \big(\langle \ell_2\otimes_{(2)} e_i, \Phi(\bx)_{k_2}\rangle-\langle \ell_2\otimes_{(2)} e_i, \Phi(\bx)_{k_2-1}\rangle\big) \\
		&= \sum_{k=1}^{L-1} \langle \ell_1, \Phi(\bx)_{k} \rangle \big(\langle \ell_2\otimes_{(2)} e_i, \Phi(x)_{k+1}\rangle-\langle \ell_2\otimes_{(2)} e_i, \Phi(\bx)_{k}\rangle\big) \\
		&= \sum_{k=1}^{L-1} \langle \ell_1, \Phi(\bx)_{k} \rangle \big( \sum_{1\leq l\leq k} \langle \ell_2, \Phi(\bx)_{l}\rangle \bx^i_{l+1}-\langle \ell_2, \Phi(\bx)_{l-1}\rangle \bx^i_{l}\big) \\
		&= \sum_{k=1}^{L-1} \langle \ell_1, \Phi(\bx)_{k} \rangle \langle \ell_2, \Phi(\bx)_{k}\rangle \bx^i_{k+1} 
		= \sum_{k=1}^{L-1} \langle \ell_1\star \ell_2, \Phi(\bx)_{k} \rangle \bx^i_{k+1} \\
		&= \langle (\ell_1\star \ell_2)\otimes_{(2)} e_i, \Phi(\bx)_{L} \rangle.
		\end{align}
	\end{proof}

\section{Details on computations} \label{app:algorithms_detail}
Here we give further information on the implementation of LS2T layers detailed in the main text. For simplicity, we fix the state-space of sequences to be $V  = \R^d$ from here onwards. We also remark that although some of the following considerations and techniques might look unusual for the standard ML audience, they are well-known in the signatures community \citep{morrill2020generalised}

\subsection{Variations} \label{app:variations}
\paragraph{Truncation degree.}
To reiterate from Section \ref{sec:2}, for a given static feature map $\phi:\R^d \to V$  the Seq2Tens feature map $\Phi: \Seq{\R^d} \rightarrow \T{V}$ represents a sequence $\bx = (\bx_1, \dots, \bx_L) \in \Seq{\R^d}$ as a tensor in $T(V)$,  
\begin{align}
    \Phi(\bx) = (\Phi_m(\bx))_{m \geq 0}, \quad \Phi_m(\bx) = \sum_{1 \leq i_1 < \dots < i_m \leq L} \phi(\bx_{i_1}) \otimes \cdots \otimes \phi(\bx_{i_m}),
\end{align}
where $\Phi_m: \Seq{\R^d} \rightarrow V^{\otimes m}$ is given by a summation over all noncontiguous length-$m$ subsequences of $\bx$ with non-repeating indices. Therefore, for a sequence of length $L \in \NN$, $\Phi_m$ can have potentially non-zero terms for $m \leq L$.
An empirical observation is that for most datasets computing everything up to the $L$th level is redundant in the sense that usually the first $M \in \NN$ levels already contain most of the information a discriminative or a generative model picks up on where $M \ll L$. It is thus better treated as a hyperparameter, which we call ``order'' in the main text.
%Note that this is similar to choosing a truncation degree in signature methods, which has been studied in \cite{fermanian2020linear}.

Below we take for brevity $\phi(\bx_i)=\bx_i \in V=\R^d$ since with other maps $\phi$ simply amounts to replacing $\bx_i \in \R^d$ by $\phi(\bx_i) \in \R^e$. 
\paragraph{Distinguishing functionals across levels.} Let us consider the LS2T map $\tilde\Phi_{\tilde\theta}$, each output coordinate of which is given by a linear functional of $\Phi$, i.e. $\tilde\Phi_{\tilde\theta}(\bx) = \left(\langle \ell^1, \Phi(\bx) \rangle, \dots, \langle \ell^n, \Phi(\bx), \rangle\right)$ for a sequence $\bx \in \Seq{\R^d}$ and a collection of rank-$1$ elements $\theta = (\ell^k)_{k=1}^n \subset T(\R^d)$. Then, a single output coordinate of $\tilde\Phi_{\theta}$ may be written for $1 \leq k \leq n$ as
\begin{align}
\langle \ell^k, \Phi(\bx) \rangle = \sum_{m=0} \langle \ell^k_m, \Phi_m(\bx) \rangle,
\end{align}
for $\ell^k = (\ell^k_m)_{m \geq 0}$, i.e. we take inner products of tensors that are of the same degree, and then sum these up. We found that rather than taking the summation across tensor levels, it is beneficial to treat the linear functional on each level of the free algebra as an independent output to have
\begin{align}
    \tilde\Phi_{m, \tilde\theta}(\bx) = (\langle \ell^1_m, \Phi_m(\bx), \dots, \langle \ell^n_m, \Phi_m(\bx) \rangle) \quad\text{and}\quad \tilde\Phi_{\tilde\theta}(\bx) = (\tilde\Phi_{m, \tilde\theta}(\bx))_{m \geq 0}, 
\end{align}
where now $\tilde\Phi_{\tilde\theta}$ has output dimensionality $(M \times n)$ with $M \in \NN$ the truncation degree of $\Phi$ as detailed in the previous paragraph. Hence this modification scales the output dimension by $M$, but it will be important for the next step we discuss. It is for this modification that in Figure \ref{fig:ls2t_architectures}, each output of a LS2T layer has dimensionality $n \times m$ for a width-$n$ and order-$m$ LS2T map, while in Figure \ref{fig:gpvae_ls2t_encoder} the B-LS2T layer has output dimensionality $h \times 4$, since we set $n=h$ and $m=4$.

\paragraph{The need for normalization.}
Here we motivate the need to follow each LS2T layer by some form of normalization. Let $\bx = (\bx_1, \dots, \bx_L) \in \Seq{\R^d}$ be a sequence. Let $\alpha \in \R$ be a scalar and define $\by = \alpha \bx \in \Seq{\R^d}$ a scaled version of $\bx$. Let us investigate how the features change:
\begin{align}
    \Phi_m(\by) &= \sum_{1 \leq i_1 < \cdots < i_M \leq L} \by_{i_1} \otimes \cdots \otimes \by_{i_m} 
    = \sum_{1 \leq i_1 < \cdots < i_M \leq L} (\alpha \bx_{i_1}) \otimes \cdots \otimes (\alpha \bx_{i_m}) \\
    &=  \alpha^m \sum_{1 \leq i_1 < \cdots < i_M \leq L} \bx_{i_1} \otimes \cdots \otimes \bx_{i_m},
\end{align}
and therefore we have $\Phi(\by) = (\Phi_m(\by))_{m \geq 0} = (\alpha^m \Phi_m(\bx))_{m \geq 0}$, which analogously translates into the low-rank Seq2Tens map since

\begin{align}
    \tilde\Phi_{m, \tilde\theta} (\by) &= (\langle \ell^1_m, \Phi_m(\by) \rangle, \dots, \langle \ell^n_m, \Phi_m(\by) \rangle) 
    = (\langle \ell^1_m, \alpha^m \Phi_m(\bx) \rangle, \dots, \langle \ell^n_m, \alpha^m \Phi_m(\bx) \rangle) \\
    &= \alpha^m (\langle \ell^1_m, \Phi_m(\bx) \rangle, \dots, \langle \ell^n_m, \Phi_m(\bx) \rangle).
\end{align}

From this point alone, it is easy to see that $\Phi_m$, and thus $\tilde\Phi_{m, \tilde\theta}$ will move across wildly different scales for different values of $m$, which is inconvenient for the training of neural networks. To counterbalance this, we used a batch normalization layer after each LS2T layer in Sections \ref{subseq:4_tsc}, \ref{subseq:4_mortality} that computes mean and variance statistics across time and the batch itself, while for the GP-VAE in Section \ref{subseq:4_gpvae} we used a layer normalization that computes the statistics only across time. 

\paragraph{Sequence differencing.} In both Section \ref{subseq:4_tsc} and Section \ref{subseq:4_gpvae}, we precede each LS2T layer by a differencing layer and a time-embedding layer.

Let $\Delta: \Seq{\R^d} \rightarrow \Seq{\R^d}$ be the discrete difference operator defined for a sequence $\bx = (\bx_1, \dots, \bx_L) \in \Seq{\R^d}$ as
\begin{align}
    \Delta \bx := (\bx_1, \bx_2 - \bx_1, \dots, \bx_L - \bx_{L-1}) \in \Seq{\R^d},
\end{align}
where we made the simple identification that $\bx_0 \equiv 0$, i.e. for all sequences we first concatenate a $\mathbf{0}$ observation along the time axis, in the signature learning community this is called basepoint augmentation \cite{morrill2020generalised}, which is beneficial for two reasons: \begin{enumerate*}[label=(\roman*)] \item now $\Delta$ preserves the length $L$ of a sequence, \item now $\Delta$ is one-to-one, since otherwise $\Delta$ would be translation invariant, i.e. it would map all sequences which are translations of each other to the same output \end{enumerate*}.

To motivate differencing, first let us consider $\Phi_m(\bx)$ for $m=1$, and for brevity denote $\Delta \bx_i = \bx_i - \bx_{i-1}$ for $i=1, \dots, L$ and the convention $\bx_0 = 0$. Then, we may write
\begin{align}
    \Phi_1(\bx) = \sum_{i=1}^L \Delta \bx_i = \bx_L,
\end{align}
which means that now the first level of the Seq2Tens map is simply point-wise evaluation at the last observation time, and when used as a sequence-to-sequence transformation over expanding windows (i.e.~\eqref{eq:seq2seq_lls2t}), it is simply the identity map of the sequence.

Analogously, for the low-rank map we have
\begin{align}
    \tilde\Phi_{1,\theta}(\bx) = \sum_{i=1}^L (\langle \ell^1_1, \Delta \bx_i \rangle, \dots, \langle \ell^n_1, \Delta \bx_i \rangle) = (\langle \ell^1_1, \bx_L \rangle, \dots, \langle \ell^n_1, \bx_L \rangle),    
\end{align}
which is simply a linear map applied to $\bx$ in an observation-wise manner. The higher order terms, $\Phi_m(\bx)$ and $\tilde\Phi_{m, \tilde\theta}(\bx)$ can generally be written as
\begin{align} \label{eq:diff_phi}
    \Phi_m(\bx) &= \sum_{1 \leq i_1 < \dots i_m \leq L} \Delta \bx_{i_1} \otimes \cdots \otimes \Delta\bx_{i_m},
\end{align}
and
\begin{align} \label{eq:diff_tildephi}
    \tilde\Phi_{m, \tilde\theta} &= \sum_{1 \leq i_1 < \cdots < i_m \leq L} (\langle \bz^1_{m, 1}, \Delta \bx_{i_1} \rangle \cdots \langle \bz^1_{m, m}, \Delta \bx_{i_m} \rangle, \dots, \langle \bz^n_{m, 1}, \Delta \bx_{i_1} \rangle \cdots \langle \bz^n_{m, m}, \Delta \bx_{i_m} \rangle)
\end{align}
for some rank-$1$ degree-$m$ tensors $\ell^k_m = \bz_{m, 1}^k \otimes \cdots \otimes \bz_{m,m}^k$ for $k=1, \dots, n$. We observed that this way the higher order terms are relatively stable across time as the length of a sequence increases, while without differencing they can become unstable, exhibit high oscillations, or simply blow-up.

An additional benefit of taking differences is that the maps $\Phi$ and $\tilde\Phi_{\tilde\theta}$ become warping invariant, that is, invariant to time warpings. It is easy to see this by checking that if $\bx_i = \bx_{i-1}$ then $\Delta \bx_i = \mathbf{0}$ and all the corresponding terms in the summations \eqref{eq:diff_phi} and \eqref{eq:diff_tildephi} are zeros.

\paragraph{Time-embeddings.} By time-embedding, we mean adding as an extra coordinate to an input sequence the observation times $(t_i, \bx_i)_{i=1, \dots, L} \in \Seq{\R^{d+1}}$. Some datasets already come with a pre-specified observation-grid, in which case we can use that as a time-coordinate at every use of a time-embedding layer. If there is no pre-specified observation grid, we can simply add a normalized and equispaced coordinate, i.e. $t_i = i / L$.

Time-embeddings can be beneficial preceding both convolutional layers and LS2T layers. For convolutions, it allows to learn features that are not translation invariant \citep{Liu2018coordConv}. For the LS2T layer, the interpretation is slightly different. Note that in both Sections \ref{subseq:4_tsc} and \ref{subseq:4_gpvae}, we employ the time-embedding before the differencing block. This can be equivalently reformulated as after differencing adding an additional constant coordinate to the sequence, i.e. $(t_i - t_{i-1}, \bx_i - \bx_{i-1})_{i=1,\dots,L} \in \Seq{\R^{d+1}}$, where $t_i - t_{i-1} = 1 / L$ is simply a constant. This is motivated by Lemma \ref{lem:inject}, which states that the map $\Phi: \Seq{\R^d} \rightarrow T(V)$ is injective for sequences with a constant coordinate. Thus, the time-embedding before the differencing block is equivalent to adding a constant coordinate after the differencing block, and its purpose is to guarantee injectivity of $\Phi$.

\paragraph{Delay embeddings and convolutions.} A useful preprocessing technique for time series are delay embeddings, which simply amount to augmenting the state-space of sequences with a certain number of previous observations, motivated by Takens' theorem \citep{takens1981detecting,sauer1991embedology}, which ensures that a high-dimensional dynamical system can be reconstructed from low-dimensional observations. Theorem \ref{thm:univ} guarantees  that  if $\phi$ is  a  universal  feature  map  on  the  state-space  then $\Phi$ is universal. The most straightforward nonlinearity to take as $\phi$ is a multilayer perceptron (MLP), that is, $\phi = \phi_D \circ \cdots \circ \phi_1$ with $\phi_{j}(\bx) = \sigma(\mathbf{W}_j \bx + \mathbf{b}_j)$. By combining such dense layers with delay embeddings (lags), one recovers a temporal convolution layer, i.e. $\phi_j(\bx_i^l) = \sigma\left(\sum_{k=0}^l \mathbf{W}_{j,k} \mathbf{x}_{i-k} + \mathbf{b}_j\right)$, which motivates the use of convolutions in the preprocessing layers.

\subsection{Recursive computations} \label{app:recursive}
Next, we show how the computation of the maps $\Phi_m$ and $\Phi_{m, \theta}$ can be formulated as a joint recursion over the tensor levels and the sequence itself. 

Since $\Phi_m$ is given by a summation over all noncontiguous length-$m$ subsequences with non-repetitions of a sequence $\bx \in \Seq{\R^d}$, simple reasoning shows that $\Phi_m$ obeys the recursion across $m$ and time for $2 \leq l \leq L$ and $1 \leq m$
\begin{align} \label{eq:recursive_Seq2Tens}
    \Phi_m(\bx_1, \dots \bx_l) = \Phi_m(\bx_1, \dots \bx_{l-1}) + \Phi_{m-1}(\bx_1, \dots, \bx_{l-1}) \otimes \bx_{l},
\end{align}
with the initial conditions $\Phi_0 \equiv 1$, $\Phi_1(\bx_1) = \bx_1$ and $\Phi_m(\bx_1) = \mathbf{0}$ for $m \geq 2$.

Let $\ell = (\ell_m)_{m \geq 0} \in T(\R^d)$ be a sequence of rank-$1$ tensors with $\ell_m = \bz_{m,1} \otimes \cdots \otimes \bz_{m,m} \in (\R^d)^{\otimes m}$ a rank-$1$ tensor of degree-$m$. Then, $\langle \ell_m, \Phi_m(\bx) \rangle$ may be computed analogously to \eqref{eq:recursive_Seq2Tens} using the recursion for $2 \leq l \leq L$, $1 \leq m$
\begin{align} \label{eq:ls2t_independent}
    \langle \ell_m, \Phi_m(\bx_1, \dots, \bx_l) \rangle =& \langle \bz_{m, 1} \otimes \cdots \otimes \bz_{m, m}, \Phi_m(\bx_1, \dots, \bx_l) \rangle \\ =& \langle \bz_{m, 1} \otimes \cdots \otimes \bz_{m, m}, \Phi_m(\bx_1, \dots, \bx_{l-1}) \rangle \\
    &+ \langle \bz_{m, 1} \otimes \cdots \otimes \bz_{m, m-1}, \Phi_{m-1}(\bx_1, \dots, \bx_{l-1}) \rangle \langle \mathbf{z}_{m,m}, \bx_{l} \rangle \\
    =& \langle \ell_m, \Phi_m(\bx_1, \dots \bx_{l-1}) \rangle \label{eqline:lm_prev} \\
    &+ \langle \bz_{m, 1} \otimes \cdots \otimes \bz_{m, m-1}, \Phi_{m-1}(\bx_1, \dots, \bx_{l-1}) \rangle \langle \mathbf{z}_{m,m}, \bx_{l} \label{eqline:not_lm_1_prev} \rangle
\end{align}
and the initial conditions can be rewritten as the identities $\langle z_{0, 0}, \Phi_0 \rangle = 1$, $\langle \mathbf{z}_{m,1}, \Phi_1(\bx) \rangle = \langle \mathbf{z}_{m,1}, \bx \rangle$ and $\langle \bz_{m,1} \otimes \cdots \otimes \bz_{m, m}, \Phi_m(\bx_1) \rangle = \mathbf{0}$ for $2 \leq m$.

A slight inefficiency of the previous recursion given in \eqref{eqline:lm_prev}, \eqref{eqline:not_lm_1_prev} is that one generally cannot substitute $\langle \ell_{m-1}, \Phi_m(\bx_1, \dots, \bx_{l-1}) \rangle$ for the $\langle \bz_{m,1} \otimes \dots \otimes \bz_{m, m-1}, \Phi_m(\bx_1, \dots, \bx_{l-1}) \rangle \rangle$ term in \eqref{eqline:not_lm_1_prev}, since $\ell_{m-1} \neq \bz_{m,1} \otimes \cdots \otimes \bz_{m, m-1}$ generally. This means that to construct the degree-$m$ linear functional $\langle \ell_m, \Phi_m(\bx_1, \dots, \bx_l) \rangle$, one has to start from scratch by first constructing the degree-$1$ term $\langle \bz_{m,1}, \Phi_1 \rangle$ first, then the degree-$2$ term $\langle \bz_{m,1} \otimes \bz_{m,2}, \Phi_2 \rangle$, and so forth. This further means in terms of complexities that while \eqref{eq:recursive_Seq2Tens} has linear complexity in the largest value of $m$, henceforth denoted by $M \in \NN$, \eqref{eqline:lm_prev}, \eqref{eqline:not_lm_1_prev} has a quadratic complexity in $M$ due to the non-recursiveness of the rank-$1$ tensors $(\ell_m)_{m} = (\bz_{m,1} \otimes \cdots \otimes \bz_{m,m})_m$.

The previous observation indicates that an even more memory and time efficient recursion can be devised by parametrizing the rank-$1$ tensors $(\ell_m)_{m}$ in a recursive way as follows: let $\ell_1 = \bz_1 \in \R^d$ and define $\ell_{m} = \ell_{m-1} \otimes \bz_m \in (\R^d)^{\otimes m}$ for $2 \leq m$, i.e. $\ell_m = \bz_1 \otimes \cdots \otimes \bz_m$ for $\{\bz_1, \dots \bz_m \} \subset \R^d$. This parametrization indeed allows to substitute $\ell_{m-1}$ in \eqref{eqline:not_lm_1_prev}, which now becomes
\begin{align} \label{eq:ls2t_recursive}
    \langle \ell_m, \Phi_m(\bx_1, \dots, \bx_l) \rangle = \langle \ell_m, \Phi_m(\bx_1, \dots, \bx_{l-1} \rangle + \langle \ell_{m-1}, \Phi_{m-1}(\bx_1, \dots, \bx_{l-1})\rangle \langle \bz_m, \bx_{l} \rangle,
\end{align}
and hence, due to the recursion across $m$ for both $\ell_m$ and $\Phi_m$, it is now linear in the maximal value of $m$, denoted by $M \in \NN$. This results in a less flexible, but more efficient LS2T, due to the additional added recursivity constraint on the rank-$1$ elements. We refer to this version as the \textit{recursive variant}, while to the non-recursive construction as the \textit{independent variant}.

Next, we show how the previous computations can be rewritten as a simple RNN-like discrete dynamical system. For simplicity, we consider the recursive formulation, but the independent variant can also be formulated as such with a larger latent state size. Let $(\ell^j)_{j = 1, \dots, n}$ be $n \in \NN$ different rank-$1$ recursive elements, i.e. $\ell^j = (\ell^j_m)_{m \geq 0}$, $\ell^j_m = \bz^j_1 \otimes \dots \bz^j_m$ for $\bz^j_m \in \R^d$, $m \geq 0$ and $j = 1, \dots, n$. Also, denote $h_{m,i}^j := \langle \ell_m^j, \Phi_m(\bx_1, \dots, \bx_i) \rangle \in \R$, a scalar corresponding to the output of the $j$th linear functional on the $m$th tensor level for the sequence $(\bx_1, \dots, \bx_i)$. We collect all such functionals for given $m$ and $i$ into $\bh_{m, i} := (h_{m, i}^1, \dots h_{m, i}^n) \in \R^n$, i.e. $\bh_{m, i} = \tilde\Phi_{m, \tilde\theta}(\bx_1, \dots, \bx_i)$. 

Additionally, we collect all weight vectors $\bz_m^j \in \R^d$ for a given $m \in \NN$ into the matrix $\bZ_m := (\bz_m^1, \dots, \bz_m^n)^\top \in \R^{n \times d}$. Then, we may write the following vectorized version of \eqref{eq:ls2t_recursive}:
\begin{align}
    \bh_{1, i} &=     \bh_{1, i-1} + \bZ_1 \bx_i, \label{eq:ls2t_recursive_vectorized_lv1} \\
    \bh_{m, i} &= \bh_{m, i-1} + \bh_{m-1, i-1} \odot \bZ_m \bx_i \quad \text{for } m \geq 2,
\end{align}
with the initial conditions $\bh_{m, 0} = \mathbf{0} \in \R^n$ for all $m \geq 1$, and $\odot$ denoting the Hadamard product.

  \begin{algorithm}[t]
	\caption{Computing the LS2T layer with independent tensors across levels}
	\label{alg:ls2t_independent}
	\begin{algorithmic}[1]
		\STATE {\bfseries Input:} Sequences $(\bx^j)_{j=1,\dots,n_\bx} = (\bx^j_1, \dots, \bx^j_{L})_{j=1,\dots,n_\bx} \subset \Seq{\R^d}$, \\
		rank-$1$ tensors $(\ell^k)_{k=1,\dots,n_\ell} = (\bz^k_{m,1} \otimes \cdots \otimes \bz^k_{m,m})^{k=1,\dots,n_\ell}_{m=1,\dots,M} \subset T(\R^d)$, LS2T order $M \in \NN$ 
		\STATE Compute $A[m, i, j, l, k] \gets \langle \bz^j_{m, k}, \bx^i_{l} \rangle$ for $m \in \{1,\dots,M\}$, $i \in \{1,\dots,n_\bx\}$, $j \in \{1,\dots,n_\ell\}$, $l \in \{1, \dots, L\}$ and $k \in \{1, \dots, m\}$ \\
		\FOR{$m=1$ {\bfseries to} $M$}
		\STATE Assign $R \gets A[m, :, :, :, 1]$
		\FOR{$k=2$ {\bfseries to} $m$}
		\STATE Iterate $R \gets A[m, :, :, :, k] \odot R[:, :, \boxplus+1]$ \label{algline:alg1_cumsum1}
		\ENDFOR
		\STATE Save $Y_m \gets \cdot R[:, :, \boxplus]$ \label{algline:alg1_cumsum2}
		\ENDFOR
		\STATE {\bfseries Output:} Sequences $(Y_1, \dots, Y_M)$ each of shape $(n_\bx \times L \times n_\ell)$
	\end{algorithmic}
\end{algorithm}

\begin{algorithm}[t]
	\caption{Computing the LS2T layer with recursive tensors across levels}
	\label{alg:ls2t_recursive}
	\begin{algorithmic}[1]
		\STATE {\bfseries Input:} Sequences $(\bx^j)_{j=1,\dots,n_\bx} = (\bx^j_1, \dots, \bx^j_{L})_{j=1,\dots,n_\bx} \subset \Seq{\R^d}$, \\
		rank-$1$ tensors $(\ell^k)_{k=1,\dots,n_\ell} = (\bz^k_{1} \otimes \cdots \otimes \bz^k_{m})^{k=1,\dots,n_\ell}_{m=1,\dots,M} \subset T(\R^d)$, LS2T order $M \in \NN$ 
		\STATE Compute $A[m, i, j, l] \gets \langle \bz^j_{m}, \bx^i_{l} \rangle$ for $m \in \{1,\dots,M\}$, $i \in \{1,\dots,n_\bx\}$, $j \in \{1,\dots,n_\ell\}$ and $l \in \{1, \dots, L\}$ \\
		\STATE Assign $R \gets A[1, :, :, :]$
		\STATE Save $Y_1 \gets R[:, :, \boxplus]$ \label{algline:alg2_cumsum1}
		\FOR{$m=2$ {\bfseries to} $M$}
		\STATE Update $R \gets A[m, :, :, :] \odot R[:, :, \boxplus+1]$ \label{algline:alg2_cumsum2}
		\STATE Save $Y_m \gets R[:, :, \boxplus]$
		\ENDFOR
		\STATE {\bfseries Output:} Sequences $(Y_1, \dots, Y_M)$ each of shape $(n_\bx \times L \times n_\ell)$
	\end{algorithmic}
\end{algorithm}

\subsection{Algorithms} \label{app:algs}

We have shown previously that one may compute $\Phi_\theta(\bx_1, \dots, \bx_i) = (\Phi_{m, \theta}(\bx_1, \dots, \bx_i))_{m \geq 0}$ recursively in a vectorized way for a given sequence $(\bx_1, \dots, \bx_i) \in \Seq{\R^d}$. Now, in Algorithms \ref{alg:ls2t_independent} and \ref{alg:ls2t_recursive}, we additionally show how to further vectorize the previous computations across time and the batch. For this purpose, let $(\bx^j)_{j=1, \dots, n_\bX} \subset \Seq{\R^d}$ be $n_\bX \in \NN$ sequences in $\R^d$ and $(\ell^k)_{k=1,\dots, n_\ell} \subset T(\R^d)$ be $n_\ell$ be rank-$1$ tensors in $T(\R^d)$.

Additionally, we adopted the notation for describing algorithms from \citet{kiraly2016kernels}. For arrays, $1$-based indexing is used. Let $A$ and $B$ be $k$-dimensional arrays with size $(n_1 \times \dots \times n_k)$, and let $i_j \in \{1, \dots, n_j\}$ for $j \in \{1, \dots, k\}$. Then, the following operations are defined:
  \begin{enumerate}[label=(\roman*)]
  	\item  The cumulative sum along axis $j$:
  	\begin{align*}
  		A[\dots, :, \boxplus, :, \dots][\dots, i_{j-1}, i_j,, i_{j+1} \dots] := \sum_{\kappa=1}^{i_j} A[\dots, i_{j-1}, \kappa, i_{j+1}, \dots].
  	\end{align*}
  	\item The slice-wise sum along axis $j$:
  	\begin{align*}
	  	A[\dots, :, \Sigma, :, \dots][\dots, i_{j-1}, i_{j+1}, \dots] := \sum_{\kappa=1}^{n_j} A[\dots, i_{j-1}, \kappa, i_{j+1}, \dots].
  	\end{align*}
  	\item The shift along axis $j$ by $+m$ for $m \in \NN$:
  	\begin{align*}
	  	A[\dots, :, +m, :, \dots][\dots, i_{j-1}, i_j, i_{j+1}, \dots] := \left\lbrace\begin{array}{ll} A[\dots, i_{j-1}, i_j-m, i_{j+1}, \dots], & \text{ if } i_j > m, \\ 0, & \text{ if } i_j \leq m.  \end{array}\right.
  	\end{align*}
  	\item The Hadamard product of arrays $A$ and $B$:
  	\[(A \odot B) [i_1, \dots, i_k] := A[i_1, \dots, i_k] \cdot B[i_1, \dots, i_k]. \] 
  \end{enumerate}

 \begin{table}[t]
    \centering
    \begin{small}
    \caption{Forward pass computation time in seconds on a Gefore 2080Ti GPU for varying sequence length $L$, fixed batch size $N=32$, state-space dimension $d=64$ and output dimension $h=64$.}
    \label{table:fwd_pass}
    \vspace{10pt}
    \resizebox{\textwidth}{!}{
    \begin{tabular}{ccccccccc}
        \toprule
        \multirow{2}{*}{$L$} & \multirow{2}{*}{Conv1D} & \multirow{2}{*}{LSTM} & \multicolumn{3}{c}{LS2T} & \multicolumn{3}{c}{LS2T-R}  \\
        \cmidrule{4-9}
        & & & $M=2$ & $M=6$ & $M=10$ & $M=2$ & $M=6$ & $M=10$ \\
        \midrule
        $32$ & $8.1 \times 10^{-4}$ & $1.2 \times 10^{-1}$ & $1.7 \times 10^{-3}$ & $4.5\times 10^{-3}$ & $9.9\times 10^{-3}$ & $1.7 \times 10^{-3}$ & $2.5 \times 10^{-3}$ & $3.4 \times 10^{-3}$ \\
        $64$ & $8.5 \times 10^{-4}$ & $2.3 \times 10^{-1}$ & $1.8 \times 10^{-3}$ & $4.5 \times 10^{-3}$ & $9.9 \times 10^{-3}$ & $1.8 \times 10^{-3}$ & $2.6 \times 10^{-3}$ & $3.4 \times 10^{-3}$ \\
        $128$ & $9.7 \times 10^{-4}$ & $4.6 \times 10^{-1}$ & $2.1 \times 10^{-3}$ & $4.9 \times 10^{-3}$ & $1.0 \times 10^{-2}$ & $2.1 \times 10^{-3}$ & $2.9 \times 10^{-3}$ & $3.8 \times 10^{-3}$ \\
        $256$ & $1.1 \times 10^{-3}$ & $9.3 \times 10^{-1}$ & $2.4 \times 10^{-3}$ & $5.2 \times 10^{-3}$ & $1.1 \times 10^{-2}$ & $2.4 \times 10^{-3}$ & $3.2 \times 10^{-3}$ & $4.0 \times 10^{-3}$ \\
        $512$ & $1.3 \times 10^{-3}$ & $1.8 \times 10^0$ & $3.2 \times 10^{-3}$ & $6.0 \times 10^{-3}$ & $1.1 \times 10^{-2}$ & $3.0 \times 10^{-3}$ & $4.0 \times 10^{-3}$ & $4.8 \times 10^{-3}$ \\
        $1024$ & $1.9 \times 10^{-3}$ & $3.7 \times 10^0$ & $4.4 \times 10^{-3}$ & $7.1 \times 10^{-3}$ & $1.2 \times 10^{-2}$ & $4.4 \times 10^{-3}$ & $5.1 \times 10^{-3}$ & $6.0 \times 10^{-3}$ \\
        \midrule
    \end{tabular}}
    \end{small}
\end{table}

\subsection{Complexity analysis} \label{app:complexity}
We give a complexity analysis of Algorithms \ref{alg:ls2t_independent} and \ref{alg:ls2t_recursive}. Inspection of Algorithm \ref{alg:ls2t_independent} says that it has $O(M^2 \cdot n_\bx \cdot L \cdot n_\ell \cdot d)$ complexity in both time and memory with an additional memory cost of storing the $O(M^2 \cdot n_\ell \cdot d)$ number of parameters, the rank-$1$ elements $(\ell^k_m)_m$, which are stored in terms of their components $\bz_{m, j}^k \in \R^d$. In contrast, Algorithm \ref{alg:ls2t_independent} has  a time and memory cost of $O(M \cdot n_\bx \cdot L \cdot n_\ell \cdot d)$, thus linear in $M$, and the recursive rank-$1$ elements are now only an additional $O(M \cdot n_\ell \cdot d)$ number of parameters.

Additionally to the big-O bounds on complexities, another important question is how well the computations can be parallelized, which can have a larger impact on computations when e.g. running on GPUs. Observing the algorithms, we can see that they are not \emph{completely} parallelizable due to the cumsum ($\boxplus$) operations in Lines \ref{algline:alg1_cumsum1}, \ref{algline:alg1_cumsum2} (Algorithm \ref{alg:ls2t_independent}) and Lines \ref{algline:alg2_cumsum1}, \ref{algline:alg2_cumsum2} (Algorithm \ref{alg:ls2t_recursive}). The cumulative sum operates recursively on the whole time axis, therefore it is not parallelizable, but can be computed very efficiently on modern architectures.

To gain further intuition about what kind of performance one can expect for our LS2T layers, we benchmarked the computation time of a forward pass for varying sequence lengths and varying hyperparameters of the model. For comparison, we ran the same experiment with an LSTM layer and a Conv1D layer with a filter size of $32$. The input is a batch of sequences of shape $(n_\bX \times L \times d)$, while the output has shape $(n_\bX \times L \times h)$, where $d \in \NN$ is the state-space dimension of the input sequences, while $h \in \NN$ is simply the number of channels or hidden units in the layer. For our layers, we used our own implementation in Tensorflow, while for LSTM and Conv1D, we used the Keras implementation using the Tensorflow backend. 

In Table \ref{table:fwd_pass}, we report the average computation time of a forward pass over $100$ trials, for fixed batch size $n_\bX = 32$, state-space dimension $d = 64$, output dimension $h = 64$ and varying sequence lengths $L \in \{32, 64, 128, 256, 512, 1024\}$. LS2T and LS2T-R respectively refer to the independent and recursive variants, and $M \in \NN$ denotes the truncation degree. We can observe that while the LSTM practically scales linearly in $L$, the scaling of LS2T is sublinear for all practical purposes, exhibiting a growth rate that is more close to that of the Conv1D layer, that is fully parallelizable. Specifically, while the LSTM takes $3.7$ seconds to make a forward pass for $L=1024$, all variants of the LS2T layer take less time than that \emph{by a factor of at least a 100}. This suggests that its computations are \emph{highly parallelizable} across time. Additionally, we observe that LS2T exhibits a more aggressive growth rate with respect to the parameter $M$ due to the quadratic complexity in $M$ (although the numbers show only linear growth), while LS2T-R scales very favourably in $M$ as well due to the linear complexity (the results indicate a sublinear growth rate).

\subsection{Initialization}\label{app:init}
Below we detail the initialization procedure used by our models for the parameters $\theta$ of the LS2T layer, where $\theta = (\ell^k)_{k=1}^{n_\ell} \subset T(\R^d)$. As before, each $\ell^k = (\ell^k_{m})_{m \geq 0} \in T(\R^d)$ is given as a sequence of rank-$1$ tensors, such that $\ell^k_m = \bz^k_{m, 1} \otimes \cdots \otimes \bz^k_{m,m}$ with $\bz^k_{m, j} \in \R^d$ for the independent variant, while $\ell^k_m = \bz_1^k \otimes \cdots \otimes \bz_m^k$ with $\bz_m \in \R^d$ for the recursive variant. Hence, by initialization, we mean the initialization of the components $\bz^k_{m, j}$ or $\bz^k_m$.

To find a satisfactory initialization scheme, we took as starting point the Glorot \citep{glorot2010understanding} initialization, which specifies that for a layer with input dimension $n_{in}$ and output dimension $n_{out}$, the weights should be independently drawn from a centered distribution with variance $2 / (n_{in} + n_{out})$, where the distribution that is used is usually a uniform or a Gaussian.

\paragraph{Independent variant.} We first consider the independent variant of the algorithm. The weights are given as the rank-$1$ tensors 
\begin{align}
    \ell^k_m = \bz_{m, 1}^k \otimes \cdots \otimes \bz_{m, m}^k \in (\R^d)^{\otimes m} \quad\text{for}\quad k = 1, \dots, n_\ell\quad\text{and}\quad m \geq 0.
\end{align}

Denote $\bz^k_{m, j} = (z^k_{m, j, 1}, \dots, z^k_{m, j, d}) \in \R^d$, and assume that for a given $m \in \NN$ that each of $z^k_{m, j, p}$ are drawn independently from some distribution with
\begin{align}
    \EE[z^k_{m, j, p}] = 0 \quad\text{and}\quad \EE[z^k_{m, j, p}]^2 = \sigma_m^2 \quad\text{for}\quad j=1, \dots, m \quad\text{and}\quad p=1, \dots, d.
\end{align}
Then, for a given multi-index $\mathbf{i} = (i_1, \dots, i_m) \in \{1, \dots, d\}^m$, the $\mathbf{i}\text{th}$ coordinate of $\ell^k_m$ is given as
\begin{align}
    \ell^k_{m, \mathbf{i}} = z^k_{m, 1, i_1} \cdots z^k_{m, m, i_m}    
\end{align}
and has as its first two moments
\begin{align}
    \EE[\ell^k_{m, \mathbf{i}}] = 0\quad\text{and}\quad\EE[\ell^k_{m, \mathbf{i}}]^2 = \sigma_m^{2m}
\end{align}
due to the independence of the corresponding terms in the product. Therefore, to have $\EE[\ell^k_{m, \mathbf{i}}]^2 = 2 / (d^m + n_\ell)$, where we made the substitutions $n_{in} = d^m$ and $n_{out} = n_\ell$, we can simply set
\begin{align}
    \sigma_m^2 = \sqrt[m]{\frac{2}{d^m + n_\ell}}.    
\end{align}

\paragraph{Recursive variant.} In the recursive variant, the weights themselves are constructed recursively as
\begin{align}
    \ell^k_m = \bz^k_1 \otimes \cdots \otimes \bz^k_m \quad\text{for}\quad k=1,\dots,n_\ell \quad\text{and}\quad m \geq 0.
\end{align}
Thus, for $\mathbf{i} = (i_1, \dots, i_m) \in \{1, \dots, d\}^m$, the $\mathbf{i}$th component of $\ell^k_m$ is given as
\begin{align}
    \ell^k_{m, \mathbf{i}} = z^k_{1, i_1} \cdots z^k_{m, i_m},
\end{align}
and if we assume that for a given $m \in \NN$, $z^k_{m, i_m}$ is drawn from a centered distribution with variance $\sigma_m^2$, then we have
\begin{align}
    \EE[\ell^k_{m, \mathbf{i}}] = 0 \quad\text{and}\quad \EE[\ell^k_{m, \mathbf{i}}]^2 = \sigma_1^2 \cdots \sigma_m^2,
\end{align}
which means that now our goal is to have $\sigma_1^2 \cdots \sigma_m^2 = 2 / (d^m + n_\ell)$ for all $m \geq 1$, which is achievable inductively by
\begin{align}
    \sigma_1^2 = \frac{2}{d + n_\ell} \quad\text{and}\quad \sigma_{m+1}^2 = \frac{d^m + n_\ell}{d^{m+1} + n_\ell} \quad\text{for}\quad {m \geq 1}. 
\end{align}
For both variants of the algorithms, we used the above described initialization schemes, where the tensor component $\bz$'s were drawn from a centered uniform or a Gaussian distribution with the specified variances. Although the resulting weight tensors $\ell^k_m$ were of neither distribution, they had the pre-specified first two moments, that seemed sufficient to successfully train models with LS2T layers when combined with succeeding normalization layers as described in Appendix \ref{app:variations}.

However, we remark that when \citet{glorot2010understanding} derived their weight initialization scheme, they considered a fully linear regime across layers, while for $\bx = (\bx_1, \dots, \bx_L) \in \Seq{\R^d}$, the features $\Phi_m(\bx_1, \dots \bx_L)$ on which the weight tensors $\ell^k_m$ act will be highly nonlinear for any given $m \geq 2$ with increasing levels of nonlinearity for larger values of $m$. Therefore, it is highly likely that better initialization schemes can be derived by studying the distribution of $\Phi_m(\bx)$, that might also make it possible to abandon the normalization layers succeeding the LS2T layers and still retain the layers' ability to learn relevant features of the data.
Alternatively, data dependent initializations could also prove useful here, such as the LSUV initialization \citep{mishkin2015all}.

\section{Details on experiments} \label{app:details}
In the following, we give details on the time series classification (Appendix \ref{app:tsc}), mortality prediction (Appendix \ref{app:mortality}) and sequential data imputation (Appendix \ref{app:imputation}) experiments. For running all experiments, we used GPU-based computations on a set of computing nodes, that were equipped with 11 NVIDIA GPUs in total: 4 Tesla K40Ms, 5 Geforce 2080 TIs and 2 Quadro GP100 GPUs. The benchmarks and code used to run the experiments using Tensorflow as backend are available at \url{https://github.com/tgcsaba/seq2tens}.

\subsection{Time series classification} \label{app:tsc}
\paragraph{Problem formulation.} Classification is a traditional task in discriminative supervised machine learning: let $\cX$ be the data space and $\cY = \{1, \dots, c\}$ the discrete output space that consists of only categorical values with $c \in \NN$ the total number of classes. The problem is then to predict the corresponding labels of a set of unlabelled examples $\bX^\star = (\bx^\star_i)_{i=1}^{n_{\bX^\star}}$ given a set of labelled examples $(\bX, \by) = (\bx_i, y_i)_{i=1}^{n_\bX} \subseteq \cX \times \cY$. In the context of time series classification (TSC), the data space $\cX = \Seq{\R^d}$ is the space of multivariate sequences, i.e. $\bx_i = (\bx_{i,j})_{j=1}^{l_{\bx_i}}$ where $l_{\bx_i} \in \NN$ is the length of the sequence $\bx_i$ that can change from instance from instance.

\begin{wraptable}{r}{8.5cm}
    \vspace{-20pt}
	\caption{Specification of datasets used for benchmarking}
	\label{table:dataset_spec}
	\vspace{-10pt}
    % \vskip 0.15in
    \begin{center}
    \begin{small}
	\begin{sc}
    \begin{tabular}{lrrrrrr}
    \toprule
    Dataset  & $n_c$ & $d$ & $L_\bx$ & $n_\bX$ & $n_{\bX_\star}$ \\ %[-5pt]
    \midrule
        Arabic Digits & 10 & 13 & \numrange[range-phrase = --]{4}{93} & 6600 & 2200\\
        AUSLAN & 95 & 22 & \numrange[range-phrase = --]{45}{136} & 1140 & 1425\\
        Char.~Traj.& 20 & 3 & \numrange[range-phrase = --]{109}{205} & 300 & 2558\\
        CMUsubject16 & 2 & 62 & \numrange[range-phrase = --]{127}{580} & 29 & 29\\
        DigitShapes & 4 & 2 & \numrange[range-phrase = --]{30}{98} & 24 & 16\\
        ECG & 2 & 2 & \numrange[range-phrase = --]{39}{152} & 100 & 100\\
        Jap.~Vowels & 9 & 12 & \numrange[range-phrase = --]{7}{29} & 270 & 370\\
        Kick vs Punch & 2 & 62 & \numrange[range-phrase = --]{274}{841} & 16 & 10\\
        LIBRAS & 15 & 2 & 45 & 180 & 180\\
        NetFlow & 2 & 4 & \numrange[range-phrase = --]{50}{997} & 803 & 534\\
        PEMS & 7 & 963 & 144 & 267 & 173\\
        PenDigits & 10 & 2 & 8 & 300 & 10692\\
        Shapes & 3 & 2 & \numrange[range-phrase = --]{52}{98} & 18 & 12\\
        UWave & 8 & 3 & 315 & 896 & 3582\\
        Wafer & 2 & 6 & \numrange[range-phrase = --]{104}{198} & 298 & 896\\
        Walk vs Run & 2 & 62 & \numrange[range-phrase = --]{128}{1918} & 28 & 16\\ %[-5pt]
    \bottomrule
    \end{tabular}
	\end{sc}
    \end{small}
    \end{center}
    % \vskip -1.0in
    \vspace{-15pt}
\end{wraptable}

\paragraph{Datasets.}
Table \ref{table:dataset_spec} details the datasets from \citet{baydogan2015multivarate} that were used for the TSC experiment. The columns are defined as follows: $n_c$ denotes the number of classes, $d$ the dimension of the state space, $L_\bx$ the range of sequence lengths, $n_\bX$ and $n_{\bX_\star}$ respectively denote the number of examples in the prespecified training and testing sets. As preprocessing, the state space dimensions were normalized to zero mean and unit variance. From the experiment, we excluded the datasets {\sc LP1, LP2, \dots LP5}, because all of these contain a very low number of training examples ($n_\bX < 50$ for 4 out of 5), and also a low signal-to-noise ratio (around 60\%-80\% accuracy achieved by non-DL, classic time series classifiers), which is arguably not the setting when deep learning becomes particularly relevant.

\paragraph{Baselines.} The benefit of the multivariate TSC archive \citep{baydogan2015multivarate} is that there exist several publications which report the test set results of their respective TSC models, which makes it possible to directly compare against them. We included all results among the comparison that we are aware of: DTW\textsubscript{i} \citep{Sakoe1978DTW}, ARKernel \citep{Cuturi2011AR}, SMTS \citep{baydogan2015learning}, LPS \citep{Baydogan2015TimeSR}, gRSF \citep{karlsson2016generalized}, mvARF \citep{tuncel2018autoregressive}, MUSE \citep{Schfer2017MUSE}, MLSTMFCN \citep{Karim2019LSTMFCN}. Additionally, a recent survey paper \citep{Fawaz2019DLforTSC} benchmarked a range of DL models for TSC, however, they only considered a subset of these multivariate datasets. Therefore, so as to have results across the whole archive, we borrow the two strongest models as baselines, FCN and ResNet, and train them across the whole archive. In fact, we also chose the FCN as the base model to upgrade with LS2T layers, specifically for its strong performance and simplicity, since FCN is a vanilla CNN model consisting of three temporal convolution layers with kernel sizes $(8, 5, 3)$ and filters $(128, 256, 128)$, where each layer is succeeded by batch normalization and \texttt{relu} activation. We denote this as \FCN{128}, while \FCN{h} refers to an FCN with filters $(h, 2h, h)$.
 The ResNet is a more complicated, residual network \citep{he2016deep} consisting of three FCN blocks of widths $(64, 128, 128)$ with skip-connections in-between, where the width of the convolutional layers in each FCN block are now uniform (hence, the middle convolutional layer also has $h$ filters rather than $2h$). For more details, refer to \citet{Fawaz2019DLforTSC,Wang2017TSCfromScratch}. Also note that for MLSTMFCN the same results are reported as the ones in \citet{Schfer2017MUSE}.

\begin{figure}[t]
	% \centering
	% \begin{center}
	\vspace{-20pt}
	\begin{center}
	\beginpgfgraphicnamed{fcn-ls2t-plot}
	\begin{tikzpicture}[scale=0.8, every node/.style={scale=0.8}, shorten >=1pt,draw=black!50, node distance=\layersep]
		
	\node[seq, draw=purple, fill=lightpurple] at (0.0, 0.0) (inp) {\sc Input};
	% \draw[<->, color=black, thin] (-0.5, -1.75) -- (-0.5, 1.75);
	\node at (-0.5, 0) (l) {$\scriptstyle \ell$};
	% \draw[<->, color=black, thin] (-0.35, -1.9) -- (0.35, -1.9);
	\node at (0.0, -2.0) (l) {$\scriptstyle d$};
	
	\node[seq, draw=blue, fill=lightblue] at (1.0, 0.0) (conv1) {\sc Time + Conv (8)};
	\draw[->, color=black, thin] (inp) -- (conv1);
	\node at (1.0, -2.0) (l) {$\scriptstyle h$};
	
	\node[seq, draw=blue, fill=lightblue] at (2.0, 0.0) (bn1) {\sc BN + Act (ReLU)};
	\draw[->, color=black, thin] (conv1) -- (bn1);
	\node at (2.0, -2.0) (l) {$\scriptstyle h$};
	
	\node[wideseq, draw=blue, fill=lightblue] at (3.125, 0.0) (conv2) {\sc Time + Conv (5)};
	\draw[->, color=black, thin] (bn1) -- (conv2);
	\node at (3.125, -2.0) (l) {$\scriptstyle 2h$};
	
	\node[wideseq, draw=blue, fill=lightblue] at (4.375, 0.0) (bn2) {\sc BN + Act (ReLU)};
	\draw[->, color=black, thin] (conv2) -- (bn2);
	\node at (4.375, -2.0) (l) {$\scriptstyle 2h$};
	
	\node[seq, draw=blue, fill=lightblue] at (5.5, 0.0) (conv3) {\sc Time + Conv (3)};
	\draw[->, color=black, thin] (bn2) -- (conv3);
	\node at (5.5, -2.0) (l) {$\scriptstyle h$};
	
	\node[seq, draw=blue, fill=lightblue] at (6.5, 0.0) (bn3) {\sc BN + Act (ReLU)};
	\draw[->, color=black, thin] (conv3) -- (bn3);
	\node at (6.5, -2.0) (l) {$\scriptstyle h$};
	
	% \node[cross=2pt, draw=darkgrey] at (6.0, 2.5) (cross1) {}; 
	
	\node[seq, draw=green, fill=lightgreen] at (7.5, 0.0) (time1) {\sc Time + Diff};
	\draw[->, color=black, thin] (bn3) -- (time1);
	\node at (7.5, -2.0) (l) {$\scriptstyle h+1$};
	
	\node[circ2, draw=darkgrey, fill=grey] at (7.5, 2.10) (circ1) {$+$};
	\draw[->, color=black, thin] (inp.east) |- ++(0.0, 1.25) -| (circ1.north);
	\node at (3.75, 3.4) (l) {\sc Input shortcut};
	
	\node[seq, draw=green, fill=lightgreen] at (8.4, 0.1) (ls2t1_behind) {};
	\node[seq, draw=green, fill=lightgreen] at (8.5, 0.0) (ls2t1) {\sc LS2T};
	\draw[->, color=black, thin] (time1) -- (ls2t1);
	\node at (8.5, -2.0) (l) {$\scriptstyle n \times m$};
	
	\node[wideseq, draw=green, fill=lightgreen] at (9.625, 0.0) (wide1) {\sc BN + Reshape};
	\draw[->, color=black, thin] (ls2t1) -- (wide1);
	\node at (9.625, -2.0) (l) {$\scriptstyle n m$};
	
	% \node[wideseq, draw=green, fill=lightgreen] at (10.875, 0.0) (wide2) {\sc Time + Diff};
	% \draw[->, color=black, thin] (wide1) -- (wide2);
	% \node at (10.875, -2.0) (l) {$\scriptstyle n m + 1$};
	
	% \node[seq, draw=green, fill=lightgreen] at (11.9, 0.1) (ls2t2_behind) {};
	% \node[seq, draw=green, fill=lightgreen] at (12.0, 0.0) (ls2t2) {\sc LS2T};
	% \draw[->, color=black, thin] (wide2) -- (ls2t2);
	% \node at (12.0, -2.0) (l) {$\scriptstyle n \times m$};
	
	% \node[wideseq, draw=green, fill=lightgreen] at (13.125, 0.0) (wide3) {\sc BN + Reshape};
	% \draw[->, color=black, thin] (ls2t2) -- (wide3);
	% \node at (13.125, -2.0) (l) {$\scriptstyle n m$};
	
	\node[outer sep=2pt, inner sep=0pt] at (10.75, 0.0) (dots) {\dots};
	\draw[->, color=black, thin] (wide1) -- (dots);
	
	\node[wideseq, draw=green, fill=lightgreen] at (11.875, 0.0) (wide4) {\sc Time + Diff};
	\draw[->, color=black, thin] (dots) -- (wide4);
	\node at (11.875, -2.0) (l) {$\scriptstyle n m + 1$};
	
	\node[rectangle, draw=green, fill=lightgreen, text centered, minimum height=0.5cm, minimum width=1.25cm, outer sep=2pt] at (13.275, 0.1) (ls2t3_behind) {};
	\node[rectangle, draw=green, fill=lightgreen, text centered, minimum height=0.5cm, minimum width=1.25cm, outer sep=2pt] at (13.375, 0.0) (ls2t3) {\sc LS2T};
	\draw[->, color=black, thin] (wide4) -- (ls2t3);
	
	\node at (13.375, -0.5) (l) {$\scriptstyle n \times m$};
	
	\node[rectangle, draw=green, fill=lightgreen, text centered, minimum height=0.5cm, minimum width=2.0cm, outer sep=2pt] at (15.5, 0.0) (wide4) {\sc BN + Rshp};
	\draw[->, color=black, thin] (ls2t3) -- (wide4);
	\node at (15.5, -0.5) (l) {$\scriptstyle n m$};
	\node at (16.75, 0.0) (l) {$\scriptstyle 1$};
	
	\node at (11.125, 3.15) (l) {\sc FCN shortcut};
	
	\node[circ2, draw=darkgrey, fill=grey] at (15.5, 0.6) (circ2) {$+$};
	\draw[->, color=black, thin] (bn3.east) |- ++(0.0, 1.0) -| (circ2.north);
	
	\draw [black, thin] (0.75,-2.2) to [square right brace ] (6.75,-2.2);
	\node at (3.75, -2.7) (l) {\sc FCN block};

	\draw [black, thin] (7.25, -2.2) to [square right brace ] (16.5,-2.2);
	\node at (12.0, -2.7) (l) {\sc Deep LS2T block};
	
	\end{tikzpicture}
	\endpgfgraphicnamed
	\vspace{-5pt}
	\caption{Depiction of the models used for time series classification. LS2T\textsuperscript{3} only consists of a deep LS2T block (yellow), while $\text{FCN-LS2T}^3$ also precedes it with an FCN block (green).}
	\label{fig:ls2t_architectures}
	\end{center}
	\vspace{-15pt}
\end{figure}
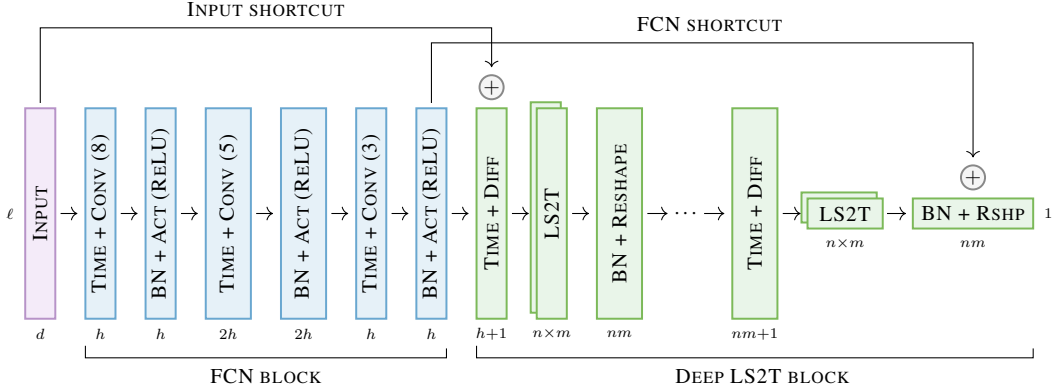

\paragraph{Architecture.} The structure of a generic \FCNLStwoTwidth{h}{n}{d} model of FCN width-$h$, LS2T width-$n$, LS2T-depth $d$ and LS2T-order $m$ is visualized on Figure \ref{fig:ls2t_architectures}, where the FCN block is additionally augmented with a time-embedding preceding each convolution compared to the vanilla FCN, while the deep LS2T block uses both time-embedding and difference layers before each LS2T layer. The usefulness of these is discussed in Appendix \ref{app:variations}. There is a shortcut connection coming from the {\sc Input} layer, that is added to the input of the first {\sc Time + Diff} layer in the LS2T block, allowing the first LS2T layer to access the input data additionally to the FCN output. Also, there is another shortcut connection coming from the output of the FCN block and added to the output of the LS2T block, which allows the FCN to directly affect the classification performance, hence, allowing the LS2T block to focus on learning global temporal interactions with the localized interactions coming from the FCN block. Both skip-connections use a time-distributed linear projection layer to match the dimension of the residual branch, and the shortcut from the FCN output also uses a GAP layer to pool over the time axis before being added to the final output that the classification layer receives.

% \paragraph{Implementation.}
% Our two models, LS2T\textsuperscript{3} and $\text{FCN-LS2T}^3$, are structured as depicted on Figure \ref{fig:ls2t_architectures}, where LS2T\textsuperscript{3} only consists of the stacked Seq2Tens block, while $\text{FCN-LS2T}^3$ precedes it with an FCN block. In the FCN block, we also precede each convolutional layer by a time-embedding layer and succeed it by a batch normalization layer and a \texttt{relu} activation. In the LS2T\textsuperscript{3} block, each LS2T layer is preceded by time embedding and differencing and succeeded by a flattening\footnote{By flattening we mean merging all axes except the batch and time axes.} and a batch normalization layer, as motivated in Appendix \ref{app:variations}. The LS2T layers in both LS2T\textsuperscript{3} and $\text{FCN-LS2T}^3$ use the recursive formulation of the low-rank LS2T layers, as detailed in Appendix \ref{app:recursive}. We implemented all models in Keras using the Tensorflow blackend, while for the LS2T layers we used our own Tensorflow implementation. % Code is available at \texttt{GITHUBAUTHOR}.

\begin{wraptable}{r}{7.0cm}
    \vspace{-10pt}
    \centering
    \caption{Number of trainable parameters}
    \label{table:params}
    \vspace{-5pt}
    \begin{small}
	\begin{sc}
    \begin{tabular}{lcc}
        \toprule
        \multirow{2}{*}{Model} & \multicolumn{2}{c}{Trainable parameters} \\
        \cmidrule{2-3}
        &  Median & Med.~abs.~dev. \\
        \midrule
            \LStwoTwidth{64}{3} & $3.5 \times 10^{4}$ & $1.5 \times 10^3$ \\
            \FCNLStwoTwidth{64}{64}{3} & $1.2 \times 10^{5}$ & $2.4 \times 10^3$ \\
            \FCN{128} & $2.7 \times 10^5$ & $3.5 \times 10^3$ \\
            \FCNLStwoTwidth{128}{64}{3} & $3.4 \times 10^{5}$ & $3.7 \times 10^3$ \\
        	ResNet & $5.2 \times 10^5$ & $2.4 \times 10^3$ \\
            \bottomrule
    \end{tabular}
	\end{sc}
    \end{small}
    \vspace{-15pt}
\end{wraptable}

\paragraph{Parameter comparison.} Table \ref{table:params} depicts the median number of trainable parameters for the models considered by us in this experiment and their median absolute deviation. While the smallest model, \LStwoTwidth{64}{3}, has about the third of the parameters as \FCNLStwoTwidth{64}{64}{3} due to the added \FCN{64} block in the latter, \FCNLStwoTwidth{64}{64}{3} still has about half as many parameters as \FCN{128} as it is a much thinner network. On the other hand, \FCNLStwoTwidth{128}{64}{3} uses an \FCN{128} block with a \LStwoTwidth{64}{3} block on top and additional skip-connections, so it is not surprising that its number of parameters are between \FCN{128} and ResNet, with ResNet being the largest model due to it being a residual network of three FCN blocks of various sizes. At the same time, parameter counting might not be a good proxy for measuring the size of deep learning models generally \citep{maddox2020rethinking}, and even more so when the layer types constituting the different models also vary additionally to the number of parameters.

\paragraph{Training details.} For the training of all models, an {\sc Adam} optimizer \citep{Kingma2015AdamAM} was used with an initial learning rate of $\alpha = 1 \times 10^{-3}$, and we employed a learning rate decay of $\beta = 1 / 2$ after $100$ epochs of no improvement in the training loss, and stopping early after no improvement over $500$ epochs in the loss, after which the lowest loss parameter set was used. The maximum number of epochs for all were set to $2000$, except for ResNet it was set to $1500$, since that is what \citet{Fawaz2019DLforTSC} used. The batch size was set to $b = \max(\min(0.1\cdot n_\bX, b_{max}), b_{min})$, where for the models \LStwoTwidth{64}{3}, \FCN{128}, \FCNLStwoTwidth{64}{64}{3}, \FCNLStwoTwidth{128}{64}{3} the values were $b_{max}=16$ and $b_{min}=4$, for ResNet $b_{max}=64$ and $b_{min}=4$ were used. This is also the same setting as how \FCN{128} and ResNet were trained in \citet{Fawaz2019DLforTSC} with the exception that they did not cap the batch size at $b_{min}=4$ that for small datasets ($n_\bX < 40$) made their training unstable, which is why on some of these datasets our baselines, \FCN{128} and ResNet are stronger. We also remark that before we introduced the skip-connections in the FCN-LS2T architecture (Figure \ref{fig:ls2t_architectures}), training was considerably more unstable for this model, and at that point, using the {\sc SWATS} optimizer \citep{keskar2017improving} in place of {\sc Adam} could provide some improvements on the results; while after upgrading the architecture, changing the optimizer did not seem to make a difference.

\paragraph{Results.} The full table of results is in in Table \ref{table:classification_accuracies}, where for the models that we trained ourselves, i.e.~\FCN{128}, ResNet, \LStwoTwidth{64}{3}, \FCNLStwoTwidth{64}{64}{3}, \FCNLStwoTwidth{128}{64}{3}, we report the mean and standard deviation of test set accuracies over 5 model trains. Figure \ref{fig:box_and_cd} depicts the box-plot distributions of classification accuracies and the corresponding critical difference (CD) diagram. The CD diagram depicts the mean ranks of each method averaged over datasets with a calculated CD region using the Nemenyi test \citep{nemenyi1963distribution} for an alpha value of $\alpha=0.1$. For the Bayesian signed-rank test \citep{benavoli2014bayesian}, we used the implementation from \texttt{https://github.com/janezd/baycomp}, and the resulting posterior probabilities are compared in Table \ref{table:bayes_signed_rank_probs}, Section \ref{subseq:4_tsc}. The posterior distributions themselves are visualized on Figure \ref{fig:baycomp1} The region of practical equivalence (rope) was set to $\texttt{rope} = 1 \times 10^{-3}$, that is, two accuracies were practically equivalent if they are at most the given distance from each other. For the visualizations and computation of probabilities, the posteriors were evaluated using $n=10^5$ Monte Carlo samples.

\begin{table}[t]
	\caption{Classifier accuracies on the multivariate TSC datasets with the best and second best highlighted for each row in bold and italic, respectively.}
	\label{table:classification_accuracies}
% 	\vskip 0.15in
	\begin{center}
		\resizebox{\textwidth}{!}{
		\begin{sc}
		\begin{tabular}{lccccccccccccc}%{lllll}
			\toprule
            Dataset & ARKernel & DTW & LPS & SMTS & gRSF & mvARF & MUSE & MLSTMFCN & \FCN{128} & ResNet & \LStwoTwidth{64}{3} & \FCNLStwoTwidth{64}{64}{3} & \FCNLStwoTwidth{128}{64}{3} \\ 
			\midrule
			ArabicDigits & $0.988$ & $0.908$ & $0.971$ & $0.964$ & $0.975$ & $0.952$ & $0.992$ & $0.990$ & $0.995(0.001)$ & $0.995(0.002)$ & $0.979(0.002)$ & $\mathit{0.996}(0.001)$ & $\mathbf{0.997}(0.001)$ \\
			AUSLAN & $0.918$ & $0.727$ & $0.754$ & $0.947$ & $0.955$ & $0.934$ & $0.970$ & $0.950$ & $0.979(0.003)$ & $0.971(0.003)$ & $0.987(0.002)$ & $\mathbf{0.996}(0.001)$ & $\mathit{0.995}(0.001)$ \\
			Char.~Traj. & $0.900$ & $0.948$ & $0.965$ & $0.992$ & $\mathit{0.994}$ & $0.928$ & $0.937$ & $0.990$ & $0.992(0.001)$ & $0.985(0.002)$ & $0.980(0.003)$ & $0.993(0.001)$ & $\mathbf{0.995}(0.000)$ \\
			CMUsubject16 & $\mathbf{1.000}$ & $0.930$ & $\mathbf{1.000}$ & $\mathit{0.997}$ & $\mathbf{1.000}$ & $\mathbf{1.000}$ & $\mathbf{1.000}$ & $\mathbf{1.000}$ & $\mathbf{1.000}(0.000)$ & $\mathbf{1.000}(0.000)$ & $\mathbf{1.000}(0.000)$ & $\mathbf{1.000}(0.000)$ & $\mathbf{1.000}(0.000)$ \\
			DigitShapes & $\mathbf{1.000}$ & $\mathbf{1.000}$ & $\mathbf{1.000}$ & $\mathbf{1.000}$ & $\mathbf{1.000}$ & $\mathbf{1.000}$ & $\mathbf{1.000}$ & $\mathbf{1.000}$ & $\mathbf{1.000}(0.000)$ & $\mathbf{1.000}(0.000)$ & $\mathbf{1.000}(0.000)$ & $\mathbf{1.000}(0.000)$ & $\mathbf{1.000}(0.000)$ \\
			ECG & $0.820$ & $0.790$ & $0.820$ & $0.818$ & $0.880$ & $0.785$ & $0.880$ & $0.870$ & $0.860(0.018)$ & $0.856(0.010)$ & $0.824(0.016)$ & $\mathbf{0.892}(0.015)$ & $\mathit{0.886}(0.014)$ \\
			Jap.~Vowels & $0.984$ & $0.962$ & $0.951$ & $0.969$ & $0.800$ & $0.959$ & $0.976$ & $\mathbf{1.000}$ & $0.990(0.003)$ & $0.989(0.003)$ & $0.984(0.005)$ & $0.991(0.003)$ & $\mathit{0.994}(0.003)$ \\
			Kick vs Punch & $0.927$ & $0.600$ & $0.900$ & $0.820$ & $\mathbf{1.000}$ & $\mathit{0.976}$ & $\mathbf{1.000}$ & $0.900$ & $\mathbf{1.000}(0.000)$ & $\mathbf{1.000}(0.000)$ & $\mathbf{1.000}(0.000)$ & $\mathbf{1.000}(0.000)$ & $\mathbf{1.000}(0.000)$ \\
			LIBRAS & $0.952$ & $0.888$ & $0.903$ & $0.909$ & $0.911$ & $0.945$ & $0.894$ & $\mathbf{0.970}$ & $\mathit{0.966}(0.002)$ & $\mathit{0.966}(0.008)$ & $0.859(0.008)$ & $0.946(0.005)$ & $0.956(0.008)$ \\
			NetFlow & nan & $0.976$ & $0.968$ & $0.977$ & $0.914$ & nan & $0.961$ & $0.950$ & $0.970(0.003)$ & $0.953(0.006)$ & $0.921(0.014)$ & $0.962(0.006)$ & $0.962(0.005)$ \\
			PEMS & $0.750$ & $0.832$ & $0.844$ & $0.896$ & $1.000$ & nan & nan & nan & $0.775(0.019)$ & $0.787(0.008)$ & $0.725(0.013)$ & $0.788(0.025)$ & $0.802(0.017)$ \\
			PenDigits & $0.952$ & $0.927$ & $0.908$ & $0.917$ & $0.932$ & $0.923$ & $0.912$ & $\mathbf{0.970}$ & $\mathit{0.967}(0.002)$ & $0.963(0.001)$ & $0.956(0.002)$ & $0.963(0.003)$ & $0.962(0.002)$ \\
			Shapes & $\mathbf{1.000}$ & $\mathbf{1.000}$ & $\mathbf{1.000}$ & $\mathbf{1.000}$ & $\mathbf{1.000}$ & $\mathbf{1.000}$ & $\mathbf{1.000}$ & $\mathbf{1.000}$ & $\mathbf{1.000}(0.000)$ & $\mathbf{1.000}(0.000)$ & $\mathbf{1.000}(0.000)$ & $\mathbf{1.000}(0.000)$ & $\mathbf{1.000}(0.000)$ \\
			UWave & $0.904$ & $0.916$ & $\mathbf{0.980}$ & $0.941$ & $0.929$ & $0.952$ & $0.916$ & $0.970$ & $\mathit{0.979}(0.001)$ & $0.978(0.001)$ & $0.958(0.001)$ & $0.975(0.002)$ & $0.976(0.001)$ \\
			Wafer & $0.968$ & $0.974$ & $0.962$ & $0.965$ & $\mathit{0.992}$ & $0.931$ & $\mathbf{0.997}$ & $0.990$ & $0.987(0.005)$ & $0.989(0.002)$ & $0.983(0.003)$ & $0.988(0.001)$ & $0.990(0.001)$ \\
			Walk vs Run & $\mathbf{1.000}$ & $\mathbf{1.000}$ & $\mathbf{1.000}$ & $\mathbf{1.000}$ & $\mathbf{1.000}$ & $\mathbf{1.000}$ & $\mathbf{1.000}$ & $\mathbf{1.000}$ & $\mathbf{1.000}(0.000)$ & $\mathbf{1.000}(0.000)$ & $\mathbf{1.000}(0.000)$ & $\mathbf{1.000}(0.000)$ & $\mathbf{1.000}(0.000)$ \\
			\midrule 
			Avg.~acc. & $0.938$ & $0.899$ & $0.933$ & $0.945$ & $0.955$ & $0.949$ & $0.962$ & $\mathbf{0.970}$ & $0.966$ & $0.964$ & $0.947$ & $0.\mathit{968}$ & $\mathbf{0.970}$ \\
			Med.~acc. & $0.952$ & $0.929$ & $0.964$ & $0.964$ & $0.984$ & $0.952$ & $0.976$ & $0.990$ & $0.988$ & $0.987$ & $0.982$ & $\mathit{0.992}$ & $\mathbf{0.994}$ \\
			Sd.~acc. & $0.073$ & $0.111$ & $0.073$ & $0.059$ & $0.058$ & $0.055$ & $0.043$ & $0.039$ & $0.061$ & $0.059$ & $0.079$ & $0.056$ & $0.054$ \\
			Avg.~rank & $6.000$ & $6.812$ & $6.000$ & $5.625$ & $4.625$ & $6.714$ & $4.933$ & $3.333$ & $3.000$ & $3.500$ & $5.312$ & $\mathit{2.750}$ & $\mathbf{2.312}$ \\
			Med.~rank & $6.000$ & $8.000$ & $6.000$ & $6.500$ & $2.500$ & $8.500$ & $4.000$ & $3.000$ & $\mathit{2.500}$ & $3.000$ & $6.500$ & $\mathit{2.500}$ & $\mathbf{2.000}$ \\
			Sd.~rank & $4.071$ & $4.246$ & $4.397$ & $3.810$ & $3.964$ & $4.514$ & $4.044$ & $2.610$ & $2.066$ & $2.251$ & $3.591$ & $1.880$ & $1.493$ \\
			\bottomrule
		\end{tabular}
	\end{sc}
	}
	\end{center}
%	 \vskip -1.0in
\end{table}

\begin{figure}[]
    % \vspace{-10pt}
	\centering
	\begin{minipage}{0.49\textwidth}
		\centering
		% \hspace{10pt}
		\includegraphics[width=2.75in]{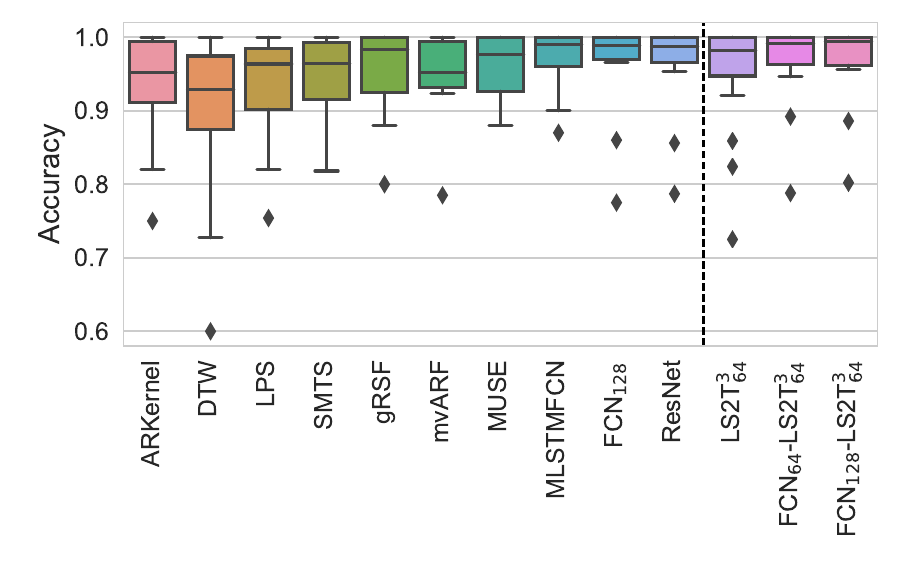}
	\end{minipage}
	\begin{minipage}{0.49\textwidth}
		\centering
		\vspace{-30pt}
		% \hspace{-10pt}
		\includegraphics[width=2.75in]{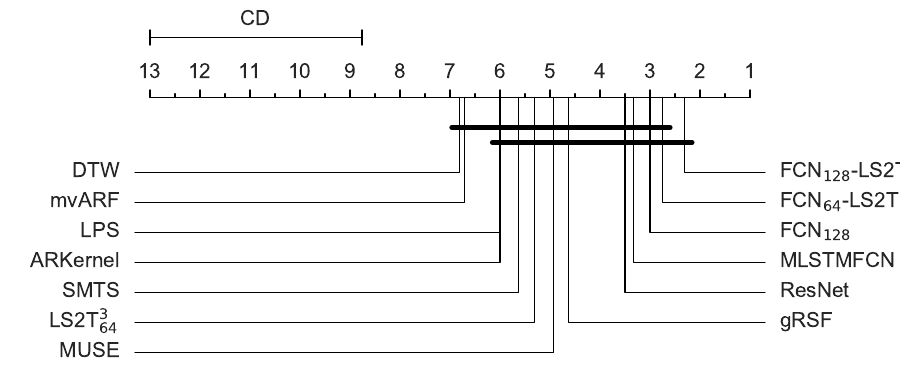}
	\end{minipage}
	\vspace{-10pt}
	\caption{Box-plot of classification accuracies (left) and critical difference diagram (right).}
	\label{fig:box_and_cd}
	% \vspace{-20pt}
\end{figure}

\newpage

\begin{figure}
	\centering
	\begin{minipage}{0.245\textwidth}
		\centering
% 		\hspace{-30pt}
		\includegraphics[width=1.1in]{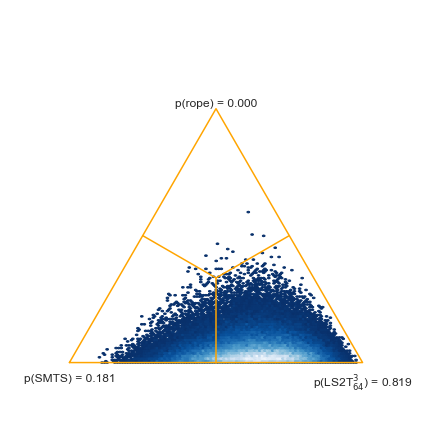}
		% \captionof{figure}{Caption for image}
		% \label{fig:sample_figure}
	\end{minipage}
	\begin{minipage}{0.245\textwidth}
		\centering
% 		\hspace{-30pt}
		\includegraphics[width=1.1in]{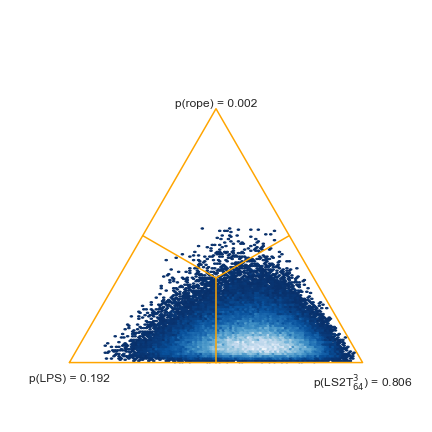}
		% \captionof{figure}{Caption for image}
		% \label{fig:sample_figure}
	\end{minipage}
	\begin{minipage}{0.245\textwidth}
		\centering
% 		\hspace{-30pt}
		\includegraphics[width=1.1in]{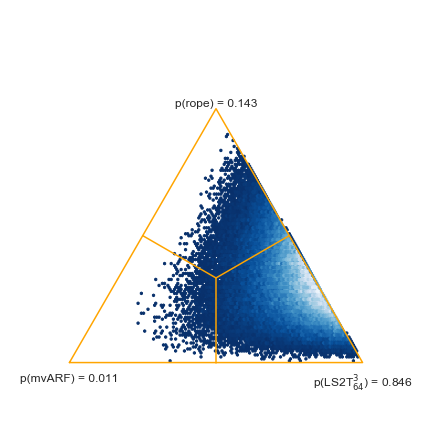}
		% \captionof{figure}{Caption for image}
		% \label{fig:sample_figure}
	\end{minipage}
	\begin{minipage}{0.245\textwidth}
		\centering
% 		\hspace{-30pt}
		\includegraphics[width=1.1in]{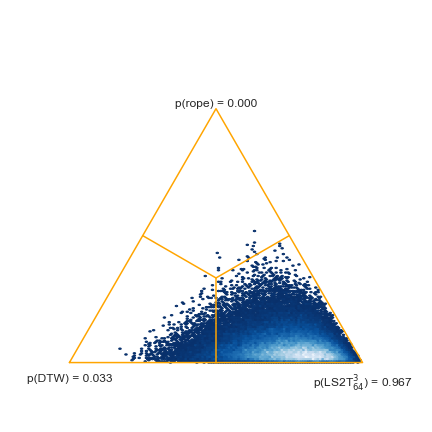}
		% \captionof{figure}{Caption for image}
		% \label{fig:sample_figure}
	\end{minipage}
	% \vspace{-10pt}
	\begin{minipage}{0.245\textwidth}
		\centering
% 		\hspace{-30pt}
		\includegraphics[width=1.1in]{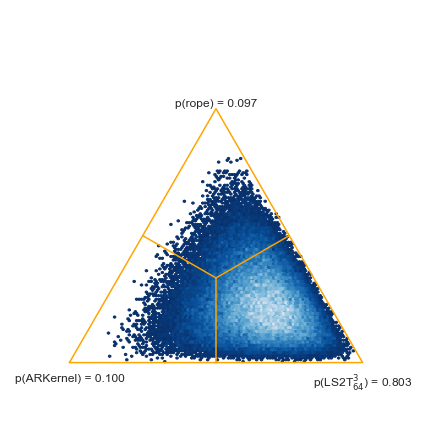}
		% \captionof{figure}{Caption for image}
		% \label{fig:sample_figure}
	\end{minipage}
	\begin{minipage}{0.245\textwidth}
		\centering
% 		\hspace{-30pt}
		\includegraphics[width=1.1in]{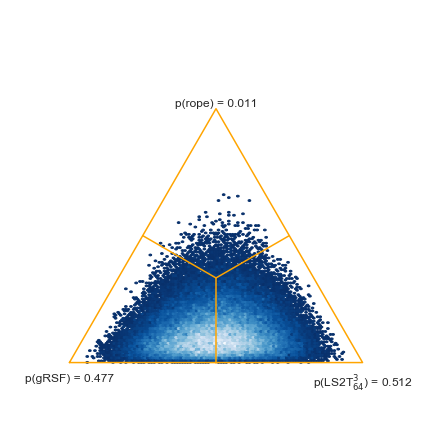}
		% \captionof{figure}{Caption for image}
		% \label{fig:sample_figure}
	\end{minipage}
	% \vspace{-10pt}
	\begin{minipage}{0.245\textwidth}
		\centering
		% 		\hspace{-30pt}
		\includegraphics[width=1.1in]{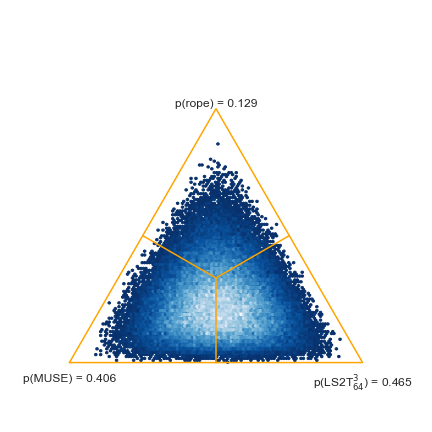}
		% \captionof{figure}{Caption for image}
		% \label{fig:sample_figure}
	\end{minipage}
	% \vspace{-10pt}
	\begin{minipage}{0.245\textwidth}
		\centering
% 		\hspace{-30pt}
		\includegraphics[width=1.1in]{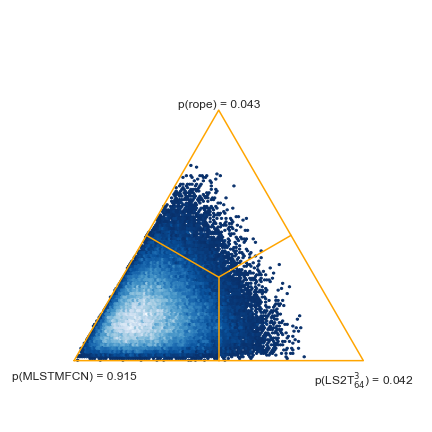}
		% \captionof{figure}{Caption for image}
		% \label{fig:sample_figure}
	\end{minipage}
	\begin{minipage}{0.245\textwidth}
		\centering
% 		\hspace{-30pt}
		\includegraphics[width=1.1in]{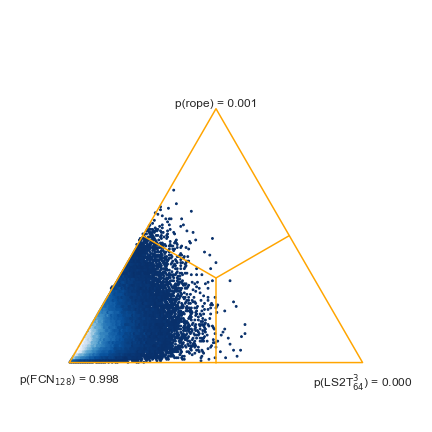}
		% \captionof{figure}{Caption for image}
		% \label{fig:sample_figure}
	\end{minipage}
	\begin{minipage}{0.245\textwidth}
		\centering
% 		\hspace{-30pt}
		\includegraphics[width=1.1in]{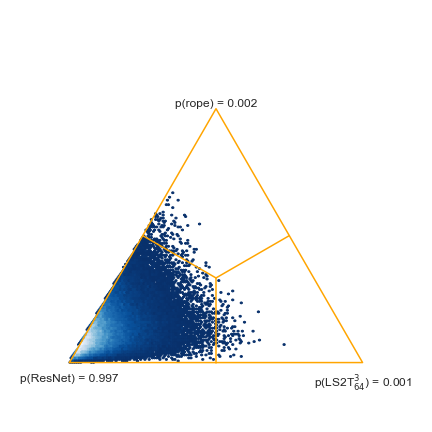}
		% \captionof{figure}{Caption for image}
		% \label{fig:sample_figure}
	\end{minipage}
	\begin{minipage}{0.245\textwidth}
		\centering
% 		\hspace{-30pt}
		\includegraphics[width=1.1in]{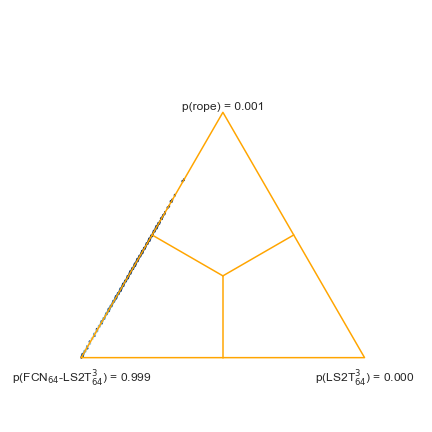}
		% \captionof{figure}{Caption for image}
		% \label{fig:sample_figure}
	\end{minipage}
	\begin{minipage}{0.245\textwidth}
		\centering
% 		\hspace{-30pt}
		\includegraphics[width=1.1in]{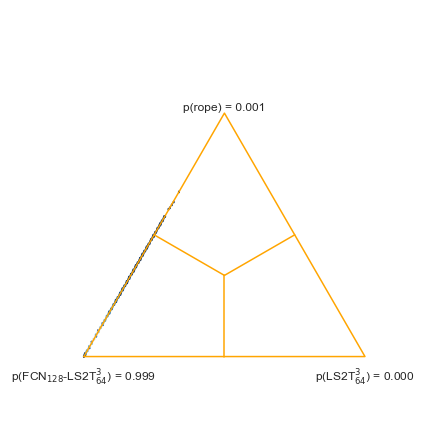}
		% \captionof{figure}{Caption for image}
		% \label{fig:sample_figure}
	\end{minipage}
	\begin{minipage}{0.245\textwidth}
		\centering
% 		\hspace{-30pt}
		\includegraphics[width=1.1in]{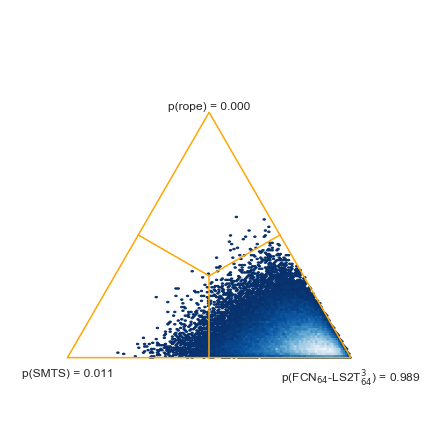}
		% \captionof{figure}{Caption for image}
		% \label{fig:sample_figure}
	\end{minipage}
	\begin{minipage}{0.245\textwidth}
		\centering
% 		\hspace{-30pt}
		\includegraphics[width=1.1in]{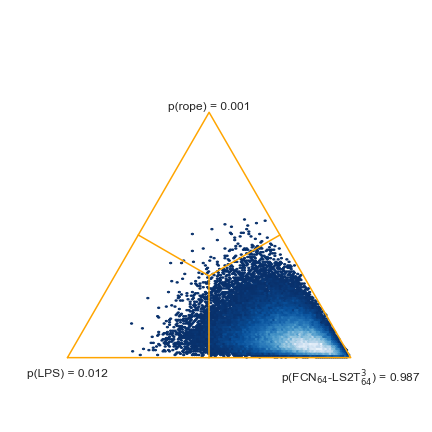}
		% \captionof{figure}{Caption for image}
		% \label{fig:sample_figure}
	\end{minipage}
	\begin{minipage}{0.245\textwidth}
		\centering
% 		\hspace{-30pt}
		\includegraphics[width=1.1in]{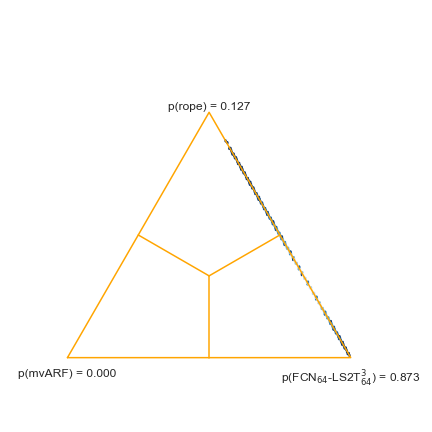}
		% \captionof{figure}{Caption for image}
		% \label{fig:sample_figure}
	\end{minipage}
	\begin{minipage}{0.245\textwidth}
		\centering
% 		\hspace{-30pt}
		\includegraphics[width=1.1in]{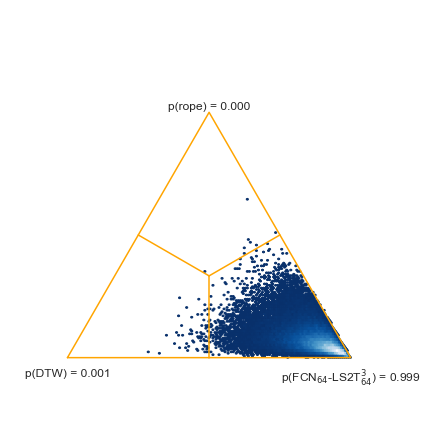}
		% \captionof{figure}{Caption for image}
		% \label{fig:sample_figure}
	\end{minipage}
	% \vspace{-10pt}
	\begin{minipage}{0.245\textwidth}
		\centering
% 		\hspace{-30pt}
		\includegraphics[width=1.1in]{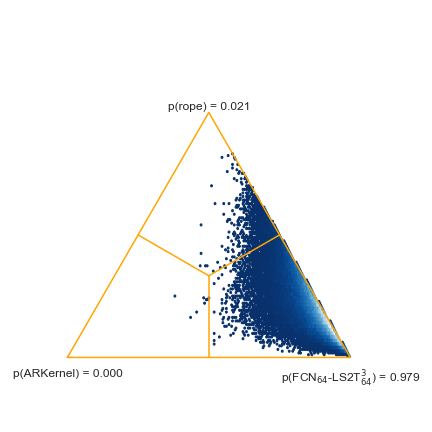}
		% \captionof{figure}{Caption for image}
		% \label{fig:sample_figure}
	\end{minipage}
	\begin{minipage}{0.245\textwidth}
		\centering
% 		\hspace{-30pt}
		\includegraphics[width=1.1in]{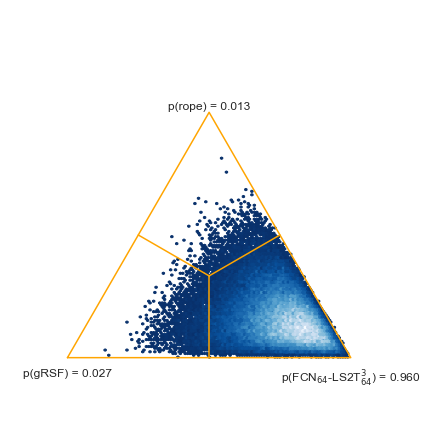}
		% \captionof{figure}{Caption for image}
		% \label{fig:sample_figure}
	\end{minipage}
	% \vspace{-10pt}
	\begin{minipage}{0.245\textwidth}
		\centering
		% 		\hspace{-30pt}
		\includegraphics[width=1.1in]{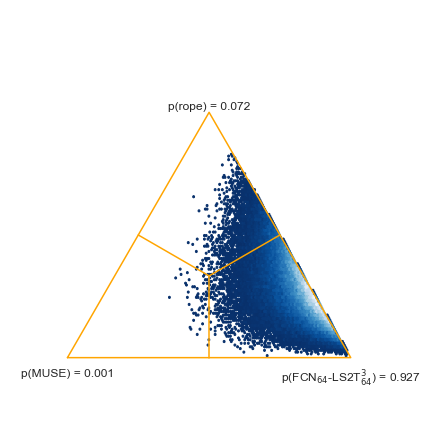}
		% \captionof{figure}{Caption for image}
		% \label{fig:sample_figure}
	\end{minipage}
	% \vspace{-10pt}
	\begin{minipage}{0.245\textwidth}
		\centering
% 		\hspace{-30pt}
		\includegraphics[width=1.1in]{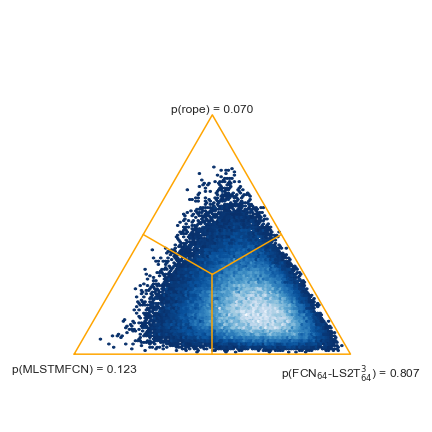}
		% \captionof{figure}{Caption for image}
		% \label{fig:sample_figure}
	\end{minipage}
	\begin{minipage}{0.245\textwidth}
		\centering
% 		\hspace{-30pt}
		\includegraphics[width=1.1in]{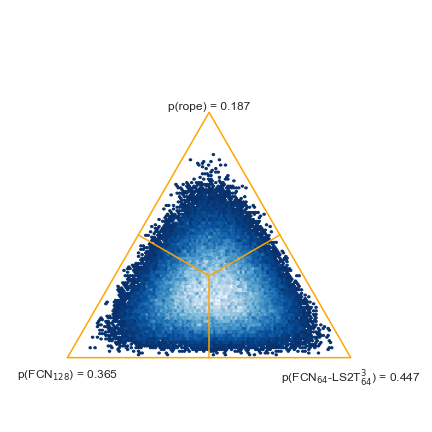}
		% \captionof{figure}{Caption for image}
		% \label{fig:sample_figure}
	\end{minipage}
	\begin{minipage}{0.245\textwidth}
		\centering
% 		\hspace{-30pt}
		\includegraphics[width=1.1in]{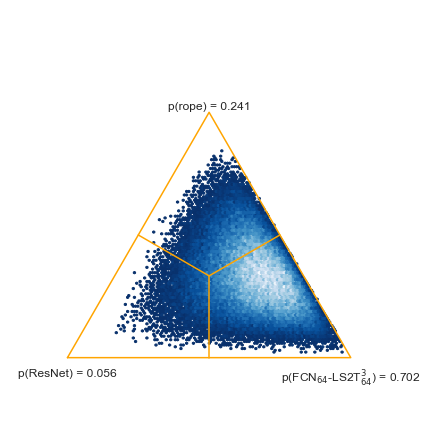}
		% \captionof{figure}{Caption for image}
		% \label{fig:sample_figure}
	\end{minipage}
	\begin{minipage}{0.245\textwidth}
		\centering
% 		\hspace{-30pt}
		\includegraphics[width=1.1in]{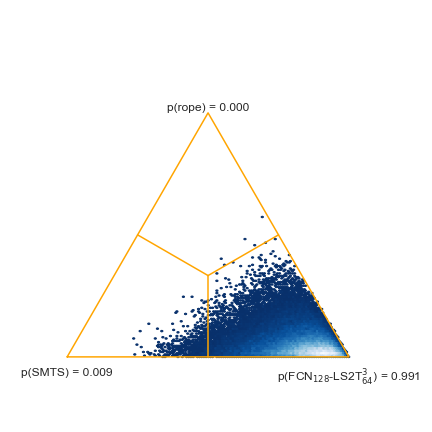}
		% \captionof{figure}{Caption for image}
		% \label{fig:sample_figure}
	\end{minipage}
	\begin{minipage}{0.245\textwidth}
		\centering
% 		\hspace{-30pt}
		\includegraphics[width=1.1in]{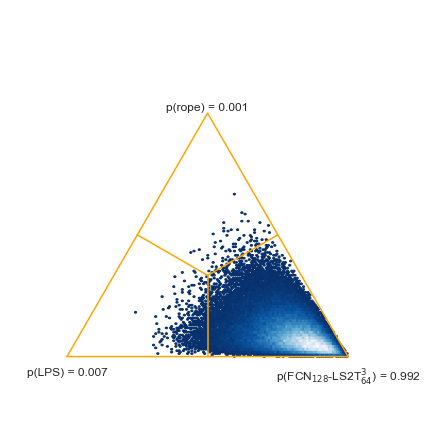}
		% \captionof{figure}{Caption for image}
		% \label{fig:sample_figure}
	\end{minipage}
	\begin{minipage}{0.245\textwidth}
		\centering
% 		\hspace{-30pt}
		\includegraphics[width=1.1in]{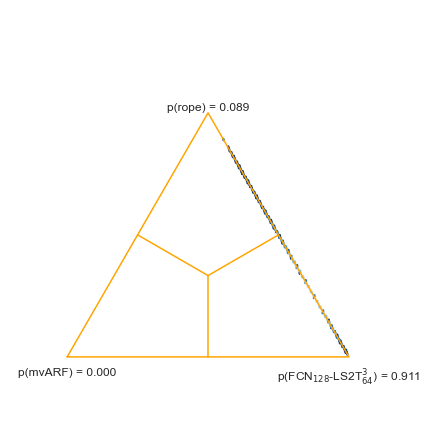}
		% \captionof{figure}{Caption for image}
		% \label{fig:sample_figure}
	\end{minipage}
	\begin{minipage}{0.245\textwidth}
		\centering
% 		\hspace{-30pt}
		\includegraphics[width=1.1in]{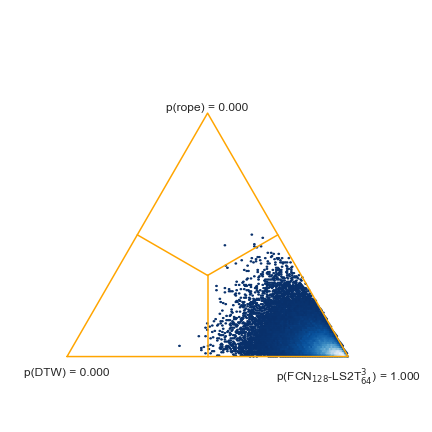}
		% \captionof{figure}{Caption for image}
		% \label{fig:sample_figure}
	\end{minipage}
	% \vspace{-10pt}
	\begin{minipage}{0.245\textwidth}
		\centering
% 		\hspace{-30pt}
		\includegraphics[width=1.1in]{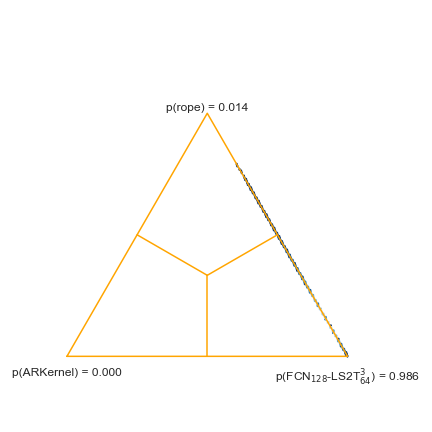}
		% \captionof{figure}{Caption for image}
		% \label{fig:sample_figure}
	\end{minipage}
	\begin{minipage}{0.245\textwidth}
		\centering
% 		\hspace{-30pt}
		\includegraphics[width=1.1in]{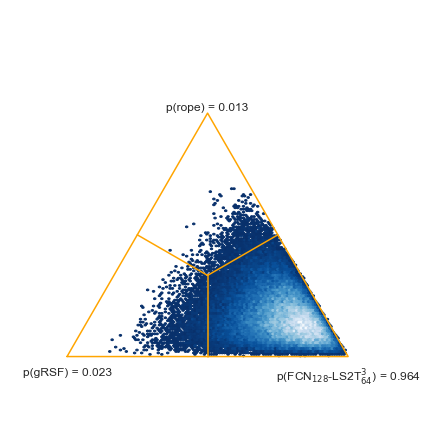}
		% \captionof{figure}{Caption for image}
		% \label{fig:sample_figure}
	\end{minipage}
	% \vspace{-10pt}
	\begin{minipage}{0.245\textwidth}
		\centering
		% 		\hspace{-30pt}
		\includegraphics[width=1.1in]{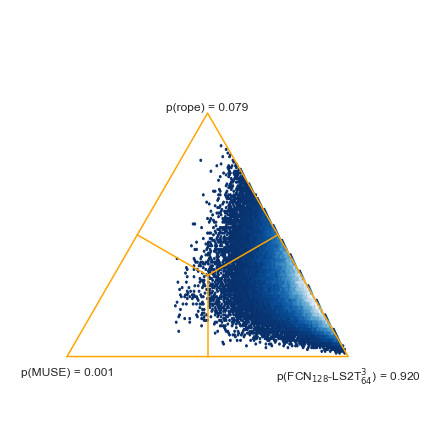}
		% \captionof{figure}{Caption for image}
		% \label{fig:sample_figure}
	\end{minipage}
	% \vspace{-10pt}
	\begin{minipage}{0.245\textwidth}
		\centering
% 		\hspace{-30pt}
		\includegraphics[width=1.1in]{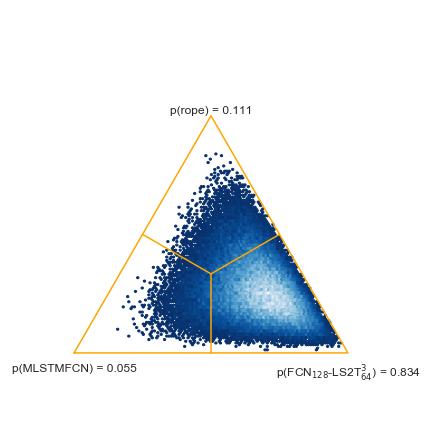}
		% \captionof{figure}{Caption for image}
		% \label{fig:sample_figure}
	\end{minipage}
	\begin{minipage}{0.245\textwidth}
		\centering
% 		\hspace{-30pt}
		\includegraphics[width=1.1in]{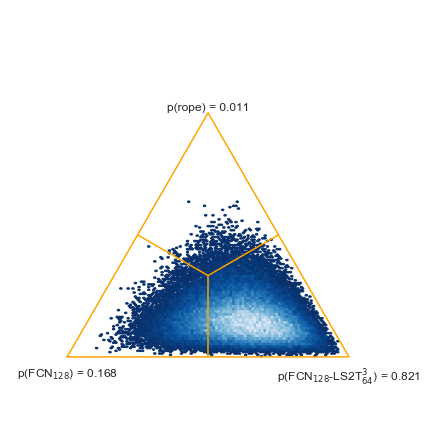}
		% \captionof{figure}{Caption for image}
		% \label{fig:sample_figure}
	\end{minipage}
	\begin{minipage}{0.245\textwidth}
		\centering
% 		\hspace{-30pt}
		\includegraphics[width=1.1in]{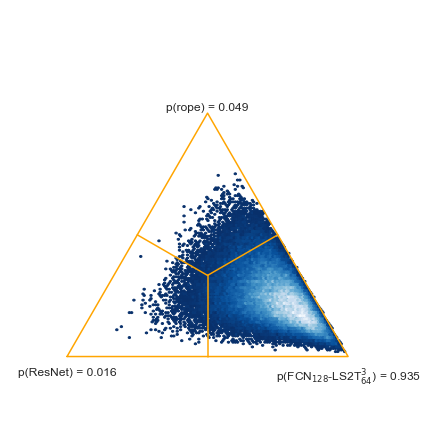}
		% \captionof{figure}{Caption for image}
		% \label{fig:sample_figure}
	\end{minipage}
	\caption{Posterior distribution plots of pairwise Bayesian signed-rank test comparisons}
	\label{fig:baycomp1}
% 	\vspace{-10pt}
\end{figure}

\clearpage

\subsection{Mortality prediction} \label{app:mortality}

\paragraph{Problem formulation.} Mortality prediction in healthcare is a form of binary classification using medical datasets.  This kind of data is very heterogenous with the input space being a combination of dynamic and static features, i.e.~ $\cX = \Seq{\R^d} \bigoplus \R^e$, and the class distributions often also being highly imbalanced. Also among the dynamic features there can often be missing values, in fact they are usually only observed very sparsely. Hence, it is not guaranteed that for a time series $\bx \in \Seq{\R^d}$, all coordinates of an observation $\bx_{i, t_j}$ are observed for a given time-point $t_j$. In other words, we are given for every $\bx_i \in \Seq{\R^d}$ an additional observation mask $\m_i = (\m_{i, t_j})_{j=1}^{L_i} \in \Seq{\{0, 1\}^d}$, that specifies whether a given coordinate of $\bx_i$ was observed at time $t_j$ or not.

\paragraph{Dataset.} We consider the {\sc Physionet2012} dataset for this task, that consists of medical time series of 12,000 ICU stays over at least 48 hours. Overall, 6 static features were recorded and potentially up to 37 TS values were measured both irregularly and asynchronously for each stay with certain dimensions completely missing in some cases. The task is to predict the in-hospital mortality of patients during their hospital stay. From an ML point of view, this is a difficult dataset due missing values, low signal-to-noise ratio, and imbalanced class distributions with a prevalence ratio of around $14 \%$. For comparability of results, we use the same train-val-test split as in \citet{horn2020set} available at \url{https://github.com/ExpectationMax/medical_ts_datasets}, where the 12,000 examples were split in a ratio of $64$-$16$-$20$. Additionally, examples not containing any TS information were excluded from the dataset, for a list of these see \citet[App.~A.1]{horn2020set}. 

\paragraph{Preprocessing.} For missing TS values, we use a three-step imputation method: \begin{enumerate*}[label=(\arabic*)] \item compute the mean value of each dynamic feature across the training dataset, \item if a TS value is missing at time $t=0$, impute it with the mean, \item for missing TS values at time $t > 0$, use forward imputation on a roll-forward basis \end{enumerate*}. We remark that it would have also been possible to use the GP-VAE imputation from the following experiment, but our aim was to keep the two experiments separate, in particular, to allow fair comparability with the results in \citet{horn2020set}. Furthermore, we make the information about missing values available to the model using the augmentation defined in \citet[eq.~9]{che2018recurrent}, which consists of adding as extra coordinates the observation mask and the number of time steps elapsed since an observation was made, both for each dynamic feature. The static features are handled by tiling along the time axis and adding them as extra coordintaes. Finally, all static and dynamic features are normalized to zero mean and unit variance using the training set statistics. 

\paragraph{Baselines.} As baselines, we use the experiments conducted in \citet{horn2020set}, that includes SOTA architectures for irregularly sampled data such as their {\sc SeFT-Attn}, {\sc GRU-D} \citep{che2018recurrent}, {\sc IP-Nets} \citep{shukla2019interpolation}, {\sc Phased-LSTM} \citep{neil2016phased}, {\sc Transformer} \citep{vaswani2017attention} and {\sc Latent-ODE} \citep{rubanova2016latent}. Together these methods make up a very strong baseline to compare against. However, the main question for us still is whether we can improve on the vanilla FCN model with the FCN-LS2T architecture (Figure \ref{fig:ls2t_architectures}), since our aim is simply to demonstrate that LS2T layers can serve as useful building blocks in deep learning models via their ability to capture non-local interactions in heterogeneous time series and sequences.

\paragraph{Hyperparameter selection.} We train two models on this task, FCN and FCN-LS2T. To keep the experiment fair, we align with the experimental setting in \citet{horn2020set} and follow their hyperparameter selection procedure using randomized search. For both models, we uniformly sample $20$ hyperparameter settings from the hyperparameter grid specified as follows: \begin{enumerate*}[label=(\arabic*)] \item for FCN-LS2T, the FCN width from $h \in \{64, 128, 256\}$. the LS2T width from $n \in \{64, 128, 256\}$, the LS2T order from $m \in \{2, 3, 4\}$, LS2T depth from $d \in \{1, 2, 3\}$ and whether to use the recursive or independent LS2T formulation (see Appendix \ref{app:recursive}); \item for FCN, the width from $h \in \{64, 128, 256\}$; \item for both models, we sample the dropout preceding the classification layer from $r_1 \in \{0.0, 0.1, 0.2, 0.3, 0.4\}$, the spatial dropout that follows all convolutional and LS2T layers from $r_2 \in \{0.0, 0.1, 0.2, 0.3, 0.4\}$. For training, we also sample for both models the batch size used from $b \in \{4, 8, 16, 32\}$ and the initial learning rate from $\alpha \in \{1 \times 10^{-3}, 5 \times 10^{-4}, 2.5 \times 10^{-4}, 1 \times 10^{-4}\}$. \end{enumerate*} For both architectures, we train a model for each of the 20 hyperparameter settings, and then select the setting which provides the best performance on the validation set. The best model is selected by computing a composite z-score on the validation set, which consists of computing a z-score across the realizations for each metric, that is, {\sc Accuracy}, {\sc AUPRC}, {\sc AUROC}, and then taking a sum of these z-scores.

\paragraph{Training details.} Last but not least, we specify the training methodology. Similarly to \citet{horn2020set}, rather than utilizing class weights during training to deal with unbalanced class distributions, we use a generator which feeds balanced batches to the model during each training iteration. This approach is more beneficial for small batch training on such heavily unbalanced datasets, such as the current one. Then, we define an epoch as the number of training iterations required to see all examples from the class with the lowest prevalence ratio. The maximum number of epochs is set to $200$, and we stop early after $50$ epochs of no improvement over the validation {\sc AUPRC}, after which the model is restored to the best parameter set according to this metric. We also employ a learning rate decay of $\beta = 1/2$ after 10 epochs of no improvement and only until the learning rate reaches $\alpha_{min} = 1 \times 10^{-4}$. Clearly, using the same validation set for early stopping and selecting the hyperparameters introduces a bias in the model selection, however, this is partially remedied by using for hyperparameter selection a composite of three metrics, rather than just the {\sc AUPRC}.

\paragraph{Evaluation.} After selecting the best hyperparameters, we independently train and evaluate on the test set each model $5$ times. The means and standard deviations of the resulting performance metrics over these $5$ model trains are what displayed in Table \ref{table:mort_pred}. The best found hyperparameter settings are the following: \begin{enumerate*}[label=(\arabic*)] \item for FCN-LS2T, FCN width $h = 64$, LS2T width $n=256$, LS2T order $m=3$, LS2T depth $d=3$, recursive formulation, dropout $r_1 = 0.3$, spatial dropout $r_2 = 0.4$, batch size $b=32$, inital learning rate $\alpha = 1 \times 10^{-4}$; \item for FCN, FCN width $h=256$, dropout $r_1 = 0.4$, spatial dropout $r_2 = 0.3$, batch size $b=4$, initial learning rate $\alpha = 1 \times 10^{-4}$. \end{enumerate*}

\subsection{Generative sequential data imputation} \label{app:imputation}

\paragraph{Problem formulation.} Imputation of sequential data can be formulated as a problem of generative unsupervised learning. The input space is given as $\cX = \Seq{\R^d}$ and we are given a number of examples $\bX = (\bx_i)_{i=1}^{n_\bX} \subset \Seq{\R^d}$ with $\bx_i = (\bx_{i, t_j})_{j=1}^{L_i}$. Similarly to before, there are missing values in the input sequences, that is, we are given for every $\bx_i \in \Seq{\R^d}$ an additional observation mask $\m_i = (\m_{i, t_j})_{j=1}^{L_i} \in \Seq{\{0, 1\}^d}$, that specifies whether a given coordinate of $\bx_i$ was observed at time $t_j$ or not. The task in this case is specifically to model the distribution of the unobserved coordinates given the observed coordinates potentially at different time-points.

\paragraph{Model details.} We expand on the GP-VAE \citep{fortuin2019gpvae} model in details. Let $\bx = (\bx_i)_{i=1,\dots,L} \in \Seq{\R^d}$ be a sequence of length $L \in \NN$. The model assumes that $\bx$ is noisily generated time-point-wise conditioned on discrete-time realizations of a latent process denoted by $\bz = (\bz_{i})_{i=1,\dots,L} \in \Seq{\R^{d^\prime}}$,
\begin{align}
p_\theta(\bx_i \vert \bz_i) = \cN(\bx_i \given g_\theta(\bz_i), \sigma^2 \mathbf{I}_d),
\end{align}
where $g_\theta: \R^{d^\prime} \rightarrow \R^d$ is the time-point-wise decoder network parametrized by $\theta$, while $\sigma^2 \in \R$ is the observation noise variance. The temporal interdependencies are modelled in the latent space by assigning independent Gaussian process (GP) priors \citep{williams2006gaussian} to the coordinate processes of $\bz$, i.e. denoting $\bz_i = (z_i^j)_{j=1,\dots,d^\prime} \in \R^{d^\prime}$, it is assumed that $z^j \sim \mathcal{GP}(m(\cdot), k(\cdot, \cdot))$, where $m: \R \rightarrow \R$ and $k: \R \times \R \rightarrow \R$ are the mean and covariance functions. The authors propose to use the Cauchy kernel as covariance function, defined as 
\begin{align}
    k(\tau, \tau^\prime) = \tilde\sigma^2 \left(1 + \frac{(\tau - \tau^\prime)^2}{l^2}\right)^{-1},
\end{align}
which can be seen as an infinite mixture of RBF kernels, allowing one to model temporal dynamics on multiple length scales. For the variational approximation \citep{blei2017variational, zhang2018advances}, an amortized Gaussian \citep{Gershman2014AmortizedII} is used that factorizes across the latent space dimensions, but not across the observation times:
\begin{align}
    q_\psi(\bz_1, \dots, \bz_L \given \bx_1, \dots \bx_L) &= q_\psi(z_1^1, \dots z_L^1 \given \bx_1, \dots, \bx_L) \cdots, q_\psi(z_1^{d^\prime}, \dots z_L^{d^\prime} \given \bx_1, \dots \bx_L) \\
    &= \cN(z_1^1, \dots, z_L^1 \given \mathbf{m}_1, \mathbf{A}_1) \cdots \cN(z_1^{d^\prime}, \dots, z_L^{d^\prime} \given \mathbf{m}_{d^\prime}, \mathbf{A}_{d^\prime}), 
\end{align}
where $\mathbf{m}_j \in \R^L$ are the posterior means and $\mathbf{A}_j \in \R^{L \times L}$ are the posterior covariance matrices for $j=1, \dots, d^\prime$. In general, estimating the full covariance matrices $\mathbf{A}_j \in \R^{L \times L}$ from a single data example $\bx = (\bx_1, \dots, \bx_L) \in \Seq{\R^d}$ is an ill-posed problem. To circumvent the curse of dimensionality in the matrix estimation while allowing for long-range correlations in time, a structured precision matrix representation is used, such that $\mathbf{A}_j^{-1} = \mathbf{B}_j \mathbf{B}_j^\top$ with $\mathbf{B}_j \in \R^{L \times L}$ a lower bidiagonal matrix, such as in \citet{dorta2018structured, blei2006dynamic, bamler2017dynamic}, which results in a tridiagonal precision matrix and a potentially dense covariance matrix.

Training across the whole dataset $\bX = (\bx_i)_{i=1}^{n_\bX} \subset \Seq{\R^d}$ is coupled through the decoder and encoder parameters $\theta$ and $\psi$, and the ELBO is computed as
\begin{align}
    \frac{1}{n_\bX} \sum_{i=1}^{n_\bX} \log p(\bx_i) \geq \frac{1}{n_\bX}\sum_{i=1}^{n_\bX} \Big( \sum_{j=1}^{L_i} &\EE_{q_\psi(\bz_{i, j} \given \bx_i)}[\log p_\theta(\bx_{i, j} \given \bz_{i, j})] \\
    &- \beta \KL{q_\psi(\bz_i \given \bx_i)}{p(\bz_{i})}) \Big), 
\end{align}
where the log-likelihood term is only computed across observed features as was done in \citet{nazabal2018handling}. Similarly to $\beta$-VAEs \citep{higgins2017beta}, $\beta$ is used to rebalance the KL term, now to account for the missingness rate. 

\paragraph{Baselines.} Additionally to the baseline GP-VAE, the reported baseline results are the same ones as in \citet{fortuin2019gpvae}, which are mean imputation, forward imputation, VAE \citep{kingma2013auto}, HI-VAE \citep{nazabal2018handling} and BRITS \citep{Cao2018brits}. Among these, the VAE based models are Bayesian and provide a probability measure on possible imputations, while the mean/forward imputation methods and the RNN based BRITS only provide a single imputation.

\paragraph{Datasets.}

\begin{wraptable}{r}{9.25cm}
    \vspace{-15pt}
	\caption{Specification of datasets used for imputation}
	\label{table:dataset_spec2}
	\vspace{-10pt}
    % \vskip 0.15in
    \begin{center}
    \begin{small}
	\begin{sc}
    \begin{tabular}{lcccccc}
    \toprule
    Dataset  & $n_c$ & $m$ & $d$ & $L_\bx$ & $n_\bX$ & $n_{\bX_\star}$ \\
    \midrule
        HMNIST & $10$ & $0.45$ & $28 \times 28$ & $10$ & $60000$ & $10000$ \\
        Sprites & - & $0.6$ & $64 \times 64 \times 3$ & $8$ & $9000$ & $2664$ \\
        Physionet & $2$ & $0.82$ & $35$ & $48$ & $3997$ & - \\
    \bottomrule
    \end{tabular}
	\end{sc}
    \end{small}
    \end{center}
    % \vskip -1.0in
    \vspace{-15pt}
\end{wraptable}

Table \ref{table:dataset_spec2} details the datasets used, which are the same ones as considered in \citet{fortuin2019gpvae}. The columns are defined as: $n_c \in \NN$ denotes the number of classes if the dataset is labelled, $m \in (0, 1)$ denotes ratio of missing data, $d \in \NN$ denotes the state space dimension of sequences, $L_\bx \in \NN$ denotes the sequence length, $n_\bX, n_{\bX_\star} \in \NN$ denote the number of examples in the respective training and testing sets. For Sprites no labels are available, while for Physionet all examples are in the training set and no ground truth values are available. For HMNIST, the MNAR version was used, the most difficult missingness mechanism \citep{fortuin2019gpvae}.

\begin{wrapfigure}{r}{0.45\textwidth}
	\centering
	\vspace{-5pt}
	%     \begin{subfigure}{0.74\textwidth}
	% 		\centering
	% % 		\vspace{-10pt}
	% 		\includegraphics[width=4.1in]{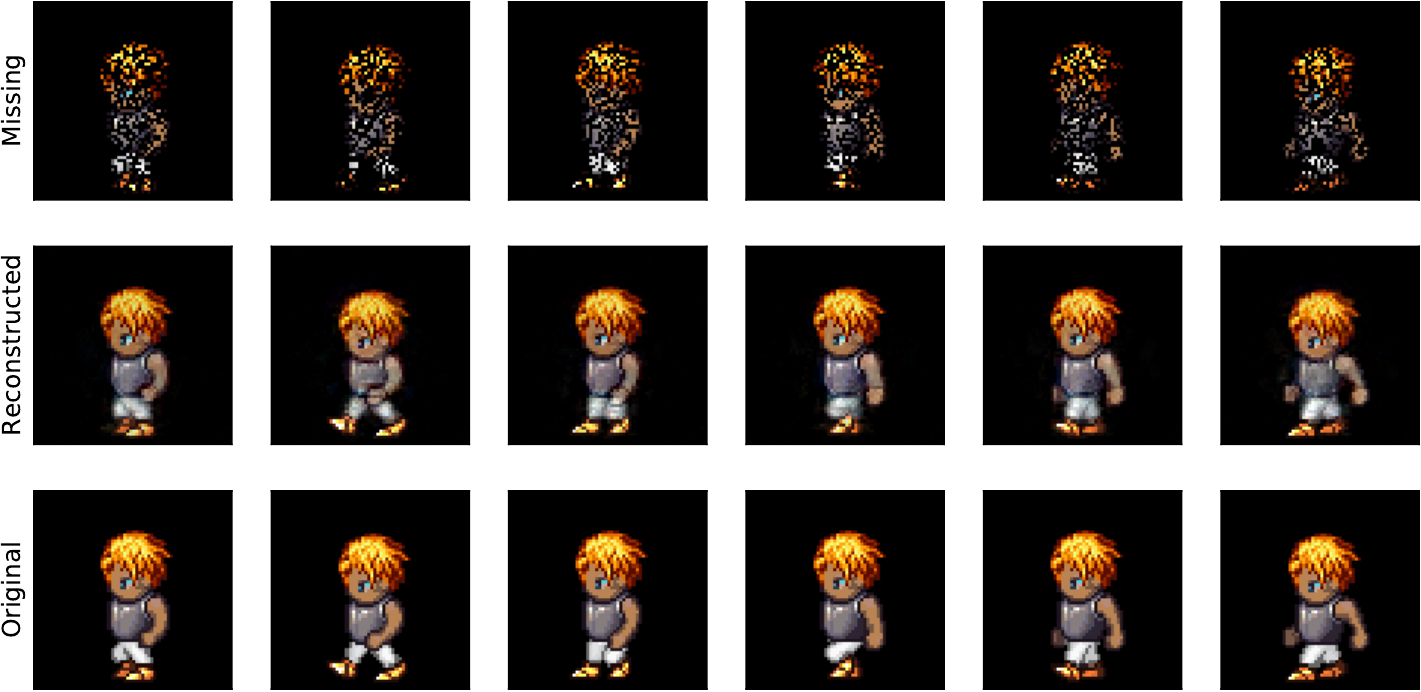}
	% 		% \captionof{figure}{Caption for image}
	% 		% \label{fig:sample_figure}
	% % 		\captionof{figure}{Reconstructions from the Sprites dataset with the missingnes  (top), reconstructed (middle) and original (bottom).}
	%     % \caption{Sprites}
	%     \end{subfigure}
		% \begin{subfigure}{0.25\textwidth}
		\vspace{-10pt}
		\hspace{-10pt}
		\centering
		% \vspace{-5pt}
		% \begin{center}
		% \hspace{-20pt}
		% \begin{turn}{270}
		\beginpgfgraphicnamed{gpvae-ls2t-encoder-plot}
		\begin{tikzpicture}[scale=0.6, every node/.style={scale=0.6}, shorten >=1pt,draw=black!50, node distance=\layersep]
			
		\node[seq, draw=purple, fill=lightpurple] at (0.0, 0.0) (inp) {\sc Input};
		% \draw[<->, color=black, thin] (-0.5, -1.75) -- (-0.5, 1.75);
		\node at (-0.5, 0.0) (l) {$\scriptstyle \ell$};
		% \draw[<->, color=black, thin] (-0.35, -1.9) -- (0.35, -1.9);
		\node at (0.0, 2.0) (l) {$\scriptstyle d$};
		
		\node[seq, draw=blue, fill=lightblue] at (1.0, 0.0) (preprocess) {\sc Preprocessor};
		\draw[->, color=black] (inp) -- (preprocess);
		\node at (1.0, 2.0) (l) {$\scriptstyle d$};
		
		\node[seq, draw=blue, fill=lightblue] at (2.0, 0.0) (conv) {\sc Convolution};
		\draw[->, color=black] (preprocess) -- (conv);
		\node at (2.0, 2.0) (l) {$\scriptstyle h$};
		
		\node[seq, draw=green, fill=lightgreen] at (3.0, 0.0) (time) {\sc Time + Diff};
		\draw[->, color=black] (conv) -- (time);
		\node at (3.0, 2.0) (l) {$\scriptstyle h+1$};
		
		\node[seq, draw=green, fill=lightgreen] at (3.7, 0.0) (ls2t_behind3) {};
		\node[seq, draw=green, fill=lightgreen] at (3.8, 0.0) (ls2t_behind3) {};
		\node[seq, draw=green, fill=lightgreen] at (3.9, 0.0) (ls2t_behind3) {};
		\node[seq, draw=green, fill=lightgreen] at (4.0, 0.0) (bls2t) {\sc B-LS2T};
		\draw[->, color=black] (time) -- ($(bls2t.north) + (0.0, 0.)$);
		\node at (4.0, 2.0) (l) {$\scriptstyle h \times 4$};
		
		\node[wideseq, draw=green, fill=lightgreen] at (5.125, 0.0) (wide) {\sc LN + Reshape};
		\draw[->, color=black] (bls2t) -- (wide);
		\node at (5.125, 2.0) (l) {$\scriptstyle 4h$};
		
		\node[seq, draw=blue, fill=lightblue] at (6.25, 0.0) (dense1) {\sc Dense};
		\draw[->, color=black] (wide) -- (dense1);
		\node at (6.125, 2.0) (l) {$\scriptstyle h^\prime$};
		
		\node[outer sep=0.5pt, inner sep=0pt] at (7.25, 0.0) (dots) {$\dots$};
		\draw[->, color=black] (dense1) -- (dots);
		
		\node[seq, draw=blue, fill=lightblue] at (8.25, 0.0) (dense2) {\sc Dense};
		% \draw[->, color=black] (dots) -- (dense2);
		\node at (8.25, 2.0) (l) {$\scriptstyle h^\prime$};
		\draw[->, color=black] (dots) -- (dense2);
		
		\node[seq, draw=yellow, fill=lightyellow] at (9.25, 0.0) (latent) {\sc Latent};
		\draw[->, color=black] (dense2) -- (latent);
		\node at (9.25, 2.0) (l) {$\scriptstyle 3d^\prime$};
		
		\end{tikzpicture}
		\endpgfgraphicnamed
		% \end{turn}
		% \hspace{20pt}
		% \end{center}
		% \vspace{-5pt}
		% \caption{Encoder}
		% \end{subfigure}
	\vspace{-5pt}
	\caption{Encoder in GP-VAE (B-LS2T).}
	\label{fig:gpvae_ls2t_encoder}
	\vspace{-10pt}
	\end{wrapfigure}

\paragraph{Experiment details.} As depicted in Figure \ref{fig:gpvae_ls2t_encoder}, the difference between the original GP-VAE model and ours is that is that we additionally employ a single bidirectional Seq2Tens block (B-LS2T) in the encoder network following the convolutional layer, but preceding the time-distributed dense layers. The motivation for this is that the original encoder only takes local sequential structure into account using the convolutional layer. Hence, it does not exploit global sequential information, which might limit the expressiveness of the encoder network. This limitation can lead to suboptimal inference, due to the fact that the encoder is not able to represent a rich enough subset of the variational family of distributions. This is called the amortization gap in the literature \citep{Cremer2018inference}.

We have thus hypothesized that by incorporating a bidirectional LS2T layer into the model that takes sequential structure into account not only locally, but globally, we can improve the expressiveness of the encoder network, that can in turn improve on the variational approximation. However, it should be noted that according to the findings of \citet{Cremer2018inference}, a larger encoder network can potentially result in the variational parameters being overfitted to the training data, and can degrade the generalization on unseen data examples. Therefore, the main question is whether increasing the expressiveness of the encoder will improve the quality of the variational approximation on both seen and unseen examples, or will it lead to overfitting to the seen examples?  

Another interesting question that we have not considered experimentally, but could lead to improvements is the following. The time-point-wise decoder function assumes that $d^\prime \in \NN$ is large enough, so that $\bz \in \Seq{\R^{d^\prime}}$ is able to fully represent $\bx \in \Seq{\R^d}$ in a time-point-wise manner including its dynamics. Although in theory the GP prior should be able to learn the temporal dynamics in the latent space, this might again only be possible for a large enough latent state size $d^\prime$. In practice, it could turn out to be more efficient to use some of the contextual information in the decoder network as well, either locally, using e.g. a CNN, or globally, using e.g. LS2T layers or RNNs/LSTMs.

\paragraph{Implementation.}
For the implementation of the GP-VAE, we used the same one as in \citet{fortuin2019gpvae}, which implements it using Keras and Tensorflow. The bidirectional LS2T layer used our own implementation based on the same frameworks. The hyperparameters of the models, which are depicted in Appendix A in \citet{fortuin2019gpvae}, were left unchanged. The only change we concocted is the B-LS2T layer in the encoder network as depicted in Figure \ref{fig:gpvae_ls2t_encoder}. The width of the B-LS2T layer was set to be the same as the convolutional layer, and $M=4$ tensor levels were used. The parametrization of the low-rank LS2T layer used the independent formulation as detailed in Appendix \ref{app:recursive}.

We also made a simple change to how the data is fed into the encoder. In the original model, the missing values were imputed with $0$, while we instead used the forward imputed values. This was necessary due to the difference operation preceding the B-LS2T layer in Figure \ref{fig:gpvae_ls2t_encoder}. With the zero imputation, the coordinates with missing values exhibited higher oscillations after differencing, while with forward imputation the missing values were more well-behaved. A simple way to see this is that, if there were no preprocessing and convolutional layers in Figure \ref{fig:gpvae_ls2t_encoder} preceding the difference block, then this step would be equivalent to imputing the missing values with zero after differencing.

% \newpage
\paragraph{Result details.} Table \ref{table:gp_vae_comparison} shows the achieved performance on the datasets with our upgraded model, GP-VAE (B-LS2T), compared against the original GP-VAE \citep{fortuin2019gpvae} and the baselines. The reported results are negative log-likelihood (NLL), mean squared error (MSE) and AUROC on HMNIST, while on Sprites the MSE is reported and on Physionet the AUROC score. As Sprites is unlabeled, downstream classification performance (AUROC) is undefined on this dataset, while Physionet does not have ground truth values for the missing entries, and reconstruction error (MSE,NLL) is not defined. The only missing entry is Sprites NLL, which was omitted to preserve space. We observe that increasing the expressiveness of the encoder did manage to improve the results on HMNIST and Physionet. The only case where no improvement was observable is Sprites, where the GP-VAE already achieved a very low MSE score of $MSE = 2 \times 10^{-3}$.

To gain some intuition whether the lack of improvement on Sprites was due to the GP-VAE's performance already being maxed out, or there was some other pathology in the model, we further investigated the Sprites dataset and found a bottleneck in both the original and enhanced GP-VAE models. Due to the high dimensionality of the state space of input sequences, $d = 12288$, the width of the first convolutional layer in the encoder network was set to $h = 32$ in order to keep the number of parameters in the layer manageable and be able to train the model with a batch size of $n = 64$, while all subsequent layers had a width of $h^\prime = 256$. Thus, to see if this was indeed an information bottleneck, we increased the convolution width to $h = 256$ and decreased the batch size to $n = 16$, with all other hyperparameters unchanged. Then, we trained using this modification both the baseline GP-VAE and our GP-VAE (B-LS2T) five times on the Sprites datasets. The achieved MSE scores were \begin{enumerate*}[label=(\roman*)] \item GP-VAE (base): $MSE = 1.4 \times 10^{-3} \pm  4.1 \times 10^{-5}$, \item GP-VAE (B-LS2T): $MSE = 1.3 \times 10^{-3} \pm 4.9 \times 10^{-5}$\end{enumerate*}. Therefore, the smaller convolutional layer was indeed causing an information bottleneck, and by increasing its width to be on par with the other layers in the encoder, we managed to improve the performance of both models. The improvement on the GP-VAE (B-LS2T) was larger, which can be explained by the observation that lifting the information bottleneck additionally allowed the benefits of the B-LS2T layer to kick in, as detailed previously.

To sum up, we have empirically validated the hypothesis that capturing global information in the encoder was indeed beneficial, and managed to improve on the results even on unseen examples in all cases. The experiment as a whole supports that our introduced LS2T layers can serve as useful building blocks in a wide range of models, not only discriminative, but also generative ones.

\begin{figure}[t]
	\centering
	% 		\vspace{-10pt}
	\includegraphics[width=5in]{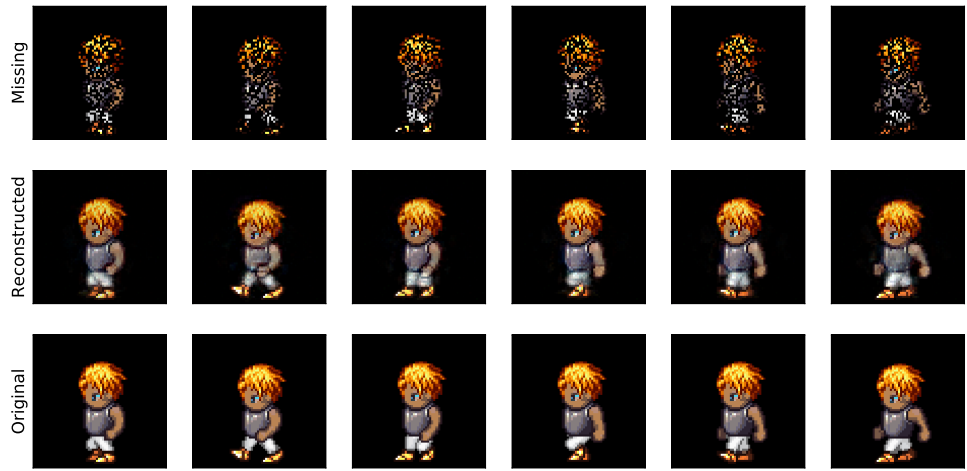}
	% \captionof{figure}{Caption for image}
	% \label{fig:sample_figure}
	\caption{Reconstructions from the Sprites dataset with the images with missingness (top), reconstructed (middle) and original (bottom).}
	% \caption{Sprites}
\end{figure}

\newpage

\subsubsection*{Acknowledgements}
CT is supported by the ``Mathematical Institute Award'' from the Mathematical Institute at the University of Oxford. PB is supported by the Engineering and Physical Sciences Research Council [EP/R513295/1].
HO is supported by the EPSRC grant ``Datasig'' [EP/S026347/1], the Alan Turing Institute, and the Oxford-Man Institute.

\end{document}